\begin{document}

\title{Separating Positive and Negative Data Examples by Concepts and
  Formulas: \\ The Case of Restricted Signatures}

\author{
 Jean Christoph Jung$^{1}$
 \and
 Carsten Lutz$^{1}$ \and
 Hadrien Pulcini$^2$
\and
Frank Wolter$^2$
%
}

\institute{
 $^1$University of Bremen, Germany $\qquad$
 $^2$University of Liverpool, UK
}

\maketitle

\begin{abstract}
  We study the separation of positive and negative data examples in
terms of description logic (DL) concepts and formulas of decidable FO
fragments, in the presence of an ontology. In contrast to previous
work, we add a signature that specifies a subset of the symbols from
the data and ontology that can be used for separation. We consider
weak and strong versions of the resulting problem that differ in how
the negative examples are treated. Our main results are that (a
projective form of) the weak version is decidable in \ALCI while it is
undecidable in the guarded fragment GF, the guarded negation fragment
GNF, and the DL $\mathcal{ALCFIO}$, and that strong separability is
decidable in \ALCI, GF, and GNF. We also provide (mostly tight)
complexity bounds.
\end{abstract}

\newcommand\blfootnote[1]{%
  \begingroup
  \renewcommand\thefootnote{}\footnote{#1}%
  \addtocounter{footnote}{-1}%
  \endgroup
}

\enlargethispage*{5mm}
\section{Introduction}


There are several applications that fall under the broad term of
supervised learning and seek to compute a logical expression that
separates positive from negative examples given in the form of labeled
data items in a knowledge base. A prominent example is concept
learning for description logics (DLs) where the aim is to support a
user in automatically constructing a concept description that can then
be used, for instance, in ontology engineering
\cite{DBLP:conf/ilp/BadeaN00,DBLP:journals/ml/LehmannH10,polconcept_learning,DBLP:journals/fuin/TranHHNN14,DBLP:conf/www/BuhmannLWB18,DBLP:conf/ekaw/Fanizzi0dE18,DBLP:conf/aaai/SarkerH19}. A
further example is reverse engineering of database queries (also
called query by example, QBE), which has a long history in database
research
\cite{DBLP:conf/sigmod/TranCP09,DBLP:journals/vldb/TranCP14,DBLP:conf/sigmod/ZhangEPS13,DBLP:conf/pods/WeissC17,kalashnikov2018fastqre,DBLP:journals/tods/ArenasD16,DBLP:conf/icdt/Barcelo017,DBLP:journals/tods/KimelfeldR18,martins2019reverse}
and which has also been studied in the presence of a DL ontology
\cite{GuJuSa-IJCAI18,DBLP:conf/gcai/Ortiz19}. Note that a closed world
semantics is adopted for QBE in databases while an open
world semantics is required when the data is assumed to be incomplete
as in the presence of ontologies, but also, for example, in reverse
engineering of SPARQL queries~\cite{DBLP:conf/www/ArenasDK16}. Another
example is entity
comparison in RDF graphs, where one aims to find meaningful
descriptions that separate one entity from
another~\cite{DBLP:conf/semweb/PetrovaSGH17,DBLP:conf/semweb/PetrovaKGH19}
and a
final example is generating referring expressions (GRE) where the aim
is to describe a single data item by a logical expression such as a DL
concept, separating it from all other data items. GRE has originated
in linguistics \cite{DBLP:journals/coling/KrahmerD12}, but has
recently received interest in DL-based ontology-mediated
querying \cite{DBLP:conf/kr/BorgidaTW16}.

A fundamental problem common to all these applications is to decide
whether a separating expression exists at all. There are several
degrees of freedom in defining this problem. One concerns the negative
examples: is it enough that they do not entail the separating formula
(\emph{weak separability}) or are they required to entail its negation
(\emph{strong separability})? Another one concerns the question
whether additional helper symbols are admitted in the separating
formula (\emph{projective separability}) or not (\emph{non-projective
  separability}). The emerging family of problems has recently been
investigated in \cite{DBLP:conf/ijcai/FunkJLPW19,KR}, concentrating on
the case where the separating expression is a DL concept or formulated
in a decidable fragment of first-order logic (FO) such as the guarded
fragment (GF) and the guarded negation fragment (GNF).

In this paper, we add a signature
$\Sigma$ 
that is given as an additional input and require separating
expressions to be formulated in~$\Sigma$ (in the non-projective
case). This makes it possible to `direct' separation towards
expressions based on desired features and to exclude features that are
not supposed to be used for separation such as gender and skin
color. In the projective case, helper symbols from outside of $\Sigma$
are also admitted, but must be `fresh' in that they cannot occur in
the given knowledge base. Argueably, such fresh symbols make the
constructed separating expressions less intuitive from an application
perspective and more difficult to understand. However, they sometimes
increase the separating power and they emerge naturally from a
technical
perspective.

The signature $\Sigma$ brings the separation problem closer to the
problem of deciding whether an ontology is a conservative extension of
another ontology \cite{JLMSW17}, also a form of separation,
and to deciding the existence of uniform interpolants
\cite{DBLP:conf/ijcai/LutzW11}.  It turns out, in fact, that lower
bounds for these problems can often be adapted to weak
separability with signature. In constrast, for strong separability we observe a
close connection to Craig interpolation.

We consider both weak and strong separability, generally assuming that
the ontology is formulated in the same logic that is used for
separation. We concentrate on combined complexity, that is, the input
to the decision problems consists of the knowledge base that comprises
an ABox and an ontology, the positive and negative examples in the
form of lists of individuals (for DLs) or lists of tuples of
individuals (for FO fragments that support more than one free
variable), and the signature. In the following, we summarize our main results.

We start with weak projective separability in \ALCI, present a
characterization in terms of $\Sigma$-homomorphisms that generalizes
characterizations from \cite{DBLP:conf/ijcai/FunkJLPW19,KR}, and then
give a decision procedure based on tree automata. This yields a
\TwoExpTime upper bound, and a matching lower bound is obtained by
reduction from conservative extensions. In contrast, weak projective
(and non-projective) separability in \ALCI without a signature is only \NExpTime-complete
\cite{DBLP:conf/ijcai/FunkJLPW19}. The non-projective case with
signature remains
open. We then show that weak separability is undecidable in any
fragment of FO that extends GF (such as GNF) or $\mathcal{ALCFIO}$ (such as the two-variable
fragment with
counting, $\text{C}^2$). In both cases,
the proof is by adaptation of undecidability proofs for conservative
extensions, from \cite{JLMSW17} and \cite{DBLP:conf/kr/GhilardiLW06}
respectively, and applies to both the projective and the
non-projective case. This should be contrasted with the fact that weak
separability is decidable and \TwoExpTime-complete for GF and for GNF
without a signature, both in the projective and in the non-projective
case~\cite{KR}. The decidability status of (any version of)
separability in $\mathcal{ALCFIO}$ without a signature is open. It is
known, however, that projective and non-projective weak separability
without a signature are undecidable in the two-variable fragment
FO$^2$ of FO~\cite{KR}.

We then turn to strong separability. Here, the projective and the
non-pro\-jective case coincide and will thus not be distinguished in
what follows. We again start with \ALCI for which we show
\TwoExpTime-completeness. The proofs, however, are rather different
than in the weak case. For the upper bound, we characterize
non-separability in terms of the existence of a set of types that are
amalgamable in the sense that they can be realized in a model of the
ontology at elements that are all $\ALCI(\Sigma)$-bisimilar, and that
satisfy certain additional properties. To identify sets of amalgamable
types, we use an approach that is loosely in the style of type
elimination procedures. A matching lower bound is proved by a
reduction from the word problem of exponentially space bounded ATMs.
We remark that in the strong case, the increase in complexity that
results from adding a signature is even more pronounced. In fact,
strong separability without a signature is only \ExpTime-complete in
\ALCI~\cite{KR}. We then turn to GF and GNF and establish a close link
between strong separability and interpolant existence, the problem to
decide for formulas $\varphi,\psi$ in a language $\Lmc$ whether there
exists a formula $\chi$ in $\Lmc$ using only the shared symbols of
$\varphi$ and $\psi$ such that both $\varphi\rightarrow \chi$ and
$\chi\rightarrow \psi$ are valid. We show that strong separability
with signature in GF and GNF are polynomial time reducible to
interpolant existence in GF and GNF, respectively. GNF enjoys the
Craig interpolation property (CIP), that is, there is such a
formula $\chi$ whenever
$\varphi\rightarrow\psi$ is valid. Thus, from the CIP
of GNF and the fact that validity in GNF is
\TwoExpTime-complete~\cite{DBLP:journals/jsyml/BaranyBC18}, we obtain
that strong separability with signature in GNF is
\TwoExpTime-complete. GF fails to have the CIP and
\ThreeExpTime-completeness for interpolant existence has only recently been
established~\cite{jung2020living}. We thus obtain a \ThreeExpTime upper bound for
strong separability with signature in GF. A matching lower bound can
be shown similar to the proof of \ThreeExpTime-hardness for interpolant
existence.
We note that strong separability without signature is \TwoExpTime-complete in both GNF and GF~\cite{KR}.

\section{Preliminaries}
  
Let $\Sigma_{\text{full}}$ be a set of \emph{relation symbols} 
that contains countably many symbols of every arity $n\geq 1$ and let
$\text{Const}$ be a countably infinite set of \emph{constants}.  A
\emph{signature} is a set of relation symbols $\Sigma \subseteq
\Sigma_{\text{full}}$.  We write $\vec{a}$ for a tuple
$(a_{1},\ldots,a_{n})$ of constants. A \emph{database} $\Dmc$ is a
finite set of \emph{ground atoms} $R(\vec{a})$, where $R\in
\Sigma_{\text{full}}$ has arity $n$ and $\vec{a}$ is a tuple of
constants from Const of length $n$. We use $\text{cons}(\Dmc)$ 
to denote the set of constants that occur in $\Dmc$.

Denote by FO the set of first-order (FO) formulas constructed from
constant-free atomic formulas $x=y$ and $R(\vec{x})$,
$R\in \Sigma_{\text{full}}$, 
using conjunction, disjunction, negation, and existential and
universal quantification.
As usual, we write $\vp(\vec{x})$ to indicate that the free variables
in FO-formula $\vp$ are all from $\vec{x}$ and call a formula
\emph{open} if it has at least one free variable and a \emph{sentence}
otherwise. Note that we do not admit constants in FO-formulas. 

An ontology \Omc is a finite set of FO-sentences, and a
\emph{knowledge base (KB)} is a pair $\Kmc=(\Omc,\Dmc)$ of an ontology
\Omc and a database \Dmc. As usual, KBs $\Kmc=(\Omc,\Dmc)$ are
interpreted in \emph{relational structures}
$ \Amf=(\text{dom}(\Amf),(R^{\Amf})_{R\in
  \Sigma_{\text{full}}},(c^{\Amf})_{c \in \text{Const}}) $
where $\text{dom}(\Amf)$ is the non-empty \emph{domain} of $\Amf$,
each $R^{\Amf}$ is a relation over $\text{dom}(\Amf)$ whose arity
matches that of $R$, and $c^{\Amf} \in \text{dom}(\Amf)$ for all $c\in
\text{Const}$. Note that we do not make the \emph{unique name
assumption (UNA)}, that is $c_1^\Amf=c_2^\Amf$ might hold even when
$c_1 \neq c_2$. This is in fact essential for several of our
results.
A structure $\Amf$ is a \emph{model of a KB} $\Kmc=(\Omc,\Dmc)$ if it
satisfies all sentences in $\Omc$ and all ground atoms in $\Dmc$. A KB
$\Kmc$ is \emph{satisfiable} if there exists a model of~$\Kmc$.

We introduce two fragments of FO, the guarded fragment and the
description logic \ALCI.  In the \emph{guarded fragment (GF)} of
FO~\cite{ANvB98,DBLP:journals/jsyml/Gradel99}, formulas are built from
atomic formulas $R(\vec{x})$ and $x=y$ by applying the Boolean
connectives and \emph{guarded quantifiers} of the form $$ \forall
\vec{y}(\alpha(\vec{x},\vec{y})\rightarrow \varphi(\vec{x},\vec{y}))
\text{ and } \exists \vec{y}(\alpha(\vec{x},\vec{y})\wedge
\varphi(\vec{x},\vec{y})) $$ where $\varphi(\vec{x},\vec{y})$ is a
guarded formula 
and $\alpha(\vec{x},\vec{y})$ is an atomic formula or an equality
$x=y$ that contains all variables in $[\vec{x}] \cup [\vec{y}]$. The
formula $\alpha$ is called the \emph{guard} of the quantifier. We say
that an ontology \Omc is a \emph{GF-ontology} if all formulas in \Omc
are from GF, and likewise for knowledge bases.

We next introduce the DL \ALCI. In this context, unary relation
symbols are called \emph{concept names} and binary relation
symbols are called
\emph{role names}~\cite{handbook,DL-Textbook}. 
A \emph{role} is a role name or an \emph{inverse role} $R^-$ with $R$
a role name. For uniformity, we set $(R^{-})^{-}=R$. 
\emph{$\mathcal{ALCI}$-concepts} are defined by the
grammar
$$
C,D ~::=~ A \mid 
\neg C \mid 
C \sqcap D \mid \exists R.C 
$$
where $A$ ranges over concept names and $R$ over roles.  As usual, we
write $\top$ to abbreviate $A \sqcup \neg A$ for some fixed concept
name $A$, $\top$ for $\neg \bot$, $C \sqcup D$ for $\neg (\neg C
\sqcap \neg D)$, $C \rightarrow D$ for $\neg C \sqcup D$, and $\forall
R . C$ for $\neg \exists R. \neg C$.
An $\mathcal{ALCI}$-\emph{concept inclusion (CI)} takes the form
$C\sqsubseteq D$ where $C$ and $D$ are $\mathcal{ALCI}$-concepts. An
$\mathcal{ALCI}$-\emph{ontology} is a finite set of
$\mathcal{ALCI}$-CIs. An \emph{$\ALCI$-KB} $\Kmc=(\Omc,\Dmc)$ consists
of an $\ALCI$-ontology $\Omc$ and a database \Dmc. Here and
in general in the context of \ALCI, we assume that databases use only unary
and binary relation symbols. We sometimes also mention the fragment
\ALC of \ALCI in which inverse roles are not
available. 

To obtain a semantics, every \ALCI-concept
$C$ can be translated into an FO-formula $C^\dag$ with one free variable $x$:
$$
\begin{array}{rcl}
  A^{\dag}  &=& A(x) \\
  (C \sqcap D)^\dag &=& C^\dag \sqcap D^\dag \\
 (\exists R.C)^\dag &=& \exists y \, (R(x,y) \land C^\dag[y/x]) \\
 (\exists R^-.C)^\dag &=& \exists y \, (R(y,x) \land C^\dag[y/x]). 
\end{array}
$$
%
%
The \emph{extension} $C^{\Amf}$ of a concept $C$ in a structure
$\Amf$ is \mbox{defined as}
$
C^{\Amf} = \{ a\in \text{dom}(\Amf) \mid \Amf\models C^{\dag}(a)\}.
$
A CI $C\sqsubseteq D$ is regarded as a shorthand for the
FO-sentence $\forall x \, (C^\dag(x) \to D^\dag(x))$. Thus, every
\ALCI-concept can be viewed as a GF-formula and every \ALCI-ontology
can be viewed as a GF-ontology. By economically reusing variables, we
can even obtain formulas and ontologies from $\text{GF} \cap \text{FO}^2$.
We write $\Omc\models C \sqsubseteq D$
to say that CI $C\sqsubseteq D$ is a consequence of ontology $\Omc$,
that is, $C^\Amf \subseteq D^\Amf$ holds in every model $\Amf$ of
$\Omc$.  Concepts $C$ and $D$ are \emph{equivalent} w.r.t.\ an
ontology \Omc if $\Omc \models C\sqsubseteq D$ and $\Omc \models
D\sqsubseteq C$. 

The \emph{Gaifman graph} $G_{\Amf}$ of a structure $\Amf$ is the
undirected graph with set
of vertices $\text{dom}(\Amf)$ and an edge $\{ d,e \}$ whenever there
exists $\vec{a}\in R^{\Amf}$ that contains $d,e$ for some relation $R$.
The \emph{distance} $\text{dist}_{\Amf}(a,b)$ between
$a,b\in \text{dom}(\Amf)$ is defined as the length of a shortest path
from $a$ to $b$, if such a path exists.  Otherwise
$\text{dist}_{\Amf}(a,b)=\infty$.  The \emph{maximal connected
  component (mcc)} $\Amf_{\text{con}(a)}$ of $a$ in $\Amf$ is the
substructure of $\Amf$ induced by the set of all $b$ such that
$\text{dist}_{\Amf}(a,b)<\infty$.

Let $\Amf$ be a structure such that $R^{\Amf}=\emptyset$ for any
relation symbol $R$ of arity $>2$. We say that $\Amf$ is
\emph{tree-shaped} if $G_{\Amf}$ is a tree without reflexive loops and
$R^{\Amf} \cap S^{\Amf} = \emptyset$ for all distinct roles $R,S$. We
say that $\Amf$ has \emph{finite outdegree} if $G_{\Amf}$ has finite
outdegree. A structure $\Amf$ is a \emph{forest structure} w.r.t.\ an
$\mathcal{ALCI}$-KB $\Kmc=(\Omc,\Dmc)$ if the undirected graph $(V,E)$
with
\begin{align*}
V & = \text{dom}(\Amf) \\
E & = \{ \{d,e\}\mid (d,e)\in R^\Amf\text{ for some $R$}\}\setminus
\{ \{d,e\}\mid d,e\in \text{cons}(\Dmc)\}
\end{align*}
is a tree. We drop `w.r.t.\ \Kmc' if \Kmc is clear from the context
and speak of a \emph{forest model} of \Kmc when \Amf is a model of
\Kmc.
%
 The following result is well known.
\begin{lemma}\label{lem:forestmodelcompleteness}
	Let $\Kmc$ be an $\mathcal{ALCI}$-KB and $C$ an $\mathcal{ALCI}$-concept.
	If $\Kmc\not\models C(a)$, then there exists a forest model $\Amf$ of $\Kmc$
	of finite outdegree with $a\not\in C^{\Amf}$.
\end{lemma}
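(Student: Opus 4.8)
The plan is to start from an arbitrary model that violates $C$ at $a$ and turn it into a forest model by a selective unraveling. Since $\Kmc \not\models C(a)$, there is a model $\Amf_0$ of $\Kmc$ with $a^{\Amf_0} \notin C^{\Amf_0}$. Fix the finite set $\mathsf{cl}$ consisting of all subconcepts of $C$ and of the concepts occurring in $\Omc$, closed under single negation; it contains $C$ and both sides of every CI in $\Omc$. I will build $\Amf$ so that every element has a well-defined \emph{tail} in $\text{dom}(\Amf_0)$ and so that membership in $\mathsf{cl}$-concepts is inherited from the tail.

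\emph{Construction.} The roots of $\Amf$ are the elements $c^{\Amf_0}$ for $c \in \text{cons}(\Dmc)$ (so constants with the same interpretation in $\Amf_0$ yield the same root, respecting the absence of the UNA), and I keep all relations among roots exactly as in $\Amf_0$. The remaining elements are finite paths $d_0 (R_1,d_1)\cdots(R_n,d_n)$ with $d_0$ a root and each $(d_{i},d_{i+1})\in R_{i+1}^{\Amf_0}$, added greedily by witness selection: for a node with tail $d$ and each $\exists R.E \in \mathsf{cl}$ with $d \in (\exists R.E)^{\Amf_0}$, pick one $R$-successor $e$ of $d$ in $\Amf_0$ with $e \in E^{\Amf_0}$ and add the child obtained by appending $(R,e)$. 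Put $\sigma \in A^\Amf$ iff $\text{tail}(\sigma) \in A^{\Amf_0}$, and make $\sigma$ and each of its children $R$-related in the direction prescribed by $R$ (so that if $R$ is an inverse role $S^-$ the child becomes an $S$-predecessor). Each non-root node has a unique parent, so after deleting the root--root edges the graph is a union of trees rooted at the $c^{\Amf_0}$, hence $\Amf$ is a forest structure w.r.t.\ $\Kmc$; and since at most $|\mathsf{cl}|$ children are ever added and $\text{cons}(\Dmc)$ is finite, $\Amf$ has finite (indeed bounded) outdegree.

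\emph{Verification.} The heart of the argument is the claim that for every node $\sigma$ with tail $d$ and every $D \in \mathsf{cl}$, one has $\sigma \in D^\Amf$ iff $d \in D^{\Amf_0}$, proved by induction on $D$. The Boolean cases are immediate from the definition and the closure of $\mathsf{cl}$ under negation. For $\exists R.E$, the ``$\Leftarrow$'' direction uses that a witness child was explicitly selected, and the ``$\Rightarrow$'' direction uses that every $R$-neighbour of $\sigma$ in $\Amf$---be it a selected child, the parent (contributing an $R$-edge precisely when $R$ inverts the role used to create $\sigma$), or a root neighbour of a root---arises from a genuine $R$-edge of $\Amf_0$; in both directions one applies the induction hypothesis to the neighbour's tail. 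Granting the claim: $\Amf \models \Omc$ because both sides of each CI lie in $\mathsf{cl}$, so each inclusion transfers from $\Amf_0$ pointwise; $\Amf \models \Dmc$ since roots and their mutual relations are copied from $\Amf_0$; and $a^\Amf \notin C^\Amf$ since $C \in \mathsf{cl}$ and $a^{\Amf_0} \notin C^{\Amf_0}$. Hence $\Amf$ is a forest model of $\Kmc$ of finite outdegree with $a \notin C^\Amf$.

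\emph{Main obstacle.} The only delicate point is the $\exists R.E$ step of the induction in the presence of inverse roles, specifically the ``$\Rightarrow$'' direction where the witnessing neighbour is the parent: one must track that the parent edge realizes exactly the inverse of the role that created $\sigma$, so that an existential satisfied ``upwards'' in $\Amf$ corresponds to one satisfied over a real $\Amf_0$-edge. Organizing this bookkeeping of role directions (forward versus inverse, child versus parent) is what makes the verification slightly more than routine; everything else is standard unraveling.
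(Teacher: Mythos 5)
Your proof is correct: the selective unraveling with witness selection for the existentials in the closure set, tail-based interpretation of concept names, and the induction transferring $\mathsf{cl}$-membership between a node and its tail (with the careful parent/inverse-role bookkeeping you flag) is exactly the standard argument for this fact, which the paper states as well known and does not prove. Your handling of the delicate points — root--root edges excluded from the tree condition, identified constants in the absence of the UNA, and the bounded outdegree via $|\mathsf{cl}|$ — matches what the paper implicitly relies on.
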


We close this section with introducing homomorphisms and bisimulations.
Let $\Sigma$ be a signature. A \emph{$\Sigma$-homomorphism} $h$ from a
structure $\Amf$ to a structure $\Bmf$ is a function
$h:\text{dom}(\Amf)
\rightarrow \text{dom}(\Bmf)$ such that $\vec{a}\in
R^{\Amf}$ implies $h(\vec{a})\in R^{\Bmf}$ for all relation symbols
$R\in \Sigma$ and tuples $\vec{a}$ and with $h(\vec{a})$ being defined
component wise in the expected way. 
Note that homomorphisms need not preserve 
constant symbols.  Every database $\Dmc$ gives rise to the finite
structure $\Amf_{\Dmc}$ with $\text{dom}(\Amf_\Dmc)=\text{cons}(\Dmc)$
and $\vec{a}\in R^{\Amf_{\Dmc}}$ iff $R(\vec{a})\in \Dmc$. A
$\Sigma$-homomorphism from database $\Dmc$ to structure $\Amf$ is a
$\Sigma$-homomorphism from $\Amf_{\Dmc}$ to $\Amf$.  A \emph{pointed structure}
takes the form $\Amf,\vec{a}$ with \Amf a structure and $\vec{a}$ a
tuple of elements of $\text{dom}(\Amf)$.  
%
A homomorphism
from pointed structure $\Amf,\vec{a}$ to pointed structure
$\Bmf,\vec{b}$ is a homomorphism $h$ from $\Amf$ to $\Bmf$ with
$h(\vec{a})=\vec{b}$.  We write
$\Amf,\vec{a} \rightarrow \Bmf,\vec{b}$ to indicate the existence of
such a homomorphism.

We introduce $\ALCI(\Sigma)$-bisimulations between structures $\Amf$ and $\Bmf$ that 
interpret relations of arity at most two. Let $\Sigma$ be a signature. A
relation $S \subseteq \text{dom}(\Amf) \times \text{dom}(\Bmf)$ is an
\emph{$\mathcal{ALCI}(\Sigma)$-bisimulation between $\Amf$ and $\Bmf$} if 
the following conditions hold:
\begin{enumerate}
	\item for all $(d,e)\in S$:
	$d \in A^{\Amf}$ iff $e\in A^{\Bmf}$; 
	\item if $(d,e)\in S$ and $(d,d')\in R^{\Amf}$, 
	then there is a $e'$ with $(e,e')\in R^{\Bmf}$ and
	$(d',e')\in S$;
	\item if $(d,e)\in S$ and $(e,e')\in R^{\Bmf}$, 
	then there is a $d'$ with $(d,d')\in R^{\Amf}$ and
	$(d',e')\in S$,
\end{enumerate}	
where $A$ ranges over all concept
names in $\Sigma$ and $R$ over all $\Sigma$-roles.
We write $\Amf,d
\sim_{\mathcal{ALCI},\Sigma}\Bmf,e$ and call $\Amf,d$ and $\Bmf,e$
\emph{$\mathcal{ALCI}(\Sigma)$-bisimilar} if there exists an
$\mathcal{ALCI}(\Sigma)$-bisimulation $S$ such that
$(d,e)\in S$. 

The next lemma explains why $\mathcal{ALCI}(\Sigma)$-bisimulations are
relevant~\cite{TBoxpaper,goranko20075}. 
We say that $\Amf,d$ and $\Bmf,e$ are \emph{$\ALCI(\Sigma)$-equivalent},
in symbols $\Amf,d\equiv_{\ALCI,\Sigma}\Bmf,e$ if $d\in C^{\Amf}$ iff
$e\in C^{\Bmf}$ for all $C\in \ALCI(\Sigma)$.

\begin{lemma}\label{lem:equivalence}
  Let $\Amf,d$ and $\Bmf,e$ be pointed structures
  of finite outdegree and $\Sigma$ a signature. 
  Then
  $$
  \Amf,d \equiv_{\mathcal{ALCI},\Sigma} \Bmf,e \text{ iff }
  \Amf,d \sim_{\mathcal{ALCI},\Sigma}\Bmf,e.
  $$
  For the ``if''-direction, the condition on the outdegree can be
  dropped.
\end{lemma}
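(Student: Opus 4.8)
The plan is to prove the two directions separately. The ``if''-direction (bisimilarity implies equivalence) I would handle by a routine structural induction that uses no assumption on the outdegree, which is exactly the strengthening claimed in the last sentence of the lemma. The ``only if''-direction (equivalence implies bisimilarity) is the Hennessy--Milner direction, and it is here that finite outdegree is essential.

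For the ``if''-direction, fix an $\mathcal{ALCI}(\Sigma)$-bisimulation $S$ with $(d,e)\in S$ and show, by induction on the structure of $C\in\ALCI(\Sigma)$, that $d'\in C^{\Amf}$ iff $e'\in C^{\Bmf}$ for every $(d',e')\in S$. The base case for concept names $A\in\Sigma$ is precisely condition~(1); the cases for $\neg$ and $\sqcap$ follow immediately from the induction hypothesis; and for $C=\exists R.D$ with $R$ a $\Sigma$-role I would use the forth condition~(2) to transport a witness $d''$ of $d'\in C^{\Amf}$ to some $e''$ with $(d'',e'')\in S$ and $(e',e'')\in R^{\Bmf}$, then apply the induction hypothesis to $D$, and symmetrically use the back condition~(3) for the converse. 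Since this argument never enumerates successors, no outdegree assumption enters, establishing the ``if''-direction in full generality.

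For the ``only if''-direction, I would define
$$S=\{(d',e')\mid \Amf,d'\equiv_{\ALCI,\Sigma}\Bmf,e'\}$$
and verify that $S$ is an $\mathcal{ALCI}(\Sigma)$-bisimulation; it contains $(d,e)$ by the hypothesis $\Amf,d\equiv_{\ALCI,\Sigma}\Bmf,e$. Condition~(1) holds because concept names in $\Sigma$ are themselves $\ALCI(\Sigma)$-concepts. For the forth condition~(2), assume $(d',e')\in S$ and $(d',d'')\in R^{\Amf}$ for a $\Sigma$-role $R$, and suppose toward a contradiction that no $R$-successor of $e'$ in $\Bmf$ is $\equiv_{\ALCI,\Sigma}$-equivalent to $d''$. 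Since $\Bmf$ has finite outdegree, $e'$ has only finitely many $R$-successors $e_1,\dots,e_n$, and for each $e_i$ some concept distinguishes it from $d''$; negating where necessary, we may take $C_i\in\ALCI(\Sigma)$ with $d''\in C_i^{\Amf}$ and $e_i\notin C_i^{\Bmf}$. Then $d'\in(\exists R.(C_1\sqcap\cdots\sqcap C_n))^{\Amf}$ while $e'\notin(\exists R.(C_1\sqcap\cdots\sqcap C_n))^{\Bmf}$, contradicting $(d',e')\in S$ (for $n=0$ the witness is $\exists R.\top$). The back condition~(3) is symmetric and uses the finite outdegree of $\Amf$.

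The main obstacle is the ``only if''-direction, specifically the step that collapses the several distinguishing concepts $C_i$ into the single concept $\exists R.(C_1\sqcap\cdots\sqcap C_n)$: forming this finite conjunction is exactly what finite outdegree buys, and it is the reason the equivalence can fail when infinitely many successors are present. Everything else---the induction in the ``if''-direction and the verification of condition~(1) and of the symmetric back condition---is routine.
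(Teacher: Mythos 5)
Your proof is correct: the ``if''-direction by structural induction (with no outdegree assumption) and the ``only if''-direction via the Hennessy--Milner argument, using finite outdegree to form the finite conjunction $\exists R.(C_1\sqcap\cdots\sqcap C_n)$, is exactly the standard proof of this result. The paper itself gives no proof but cites the lemma as well known, and the argument in the cited literature is precisely the one you reconstruct, including the correct attribution of which structure's outdegree is needed for which bisimulation condition.
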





For any syntactic object $O$ such as a formula, an ontology, and a KB, we use
$\text{sig}(O)$ to denote the set of relation symbols that occur in $O$
and $||O||$ to denote the \emph{size} of $O$, that is, the number of
symbols needed to write it with names of relations, variables, and
constants counting as a single symbol.

\section{Weak Separability With Signature}

\newcommand{\mLOKB}{labeled $\Lmc$-KB}

We start with introducing the problem of (weak) separability with
signature, in its
projective and non-projective version. 
%
%
\newcommand{\LmcO}{\Lmc}
\begin{definition}
	\label{def:separa}
	Let $\LmcO$ be a fragment of FO. 
	A \emph{\mLOKB} takes the form
	$(\Kmc,P,N)$ with $\Kmc=(\Omc,\Dmc)$ an $\LmcO$-KB and
	$P,N\subseteq \text{cons}(\Dmc)^{n}$ non-empty sets of \emph{positive}
	and \emph{negative examples}, all of them tuples of the same length
	$n$. 
	
	%
	Let $\Sigma \subseteq \text{sig}(\Kmc)$ be a signature. 
	An FO$(\Sigma)$-formula $\varphi(\vec{x})$ with $n$ free variables
	\emph{$\Sigma$-separates $(\Kmc,P,N)$} if $\text{sig}(\Kmc)\cap \text{sig}(\varphi)\subseteq \Sigma$ and 
	\begin{enumerate}
		
		\item 
		$\Kmc\models \varphi(\vec{a})$ for all $\vec{a}\in P$ and 
		
		\item $\Kmc\not\models \varphi(\vec{a})$ for all $\vec{a}\in N$.
		
	\end{enumerate}
	%
	%
	Let $\Lmc_S$ be a fragment of FO. We say that $(\Kmc,P,N)$ is
        \emph{projectively $\Lmc_S(\Sigma)$-separable} if there is an
        $\Lmc_S$-formula $\varphi(\vec{x})$ that
        $\Sigma$-separates $(\Kmc,P,N)$ and \emph{(non-projectively)
          $\Lmc_S(\Sigma)$-separable} if there is such a
        $\varphi(\vec{x})$ with $\text{sig}(\varphi) \subseteq \Sigma$.
	%
\end{definition}
Relation symbols in $\Sigma$-separating formulas that are not from
$\Sigma$ should be thought of as helper symbols.  
Their availability sometimes makes inseparable KBs separable,
examples are provided below.
We only consider FO-fragments $\Lmc_{S}$ that are closed under
conjunction.  In this case, a labeled KB $(\Kmc,P,N)$ is
$\Lmc_{S}(\Sigma)$-separable if and only if all
$(\Kmc,P,\{\vec{b}\})$, $\vec{b}\in N$, are
$\Lmc_{S}(\Sigma)$-separable, and likewise for projective
$\Lmc_{S}(\Sigma)$-separability, see \cite{KR}. 
In
 what follows, we thus mostly consider labeled KBs with singleton sets
 $N$ of negative examples.

%
 Each choice of an ontology language $\Lmc$ and a separation language
 $\Lmc_{S}$ give rise to a projective and to a non-projective
 separability problem. In the current paper, we only consider cases
 where $\Lmc=\Lmc_S$ (see below for a discusssion).
\begin{center}
	\fbox{\begin{tabular}{@{\;}ll@{\;}}
			\small{PROBLEM} : & (Projective)
                        $\Lmc$-separability with signature\\
			{\small INPUT} : &  A \mLOKB\xspace $(\Kmc,P,N)$ and signature $\Sigma\subseteq \text{sig}(\Kmc)$ \\
			{\small QUESTION} : & Is $(\Kmc,P,N)$ (projectively) 
			$\Lmc(\Sigma)$-separable?  \end{tabular}}
\end{center}
\smallskip
\noindent
We study the \emph{combined complexity} of \Lmc-separability with 
signature where the ontology \Omc, database \Dmc (both in \Kmc), and 
sets of examples $P$ and $N$ are all taken to be part of the 
input. One can also study \emph{data complexity} where only $\Dmc$, 
$P$, and $N$ are regarded as inputs while $\Omc$ is assumed to be 
fixed \cite{KR}.  A special case of $\Lmc$-separability with 
signature is $\Lmc$-\emph{definability} with signature where $P$ and 
$N$ partition the example space, that is, inputs are {\mLOKB}s 
$(\Kmc,P,N)$ such that $N=\text{cons}(\Dmc)^n \setminus P$, $n$ the 
length of example tuples. 
All our results also hold for definability. 

\medskip

We now give some examples illustrating the central notions used
in this paper. 
In~\cite{KR}, projective and non-projective separability are
studied without signature restrictions. Thus, all symbols used in 
the KB can appear in separating formulas. Rather surprizingly,
it turned out that in this case many different separation languages have exactly the same separating power. For example,
a labeled FO-KB turned out to be FO-separable iff it is UCQ-separable\footnote{We denote by UCQ the set of FO-formulas that
	are disjunctions of formulas constructed from atoms using conjunction and existential quantification.} 
(and projective and non-projective separability coincide) and
a labeled $\mathcal{ALCI}$-KB is projectively $\mathcal{ALCI}$-separable iff it is (non-)projectively FO-separable.
No such result can be expected for separability with signature restrictions, as illustrated by the following example. 
\begin{example}
	Let $\Omc = \{A \sqsubseteq \exists R.B \sqcap \exists R.\neg B\}$ and $\Dmc= \{A(a),R(b,c)\}$. Let $P=\{a\}$, $N=\{b\}$,
	and $\Sigma=\{R\}$. Clearly, the formula
	$$
	\exists y \exists y' (R(x,y) \wedge R(x,y') \wedge \neg (y=y'))
	$$
	$\Sigma$-separates $(\Omc,\Dmc,P,N)$, but $(\Omc,\Dmc,P,N)$ is neither UCQ$(\Sigma)$-separable nor $\mathcal{ALCI}(\Sigma)$-separable.
\end{example}
However, the ability to restrict separating formulas to a given
signature makes it possible to guide separation towards desired
aspects.
\begin{example}
  Consider a KB about books that uses, say, the concept and role names
  provided by schema.org (called types and properties
  there). Schema.org offers dozens of such names related to books,
  ranging from editor, author, and illustrator to genre, date
  published, and character. Assume that a few books have been labeled
  as \emph{likes} (added to $P$) and \emph{dislikes} (added to $N$)
  and one would like to find a formula $\vp$ that separate $P$ from
  $N$. Then it might be useful to restrict the signature of $\varphi$
  so as to concentrate on the aspects of books that one is most
  interested in. For example, one could select a signature that
  contains symbols related to genre such as graphic novel, adventure,
  classic, and drop all remaining symbols. If no separating formula
  exists, one can then iteratively extend the signature until
  separating formulas are found.  We refer the reader to research on
  modules and modularity in ontologies, where signatures are also used
  to capture the topic of a
  module~\cite{HKGrauS08,KonevLWW09,DBLP:journals/ai/BotoevaKRWZ16,DBLP:journals/ai/BotoevaLRWZ19}.
  If one is not sure which aspects are most relevant for separation,
  one might of course also decide to work with a large signature. But
  also in such a case, it might be useful to exclude certain
  undersired symbols such as the author's age and gender.
\end{example}
The helper symbols that distinguish the projective from the
non-projective case play a completely different role from the
symbols in the signature $\Sigma$ selected for separation,
as discussed next.

\begin{example}\label{exp:gg}
Consider a database $\Dmc$ in which an individual $a$ is part of an $R$-cycle and $b$ has both an $R$-reflexive successor and predecessor. Thus,
$$
\Dmc= \{R(a_{0},a_{1}), \ldots, R(a_{n-1},a_{n}),
R(b,b_{1}),R(b_{1},b_{1}),R(b_{2},b),R(b_{2},b_{2})\},
$$
where $a_{0}=a_{n}=a$ and $n>0$. 
Let $\Omc$ be either empty or any $\ALCI$-ontology such that
$\Kmc=(\Omc,\Dmc)$ is satisfiable and $\top \sqsubseteq \exists R.\top
\sqcap \exists R^{-}.\top\in \Omc$. Further let $P=\{a\}$,
$N=\{b\}$, and $\Sigma=\{R\}$. If no helper symbols are allowed,
then $(\Kmc,P,N)$ is not $\mathcal{ALCI}(\Sigma)$-separable
(we show this in Example~\ref{exmp:tt} below). If, however, a helper symbol $A$ is allowed, then $\neg A \sqcup \exists R^{n}.A$ is a separating $\mathcal{ALCI}$-concept. Thus,
$(\Kmc,P,N)$ is projectively $\mathcal{ALCI}(\Sigma)$-separable but not non-projectively $\mathcal{ALCI}(\Sigma)$-separable.
The use of a cycle in this example is no accident; we show 
in the next section that if there are no cycles in the database, then helper symbols do not add any separating power to $\mathcal{ALCI}$-concepts.
\end{example}


\section{Weak  Separability in \ALCI}\label{sec:alciweak}

We give a model-theoretic characterization of projective weak
$\mathcal{ALCI}$-separability with signature and use it to prove
decidability in \TwoExpTime. A matching lower bound is obtained by
reduction from conservative extensions. We also use the
characterization to discuss the relationship between non-projective
and projective separability, showing, in particular, that on
tree-shaped databases the two notions coincide.

We start with a model-theoretic characterization of projective
$\ALCI(\Sigma)$-separability of labeled KBs.
As \ALCI-concepts talk about individual elements and not 
tuples, we assume in this section that examples in labeled KBs are 
constants from the database. 
In fact, we give three characterizations that are more and more refined. We use the second one to clarify the relationship between 
projective and non-projective separability and the 
third characterization for the decision procedure. The first
characterization directly reflects that we are considering projective
separability with signature as it is based on
$\ALCI(\Sigma')$-bisimulations where $\Sigma'$ is a signature
satisfying $\Sigma' \cap \text{sig}(\Kmc)\subseteq \Sigma$. The second
characterization replaces $\ALCI(\Sigma')$-bisimulations by functional
$\ALCI(\Sigma)$-bisimulations. The final characterization replaces
the functional bisimulations from the second characterization by a combination of
$\Sigma$-homomorphisms and $\ALCI(\Sigma)$-bisimulations.  We first
introduce some required notation.

An \emph{extended database} is a database that additionally may
contain `atoms' of the form $C(a)$ with $C$ an \ALCI-concept. The
semantics of extended databases is defined in the expected way.  We
write $\Amf,a \sim_{\ALCI,\Sigma}^{f} \Bmf,b$ if there exists an
$\ALCI(\Sigma)$-bisimulation $S$ between $\Amf$ and $\Bmf$ that
contains $(a,b)$ and is functional, that is, $(d,d_{1}),(d,d_{2})\in
S$ imply $d_{1}=d_{2}$. Let $\text{sub}(\Kmc)$ denote the set of
concepts that occur in $\Kmc$, closed under single negation and under
subconcepts. The \emph{$\Kmc$-type realized in a pointed structure
  $\Amf,a$} is defined as
$$
\text{tp}_{\Kmc}(\Amf,a) = \{ C\in \text{sub}(\Kmc) \mid a\in C^{\Amf}\}.
$$
A \emph{$\Kmc$-type} is any set $t \subseteq \text{sub}(\Kmc)$ of the
form $\text{tp}_{\Kmc}(\Amf,a)$.  For a pointed database $\Dmc,a$, we
write $\Dmc_{\text{con}(a)},a \rightarrow^{\Sigma}_{c} \Amf,b$ if
there is a $\Sigma$-homomorphism $h$ from the maximal connected
component $\Dmc_{\text{con}(a)}$ of $a$ in $\Dmc$ to $\Amf$ such that
$h(a)=b$ and there is a $\Kmc$-type $t_{d}$ for each $d\in
\text{cons}(\Dmc_{\text{con}(a)})$ such that:
\begin{enumerate}
	\item there exists a model $\Bmf_{d}$ of $\Omc$ with $\text{tp}_{\Kmc}(\Bmf_{d},d) = t_{d}$ and $\Bmf_{d},d\sim_{\ALCI,\Sigma} \Amf,h(d)$;
	\item $(\Omc,\Dmc')$ is satisfiable, for the extended database
          $\Dmc'=\Dmc\cup \{ C(d) \mid C \in t_d, \ d\in \text{cons}(\Dmc_{\text{con}(a)})\}$.
\end{enumerate}
\begin{theorem}\label{thm:alci-refinement}
  Let $(\Kmc,P,\{b\})$ be a labeled $\ALCI$-KB with $\Kmc=(\Omc,\Dmc)$
  and $\Sigma\subseteq \text{sig}(\Kmc)$.  Then the following
  conditions are equivalent:
  \begin{enumerate}

    \item $(\Kmc,P,\{b\})$ is projectively
      $\mathcal{ALCI}(\Sigma)$-separable.

    \item there exists a forest model $\Amf$ of $\Kmc$ of finite
      outdegree and a signature $\Sigma'$ such that  $\Sigma' \cap
      \text{sig}(\Kmc)\subseteq \Sigma$ and for all models $\Bmf$ of
      $\Kmc$ and all $a\in P$: $\Bmf,a^{\Bmf}
      \not\sim_{\ALCI,\Sigma'} \Amf, b^{\Amf}$.

    \item there exists a forest model $\Amf$ of $\Kmc$ of finite
      outdegree such that for all models $\Bmf$ of $\Kmc$ and all
      $a\in P$: $\Bmf,a^{\Bmf} \not\sim_{\ALCI,\Sigma}^{f} \Amf,
      b^{\Amf}$. 

    \item there exists a forest model $\Amf$ of $\Kmc$ of finite
      outdegree such that for all $a\in P$: $\Dmc_{\text{con}(a)},a
      \not\rightarrow^{\Sigma}_{c} \Amf,b^{\Amf}$.

  \end{enumerate}  
\end{theorem}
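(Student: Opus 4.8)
The plan is to prove the cycle of implications $(1)\Rightarrow(2)\Rightarrow(3)\Rightarrow(4)\Rightarrow(1)$, which suffices for the four-way equivalence; each step but the last only reshapes the model-theoretic obstruction, while the last turns it into a concept. Throughout I use Lemma~\ref{lem:forestmodelcompleteness} to pass from a non-entailment to a witnessing forest model of finite outdegree, and Lemma~\ref{lem:equivalence} to move between $\ALCI$-concepts and $\ALCI(\Sigma)$-bisimulations. I may assume without loss of generality that all helper symbols, i.e.\ symbols outside $\Sigma$, are concept names: a fresh helper \emph{role} cannot contribute to an entailment $\Kmc\models C(a)$ because the model interpreting it as empty is always available. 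For $(1)\Rightarrow(2)$, take a separating concept $C$ and set $\Sigma'=\text{sig}(C)$, so that $\Sigma'\setminus\text{sig}(\Kmc)$ consists of concept names and $\Sigma'\cap\text{sig}(\Kmc)\subseteq\Sigma$. Lemma~\ref{lem:forestmodelcompleteness} gives a forest model $\Amf$ of finite outdegree with $b^{\Amf}\notin C^{\Amf}$, while $a^{\Bmf}\in C^{\Bmf}$ in every model $\Bmf$ for every $a\in P$; since $C\in\ALCI(\Sigma')$, the `if'-direction of Lemma~\ref{lem:equivalence} (no outdegree assumption needed) rules out $\Bmf,a^{\Bmf}\sim_{\ALCI,\Sigma'}\Amf,b^{\Amf}$.

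For $(2)\Rightarrow(3)$ I keep the same $\Amf$ and argue by contraposition. Assume a \emph{functional} $\Sigma$-bisimulation $S$ witnesses $\Bmf,a^{\Bmf}\sim_{\ALCI,\Sigma}^{f}\Amf,b^{\Amf}$ for some model $\Bmf$ of $\Kmc$ and some $a\in P$. Functionality means $S$ is a partial map $\text{dom}(\Bmf)\to\text{dom}(\Amf)$, so I can transport the (concept-name) colours of the helper symbols $\Sigma'\setminus\Sigma$ from $\Amf$ back along $S$, putting $e\in A^{\Bmf'}$ iff $S(e)\in A^{\Amf}$. As these symbols do not occur in $\Kmc$, the expansion $\Bmf'$ is again a model of $\Kmc$, and by construction $S$ now respects all of $\Sigma'$, i.e.\ $\Bmf',a^{\Bmf}\sim_{\ALCI,\Sigma'}\Amf,b^{\Amf}$ --- contradicting $(2)$. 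That the transport is well defined is exactly where functionality is used, and this is the formal sense in which projective helper symbols amount to functional bisimulations.

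The step $(3)\Rightarrow(4)$ is again contrapositive and uses the forest shape to recompose a functional bisimulation. Suppose $\Dmc_{\text{con}(a)},a\rightarrow^{\Sigma}_{c}\Amf,b^{\Amf}$ via a $\Sigma$-homomorphism $h$ and types $(t_d)_{d}$. Condition~(2) of $\rightarrow^{\Sigma}_{c}$ supplies a model of $\Kmc$ realising all $t_d$ on the connected core $\Dmc_{\text{con}(a)}$, and condition~(1) supplies, for each core constant $d$, a model $\Bmf_d$ of $\Omc$ with $\text{tp}_{\Kmc}(\Bmf_d,d)=t_d$ and $\Bmf_d,d\sim_{\ALCI,\Sigma}\Amf,h(d)$. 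Since $t_d$ fixes the local $\Omc$-requirements at $d$, I graft the tree parts of the $\Bmf_d$ onto the core model to obtain a forest model $\Bmf$ of $\Kmc$; mapping the core by $h$ (whose homomorphism property is the `forth' clause of functionality) and each grafted tree by its bisimulation to $\Amf,h(d)$ --- made functional by first unravelling the trees, which is harmless against the finite-outdegree $\Amf$ --- yields $\Bmf,a^{\Bmf}\sim_{\ALCI,\Sigma}^{f}\Amf,b^{\Amf}$, contradicting $(3)$.

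The implication $(4)\Rightarrow(1)$ is the hard part and the main obstacle. From $\Amf$ with $\Dmc_{\text{con}(a)},a\not\rightarrow^{\Sigma}_{c}\Amf,b^{\Amf}$ for all $a\in P$ I must synthesise a separating concept, and I aim for $\bigsqcup_{a\in P}C_a$, where $C_a$ marks the $h$-images of the constants of $\Dmc_{\text{con}(a)}$ by fresh helper concept names and asserts the existence of such a marked $\Sigma$-homomorphic image of $\Dmc_{\text{con}(a)}$ rooted at the current point. Interpreting each mark as the corresponding constant shows $\Kmc\models C_a(a)$, and $b^{\Amf}\notin C_a^{\Amf}$ should hold exactly because $\rightarrow^{\Sigma}_{c}$ fails at $b^{\Amf}$. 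Two difficulties concentrate here. First, the signature constraint $\text{sig}(C)\cap\text{sig}(\Kmc)\subseteq\Sigma$ forbids naming the $\Kmc$-types $t_d\subseteq\text{sub}(\Kmc)$ directly, so the type and consistency bookkeeping of $\rightarrow^{\Sigma}_{c}$ must be captured only through $\Sigma$-bisimulation behaviour around the marked points plus entailment under $\Omc$, rather than stated outright. Second, to obtain a genuine finite concept I must bound its depth, for which I use that there are only finitely many $\Kmc$-types and that $\Amf$ has finite outdegree, so the relevant information stabilises at a depth bounded in $\|\Kmc\|$ and only finitely many helper names are needed. Reconciling a finite, $\Sigma$-restricted concept with the global type-consistency condition of $\rightarrow^{\Sigma}_{c}$ is where essentially all the work lies; it is also what forces the helper symbols already visible in Example~\ref{exp:gg}.
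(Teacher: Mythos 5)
The decisive gap is your step $(4)\Rightarrow(1)$, which you yourself flag as ``the hard part'': what you offer there is a description of the difficulties, not a proof, and the one concrete idea it rests on is unsound. You write that ``interpreting each mark as the corresponding constant shows $\Kmc\models C_a(a)$'' --- but in projective separability the helper marks are not yours to interpret: $\Kmc\models C_a(a)$ quantifies over \emph{all} models of $\Kmc$, including all interpretations of the fresh symbols, so the model interpreting every mark as empty falsifies any $C_a$ that positively asserts the existence of a marked homomorphic image of $\Dmc_{\text{con}(a)}$. Separating concepts with helpers must instead take a conditional shape, as in Example~\ref{exp:gg} ($\neg A\sqcup\exists R^n.A$), and making that work is precisely the nontrivial content you leave open. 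The paper avoids this cliff entirely by never proving $(4)\Rightarrow(1)$ directly: it shows $(4)\Rightarrow(3)$ (restrict the functional bisimulation to the database --- immediate) and then $(3)\Rightarrow(2)$ via the key trick of expanding $\Amf$ with a fresh concept name $A_d$ with $A_d^{\Amf'}=\{d\}$ for \emph{every} $d\in\text{dom}(\Amf)$; the atom condition of bisimulations then forces any $\ALCI(\Sigma')$-bisimulation into $\Amf'$ to be functional, so Condition~3 yields Condition~2 for $\Sigma'=\Sigma\cup\{A_d\mid d\in\text{dom}(\Amf)\}$, after which $(2)\Rightarrow(1)$ is the standard finite-depth characteristic-concept argument exploiting finite outdegree. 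Without this or an equivalent device your cycle is not closed.

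A second, smaller gap is your opening ``without loss of generality'' that all helper symbols are concept names. The justification you give --- ``the model interpreting a fresh role as empty is always available'' --- handles only the positive-example side: from $\Kmc\models C(a)$ one does get $\Kmc\models C^{\emptyset}(a)$, where $C^{\emptyset}$ replaces helper-role subconcepts $\exists R.D$ by $\bot$ and $\forall R.D$ by $\top$, but the negative side can break. Concretely, take $\Dmc=\{B(a),E(b)\}$, $\Omc=\emptyset$, $\Sigma=\{B\}$, and $C=B\sqcup\forall R.D$ with $R,D$ helpers: $C$ separates $(\Kmc,\{a\},\{b\})$, yet $C^{\emptyset}=B\sqcup\top\equiv\top$ does not. (The WLOG is in fact a \emph{consequence} of the theorem, so assuming it inside the proof is circular.) This matters in your $(2)\Rightarrow(3)$, where the paper handles helper roles in $\Sigma'$ directly: for each helper-role edge $(f(d),e)\in R^{\Amf}$ it attaches a disjoint copy of $\Amf$ to $\Bmf$ and adds an $R$-edge from $d$ to the copy of $e$ --- a local fix you could adopt verbatim, since your colour transport for helper concept names coincides with the paper's. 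Your remaining steps ($(1)\Rightarrow(2)$ and the recomposition in $(3)\Rightarrow(4)$) do follow the paper's proof, though in the latter you should glue the trees $\Bmf_d$ along only the database atoms and use the model of $(\Omc,\Dmc')$ from Condition~2 of $\rightarrow^{\Sigma}_{c}$ solely to certify that the types $t_d$ cohere along those atoms; grafting the trees onto the full core model, as you phrase it, would give database elements extra $\Sigma$-successors that the forth condition of the functional bisimulation cannot match in $\Amf$.
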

The proof relies on Lemmas~\ref{lem:forestmodelcompleteness} and
\ref{lem:equivalence} and is given in the appendix.

We first use Theorem~\ref{thm:alci-refinement} to discuss the
relationship between projective and non-projective separability.  A
basic model-theoretic characterization of non-projective
$\ALCI(\Sigma)$-separability is rather straightforward to obtain by
just dropping the quantification over $\Sigma'$ in Condition~2 of
Theorem~\ref{thm:alci-refinement} and demanding instead that
$\Bmf,a^{\Bmf} \not\sim_{\ALCI,\Sigma} \Amf, b^{\Amf}$, for all models
$\Bmf$ of $\Kmc$ and $a\in P$. As a consequence, one can then
also adapt Condition~3 of Theorem~\ref{thm:alci-refinement} to
the non-projective case by 
simply droppüing the functionality condition on the bisimulation.  If
the database $\Dmc_{\text{con}(a)}$ is tree-shaped, then there is
no difference between the two versions of Condition~2 as one
can always introduce sufficiently many copies of nodes in models
$\Bmf$ of $\Kmc$ to turn an unrestricted bisimulation into a functional
one. We obtain the following result.
\begin{theorem}
	Let $(\Kmc,P,N)$ be a labeled $\ALCI$-KB with $\Kmc=(\Omc,\Dmc)$ such that $\Dmc_{\text{con}(a)}$ is tree-shaped for all $a\in P$ and $\Sigma \subseteq \text{sig}(\Kmc)$. Then 
	$(\Kmc,P,N)$ is projectively $\mathcal{ALCI}(\Sigma)$-separable 
	iff it is non-projectively $\mathcal{ALCI}(\Sigma)$-separable.
\end{theorem}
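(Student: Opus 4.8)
The plan is to prove both directions via the characterizations in Theorem~\ref{thm:alci-refinement} and its non-projective analogue. One direction is immediate: every non-projectively $\Sigma$-separating formula is in particular a projectively $\Sigma$-separating formula (take $\text{sig}(\varphi)\subseteq\Sigma\subseteq\Sigma'$), so non-projective separability always implies projective separability, with no assumption on $\Dmc$ needed. The substance is in the converse, and here I would work with the model-theoretic conditions rather than the formulas directly. So first I would recall that projective $\ALCI(\Sigma)$-separability is equivalent to Condition~3 of Theorem~\ref{thm:alci-refinement}: there is a forest model $\Amf$ of $\Kmc$ of finite outdegree such that for all $a\in P$ and all models $\Bmf$ of $\Kmc$, $\Bmf,a^{\Bmf}\not\sim^{f}_{\ALCI,\Sigma}\Amf,b^{\Amf}$. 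As the excerpt records, the non-projective version is obtained by the same condition with the functional bisimulation $\sim^{f}_{\ALCI,\Sigma}$ replaced by the ordinary bisimulation $\sim_{\ALCI,\Sigma}$. Since $\sim^{f}_{\ALCI,\Sigma}\,\subseteq\,\sim_{\ALCI,\Sigma}$, the non-projective condition is a priori \emph{stronger}, so the whole task reduces to showing that, under the tree-shapedness hypothesis, the functional-bisimulation condition implies the ordinary-bisimulation condition.

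The key step is exactly the one sketched informally in the paragraph preceding the theorem: when $\Dmc_{\text{con}(a)}$ is tree-shaped, any ordinary $\ALCI(\Sigma)$-bisimulation witnessing $\Bmf,a^{\Bmf}\sim_{\ALCI,\Sigma}\Amf,b^{\Amf}$ can be converted into a functional one witnessing the same, at the cost of passing to a modified model $\Bmf'$ of $\Kmc$ that is still a model of $\Kmc$ and still has $a^{\Bmf'}$ in the appropriate place. I would establish this by an unravelling/copying construction: starting from $\Bmf$ and the witnessing relation $S$, I build $\Bmf'$ by making, for each branching point of the tree-shaped $\Dmc_{\text{con}(a)}$ and each successor demanded by the back-and-forth conditions, a private copy of the relevant element of $\Bmf$, so that the induced relation assigns to each element of $\Amf$ at most one partner in $\Bmf'$. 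Because $\Dmc_{\text{con}(a)}$ is tree-shaped, its ABox part imposes no cycles to reconcile, and the edges outside $\text{cons}(\Dmc)\times\text{cons}(\Dmc)$ in a forest model already form trees; this is what lets the copying proceed without forcing two copies to be identified. One must check that $\Bmf'$ remains a model of $\Kmc$ — concept inclusions are preserved because each copied element realizes the same $\Kmc$-type as its original (bisimilar elements of finite-outdegree structures are $\ALCI$-equivalent by Lemma~\ref{lem:equivalence}), and the database atoms of the tree-shaped component are preserved because the homomorphic image of a tree can be lifted to an injective-on-each-branch image.

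The direction to spell out carefully is the contrapositive matching of the two conditions. Fix the forest model $\Amf$ of finite outdegree provided by projective separability (Condition~3). I claim this same $\Amf$ witnesses non-projective separability. Suppose not: then some model $\Bmf$ of $\Kmc$ and some $a\in P$ satisfy $\Bmf,a^{\Bmf}\sim_{\ALCI,\Sigma}\Amf,b^{\Amf}$. Applying the copying construction to $\Bmf$ yields a model $\Bmf'$ of $\Kmc$ with $\Bmf',a^{\Bmf'}\sim^{f}_{\ALCI,\Sigma}\Amf,b^{\Amf}$, contradicting the functional condition that $\Amf$ was assumed to satisfy. Hence $\Amf$ satisfies the non-projective condition, and $(\Kmc,P,N)$ is non-projectively $\ALCI(\Sigma)$-separable.

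I expect the main obstacle to be the bookkeeping in the copying construction, specifically ensuring functionality \emph{globally} rather than just locally along a single branch. Making one copy per successor eliminates the possibility that two distinct elements of $\Amf$ share a partner along a path, but one has to argue that the tree structure of $\Amf$ — which is a forest model, hence tree-shaped outside the finitely many database constants — guarantees that no two elements of $\Amf$ reached along \emph{different} branches are forced by $S$ into the same copied element of $\Bmf'$. The tree-shapedness of $\Dmc_{\text{con}(a)}$ is precisely what removes the problematic case where two database constants sit on a cycle and must map to a common element, which is exactly the obstruction illustrated by Example~\ref{exp:gg}. Verifying that the induced relation on the copied model is both a bisimulation and functional, and that $\Bmf'\models\Kmc$, is the technical heart; the rest is the soft two-condition comparison above.
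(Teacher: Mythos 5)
Your proposal is correct and takes essentially the same route as the paper: the paper likewise reduces the theorem to comparing Condition~3 of Theorem~\ref{thm:alci-refinement} with its non-projective analogue (the same condition with functionality dropped), and argues that when $\Dmc_{\text{con}(a)}$ is tree-shaped an unrestricted $\ALCI(\Sigma)$-bisimulation can be made functional by introducing sufficiently many copies of nodes in the model $\Bmf$ --- exactly your copying construction, including the observation that database cycles (as in Example~\ref{exp:gg}) are the only obstruction. One small slip of direction: functionality of $S\subseteq\text{dom}(\Bmf')\times\text{dom}(\Amf)$ means each element of $\Bmf'$ has at most one partner in $\Amf$, not, as you write mid-proposal, that each element of $\Amf$ has at most one partner in $\Bmf'$; your final paragraph, however, discusses the correct condition, so this does not affect the argument.
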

The following example illustrates Theorem~\ref{thm:alci-refinement}.
\begin{example}\label{exmp:tt}
  Consider the labeled KB $(\Kmc,P,N)$ and signature $\Sigma$ from
  Example~\ref{exp:gg}.  Then we find a model $\Amf$ of $\Kmc$ of
  finite outdegree such that $b^{\Amf}$ does not participate in any
  $R$-cycle. Then, as $a$ participates in an $R$-cycle, there does not
  exist any model $\Bmf$ of $\Kmc$ such that
  $\Bmf,a^{\Bmf} \sim_{\ALCI,\Sigma}^{f} \Amf, b^{\Amf}$, and so
  $(\Kmc,P,N)$ is projectively $\mathcal{ALCI}(\Sigma)$-separable.
  $(\Kmc,P,N)$ is not non-projectively
  $\mathcal{ALCI}(\Sigma)$-separable: if $\Omc$ contains
  $\top \sqsubseteq \exists R.\top \sqcap \exists R^{-}.\top$, then
  $S=\text{dom}(\Bmf) \times \text{dom}(\Amf)$ is an
  $\ALCI(\Sigma)$-bisimulation between $\Bmf$ and $\Amf$, for any
  model $\Bmf$ of $\Kmc$. If $\Omc$ is empty, then the construction of
  the required bisimulation is also straightforward.
\end{example}	
We now come to the main result of this section.
\begin{theorem} \label{thm:alci-complexity}
	Projective $\ALCI$-separability with signature is \TwoExpTime-complete.
\end{theorem}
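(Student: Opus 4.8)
The plan is to establish the two matching bounds separately, relying for the \TwoExpTime upper bound on the homomorphism-based characterization in Condition~4 of Theorem~\ref{thm:alci-refinement} and for the lower bound on a reduction from conservative extensions in \ALCI.

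For the upper bound, by Theorem~\ref{thm:alci-refinement} it suffices to decide whether there is a forest model $\Amf$ of $\Kmc$ of finite outdegree with $\Dmc_{\text{con}(a)},a \not\rightarrow^{\Sigma}_{c} \Amf,b^{\Amf}$ for all $a\in P$. Since \ALCI enjoys forest models of outdegree bounded by $\|\Kmc\|$, I would encode forest models as labeled trees of bounded branching (treating the finitely many database constants as a bounded core to which the tree-shaped \ALCI-part is attached) and design two-way alternating tree automata (2ATA) over such encodings, two-wayness being needed to follow inverse roles. A first automaton $\mathfrak{A}_{\Kmc}$ accepts exactly the encodings of forest models of $\Kmc$; its states are essentially $\Kmc$-types, so there are $2^{O(\|\Kmc\|)}$ of them. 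For each $a\in P$ I would build a 2ATA $\mathfrak{B}_a$ accepting the forest models for which $\Dmc_{\text{con}(a)},a \rightarrow^{\Sigma}_{c} \Amf,b^{\Amf}$ holds, dualize it to $\overline{\mathfrak{B}_a}$, and test nonemptiness of the intersection of $\mathfrak{A}_{\Kmc}$ with all $\overline{\mathfrak{B}_a}$. As emptiness of 2ATA is decidable in exponential time in the number of states and all automata have singly-exponentially many states, this yields a \TwoExpTime procedure.

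The crux is the construction of $\mathfrak{B}_a$, which must detect a $\Sigma$-homomorphism $h$ from the possibly cyclic component $\Dmc_{\text{con}(a)}$ into $\Amf$ with $h(a)=b^{\Amf}$ while guessing, for each $d$, a $\Kmc$-type $t_d$ witnessing Conditions~1 and~2 of $\rightarrow^{\Sigma}_{c}$. The two global type conditions can be pulled out of the automaton and precomputed: Condition~2 depends only on the assignment $(t_d)_d$ and is a single \ALCI-KB satisfiability test (in \ExpTime), while Condition~1, using Lemma~\ref{lem:equivalence} to replace $\sim_{\ALCI,\Sigma}$ by $\equiv_{\ALCI,\Sigma}$, reduces to checking compatibility of $t_d$ with the \ALCI$(\Sigma)$-bisimulation type realized at $h(d)$, a local property the automaton can read off. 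The genuinely delicate point, which I expect to be the main obstacle, is folding a homomorphism from a cyclic source into a structure whose \ALCI-part is acyclic: cycles of $\Dmc_{\text{con}(a)}$ must be routed through the bounded constant core of $\Amf$ or folded back along paths via two-way transitions, and making the alternating automaton place database nodes consistently across such folds, within a singly-exponential state bound, is where the care is required.

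For the \TwoExpTime lower bound I would reduce from ($\Sigma$-)conservative extensions in \ALCI, known to be \TwoExpTime-hard~\cite{JLMSW17}. Given ontologies $\Omc_1 \subseteq \Omc_2$ and a signature $\Sigma$, non-conservativity is witnessed by a $\Sigma$-concept that is satisfiable together with $\Omc_1$ yet distinguishes $\Omc_1$ from $\Omc_2$. The plan is to craft a labeled \ALCI-KB $(\Kmc,P,\{b\})$ and a signature such that a projectively $\Sigma$-separating concept exists precisely when such a witness concept exists, the positive example encoding the behaviour forced by $\Omc_2$ and the negative example the more liberal behaviour allowed by $\Omc_1$ alone; the full construction, ensuring that separating concepts translate into distinguishing concepts and back, is deferred to the appendix.
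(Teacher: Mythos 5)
Your overall architecture matches the paper's proof: the upper bound via the homomorphism characterization (Condition~4 of Theorem~\ref{thm:alci-refinement}) and emptiness of an intersection of 2ATAs, the lower bound via conservative extensions. However, there are two genuine gaps in your upper bound. First, you assert that ``\ALCI enjoys forest models of outdegree bounded by $\|\Kmc\|$'' and build your automata over boundedly branching trees. This is unjustified in the present setting: the witness model for Condition~4 arises (in the direction $1 \Rightarrow 2$) as a forest model of $\Kmc$ together with $\neg C(b)$, where $C$ is the separating concept, whose size --- and hence the branching the model needs --- is not bounded in $\|\Kmc\|$. Nor can you simply prune to bounded outdegree: the condition $\Dmc_{\text{con}(a)},a \rightarrow^{\Sigma}_{c} \Amf,b^{\Amf}$ is not monotone under passing to substructures, because Point~1 of its definition requires a model $\Bmf_d$ of $\Omc$ with $\Bmf_d,d \sim_{\ALCI,\Sigma} \Amf,h(d)$, and the $\ALCI(\Sigma)$-bisimulation type of $h(d)$ changes when successors are removed (the back condition may become easier to satisfy), so a model witnessing non-existence of such homomorphisms can lose this property after pruning. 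The paper sidesteps exactly this issue by working with the 2ATA variant of \cite{JLMSW17} that runs on trees of finite but \emph{unbounded} outdegree; your proposal needs either that automaton model or a (nontrivial, currently missing) bounded-outdegree version of Condition~4. Second, your claim that Point~1 of $\rightarrow^{\Sigma}_{c}$ ``reduces to checking compatibility of $t_d$ with the $\ALCI(\Sigma)$-bisimulation type realized at $h(d)$, a local property the automaton can read off'' does not hold: node labels of the input tree record only which symbols are true at a node, while the existence of a model $\Bmf_d$ of $\Omc$ realizing $t_d$ and $\ALCI(\Sigma)$-bisimilar to $\Amf,h(d)$ depends on the entire infinite structure around $h(d)$. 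Invoking Lemma~\ref{lem:equivalence} to pass to $\equiv_{\ALCI,\Sigma}$ does not make this finite or local. In the paper this is the technical heart of the automaton $\Amc_3$: it spawns copies that virtually traverse a tree-shaped $\Bmf_d$ type-by-type, alternating between following neighbors of the current node of $\Amf_\tau$ in both directions (this is where two-wayness is essential) and guessing successor types realizing the existential restrictions of the current type. You correctly isolated the other delicate point --- folding the possibly cyclic $\Dmc_{\text{con}(a)}$ into the forest --- which the paper handles by guessing a partition $\Dmc_0,\ldots,\Dmc_m$ with $\Dmc_0$ mapped into the core and tracking, during a top-down traversal, the set of constants mapped to the current element; but Point~1 cannot be dismissed as a lookup. (Pulling the satisfiability test of Point~2 out of the automaton is fine and consistent with the paper, which checks it on the guessed type assignment stored in the state.)

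For the lower bound, your idea is the paper's, but you defer the entire construction, which is where the content lies. The paper's reduction is short and concrete: with $\Omc_1 = \{\top \sqsubseteq C_1\}$ and $\Omc_2 = \Omc_1 \cup \{\top \sqsubseteq C_2\}$, it uses \emph{relativization}, setting $\Omc = \{\top \sqsubseteq C_1^{A}, A \sqsubseteq C_2^{A}\}$ with $A$ fresh and $\Dmc = \{A(a), D(b)\}$, so that the $\Omc_2$-behaviour is activated only in the part reachable from the positive example while the negative example sees only $\Omc_1$; a witness concept $C$ for non-conservativity then separates via $\neg C$, and conversely. Two points you should not gloss over: since the target problem is \emph{projective} separability, you need \TwoExpTime-hardness of \emph{projective} conservative extensions, which the paper obtains from the non-projective case \cite{DBLP:conf/kr/GhilardiLW06} via the Craig interpolation property of \ALCI; and conservative extensions carry no separate signature input --- the relevant signature is $\text{sig}(\Omc_1)$, which is what the reduction passes as $\Sigma$.
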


The upper bound in Theorem~\ref{thm:alci-complexity} is obtained by
using the characterization in Condition~4 of
Theorem~\ref{thm:alci-refinement} to devise a decision procedure based
on tree automata. Given a labeled $\ALCI$-KB $(\Kmc,P,\{b\})$, we
construct a tree automaton \Amc such that the language recognized by
\Amc is non-empty if and only if there is a forest model \Amf of \Kmc
as described in Condition~4 of Theorem~\ref{thm:alci-refinement}. We
use a variant of two-way alternating parity tree automata over
infinite trees~\cite{DBLP:conf/icalp/Vardi98}. In contrast to the
standard model, our automata work on trees of finite, but not
necessarily bounded outdegree.  Such an automata model has been
introduced in~\cite{JLMSW17}. We recall the technical preliminaries.

A \emph{tree} is a non-empty (and potentially infinite) set
of words $T \subseteq (\mathbb{N} \setminus 0)^*$ closed under prefixes.
We generally assume that trees are finitely branching, that is, for
every $w \in T$, the set $\{ i \mid w \cdot i \in T \}$ is finite.
For any $w \in (\mathbb{N} \setminus 0)^*$, as a convention we set $w
\cdot 0 := w$. If $w=n_0n_1 \cdots n_k$, 
we additionally
set $w \cdot -1 := n_0 \cdots n_{k-1}$.  For an alphabet $\Theta$, a
\emph{$\Theta$-labeled tree} is a pair $(T,L)$ with $T$ a tree and
$L:T \rightarrow \Theta$ a node labeling function.

A {\em two-way alternating tree automaton (2ATA)} is a tuple $\Amc =
(Q,\Theta,q_0,\delta,\Omega)$ where $Q$ is a finite set of {\em
states}, $\Theta$ is the finite {\em input alphabet}, $q_0\in Q$ is
the {\em initial state}, $\delta$ is a {\em transition function} as
specified below, and $\Omega:Q\to \mathbb{N}$ is a {\em priority
function}. 
The automaton runs on $\Theta$-labeled trees.  The transition function
maps a state $q$ and some input letter $\theta\in \Theta$ to a {\em
transition condition $\delta(q,\theta)$} which is a positive Boolean
formula over the truth constants $\mn{true}$ and $\mn{false}$ and
transitions of the form $q$, $\langle-\rangle q$, $[-] q$, $\Diamond
q$, $\Box q$ where $q \in Q$.  Informally, the transition $q$
expresses that a copy of the automaton is sent to the current node in
state $q$, $\langle - \rangle q$ means that a copy is sent in state
$q$ to the predecessor node, which is then required to exist, $[-] q$
means the same except that the predecessor node is not required to
exist, $\Diamond q$ means that a copy is sent in state $q$ to some
successor, and $\Box q$ that a copy is sent in state $q$ to all
successors. The semantics is defined in terms of runs in the usual
way, we refer to~\cite{JLMSW17} for details.  We use $L(\Amc)$ to
denote the set of all $\Theta$-labeled trees accepted by \Amc.  2ATAs
are closed under complementation and intersection, and their emptiness
problem, which asks whether $L(\Amc)=\emptyset$ for a given 2ATA \Amc,
can be decided in time exponential in the number of states
of~\Amc~\cite{JLMSW17}. 

The input alphabet $\Theta$ consists of two types
of symbols:
\begin{enumerate}

  \item models $\Amf_0$ of $\Dmc$ with
    $\text{dom}(\Amf_0) \subseteq D_0$, $D_0$ a fixed set of
    cardinality $|\text{cons}(\Dmc)|$;

  \item triples $(a,R,M)$ for $a\in\text{cons}(\Dmc)$, $R$ a role used in
    \Omc, and $M\subseteq \text{sig}(\Omc) \cap \NC$.

\end{enumerate}
A $\Theta$-labeled tree is \emph{well-formed} if it has a label of
Type~1 at the root and labels of Type~2 everywhere else. A well-formed
$\Theta$-labeled tree $\tau$ encodes a structure $\Amf_\tau$ that can be
constructed as follows:
\begin{itemize}

  \item start with $\Amf_\tau=\Amf_0$, the structure  from the root label; 

  \item for every non-root $v\in T$ with 
    $\tau(v)=(a,R,M)$, extend $\Amf_\tau$ as follows:
    \begin{itemize}

    \item if the predecessor of $v$ is the root, add an $R$-successor
      $a_v$ of~$a^{\Amf_\tau}$;

    \item if the predecessor $v'$ of $v$ is not the root, add an
      $R$-successor $a_v$ of $a_{v'}$ (so $a$ is ignored in this case
      and should be considered a dummy.)

    \end{itemize}
    In both cases, $a_v$ makes true exactly the concept names in $M$.
\end{itemize}
It should be clear that every structure $\Amf_\tau$ is a forest
structure.  Conversely, we can encode every forest structure \Amf into
a $\Theta$-labeled tree $\tau$ such that $\Amf_\tau=\Amf$.
\begin{lemma} \label{lem:automata}
  There are 2ATAs $\Amc_1,\Amc_2,\Amc_3$ such that: 
  \begin{enumerate}

    \item $\Amc_1$ accepts precisely the well-formed $\Theta$-labeled
      trees; 

    \item $\Amc_2$ accepts a well-formed $\Theta$-labeled tree
      $\tau$ iff $\Amf_\tau$ is a model of \Kmc;

    \item $\Amc_3$ accepts a well-formed $\Theta$-labeled tree $\tau$
      if $\Dmc_{\text{con}(a)},a \rightarrow^{\Sigma}_{c}
      \Amf_\tau,b^{\Amf_\tau}$.

  \end{enumerate}
  The number of states of $\Amc_1$ is $2$; the number of states of
  $\Amc_2$ is polynomial in $||\Omc||$; the number of states of 
  $\Amc_3$ is exponential in $||\Kmc||$. Moreover,
  $\Amc_1,\Amc_2,\Amc_3$ can be constructed in time double exponential
  in $||\Kmc||$.
\end{lemma}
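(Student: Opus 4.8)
The plan is to construct the three automata $\Amc_1, \Amc_2, \Amc_3$ explicitly, treating each as a more or less independent task, and then verify the claimed bounds on state counts and construction time. The automata $\Amc_1$ and $\Amc_2$ are routine once the encoding of forest structures as $\Theta$-labeled trees is fixed, so I would dispatch them quickly and spend the bulk of the effort on $\Amc_3$, which is where the real content lies.

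\smallskip

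\noindent\textbf{Constructing $\Amc_1$ and $\Amc_2$.} For $\Amc_1$, well-formedness is a purely local syntactic condition: the root must carry a Type-1 label and every non-root node a Type-2 label. A 2ATA with two states suffices, one state to check the root and one to propagate down the tree verifying that all successors carry Type-2 labels; I would make $\delta$ reject (send to $\mn{false}$) whenever a label of the wrong type is seen at the wrong position. For $\Amc_2$, I would check that $\Amf_\tau$ is a model of $\Kmc = (\Omc,\Dmc)$. Satisfaction of $\Dmc$ is guaranteed by construction from the root label (the Type-1 symbol is already a model of $\Dmc$), so the work is in verifying the ontology $\Omc$. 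Using the standard automata-theoretic treatment of $\ALCI$, each concept inclusion $C \sqsubseteq D$ is enforced by propagating, at every node, the requirement that the type declared there is consistent with $\Omc$: for each existential $\exists R.C'$ in $\text{sub}(\Omc)$ I use the $\Diamond$, $\langle-\rangle$, and $[-]$ transitions to locate a witness among successors or the predecessor, and for each universal I use $\Box$ together with the upward transitions. Because forest structures place all the action either along tree edges or within the finite database part $\Amf_0$, the two-way capability is exactly what is needed to inspect the predecessor when the current node's $R$-neighbors include its parent. The number of states is polynomial in $\|\Omc\|$ since one needs a bounded number of states per subconcept.

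\smallskip

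\noindent\textbf{Constructing $\Amc_3$ and the state bound.} This is the crux. I must build an automaton accepting $\tau$ exactly when $\Dmc_{\text{con}(a)}, a \rightarrow^{\Sigma}_c \Amf_\tau, b^{\Amf_\tau}$, i.e.\ when the conditions in the definition of $\rightarrow^{\Sigma}_c$ hold. Unwinding that definition, the automaton must guess a $\Sigma$-homomorphism $h$ from the finite connected component $\Dmc_{\text{con}(a)}$ into $\Amf_\tau$ with $h(a) = b^{\Amf_\tau}$, together with a $\Kmc$-type $t_d$ for each $d \in \text{cons}(\Dmc_{\text{con}(a)})$, subject to: (i) each $t_d$ is realizable at $h(d)$ up to $\ALCI(\Sigma)$-bisimilarity with a model of $\Omc$, and (ii) the extended database $\Dmc'$ obtained by adding the asserted types is $\Omc$-satisfiable. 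Since $\Dmc_{\text{con}(a)}$ is finite and bounded by $\|\Kmc\|$, the automaton can hard-code the guessed tuple of types $(t_d)_d$ into its state space; there are at most exponentially many such type-assignments, which accounts for the exponential number of states of $\Amc_3$. The homomorphism $h$ is then guessed on the fly: the automaton sends copies that walk from the root (carrying $\Amf_0 \supseteq \Amf_{\Dmc}$) into the tree, maintaining in the state which database constant is currently being mapped and checking that all $\Sigma$-atoms of $\Dmc_{\text{con}(a)}$ are respected at the images. Condition~(i), the bisimilarity requirement $\Bmf_d, d \sim_{\ALCI,\Sigma} \Amf_\tau, h(d)$ with $\text{tp}_\Kmc(\Bmf_d,d)=t_d$, I would encode as a separate consistency check: by Lemma~\ref{lem:equivalence}, $\ALCI(\Sigma)$-bisimilarity coincides with $\ALCI(\Sigma)$-equivalence on structures of finite outdegree, so the automaton only needs to verify that the $\ALCI(\Sigma)$-type realized at $h(d)$ in $\Amf_\tau$ is compatible with $t_d$ being realizable in some model of $\Omc$; this compatibility is a finite condition that can be precomputed per type.

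\smallskip

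\noindent\textbf{The main obstacle and the construction-time bound.} I expect the hardest part to be encoding Condition~(ii), the $\Omc$-satisfiability of the extended database $\Dmc'$, correctly and within the state budget. Unlike the bisimilarity and homomorphism checks, which are either local or propagate cleanly along the tree, satisfiability of $(\Omc, \Dmc')$ is a global condition about the finite database part enriched with concept assertions. The cleanest route is to observe that, once the type-tuple $(t_d)_d$ is fixed in the state, satisfiability of $\Dmc'$ becomes a fixed instance that does \emph{not} depend on the tree $\Amf_\tau$ at all; I would therefore \emph{precompute} it during the construction of $\Amc_3$ (using a standard $\ExpTime$ $\ALCI$-consistency check) and simply let the automaton reject outright any branch whose guessed type-tuple fails this precomputed test. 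Pushing the satisfiability test into the offline construction, rather than into the automaton's runtime behaviour, is what keeps the state count at a single exponential while relegating the double-exponential cost to the construction phase: there are exponentially many type-tuples, each requiring an $\ExpTime$ satisfiability check, giving the double-exponential construction time claimed for $\Amc_1, \Amc_2, \Amc_3$. I would close by remarking that combining this with the emptiness bound for 2ATAs (exponential in the number of states) yields, via $\Amc_1 \cap \Amc_2 \cap \overline{\Amc_3}$, the $\TwoExpTime$ upper bound of Theorem~\ref{thm:alci-complexity}, though that combination belongs to the proof of the theorem rather than to this lemma.
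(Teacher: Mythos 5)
Your constructions of $\Amc_1$ and $\Amc_2$, the storage of the guessed type tuple $(t_d)_d$ in the state space, and the offline precomputation of the satisfiability of $(\Omc,\Dmc')$ per type tuple all match the paper's construction. But there is a genuine gap in your treatment of the bisimilarity requirement, Point~1 in the definition of $\Dmc_{\text{con}(a)},a \rightarrow^{\Sigma}_{c} \Amf_\tau,b^{\Amf_\tau}$. You claim that, by Lemma~\ref{lem:equivalence}, the automaton ``only needs to verify that the $\ALCI(\Sigma)$-type realized at $h(d)$ in $\Amf_\tau$ is compatible with $t_d$,'' and that this is ``a finite condition that can be precomputed per type.'' This is false: the object you would need is the full $\ALCI(\Sigma)$-theory of $\Amf_\tau$ at $h(d)$, which consists of concepts of unbounded depth and depends on the entire infinite tree, not on any finite datum attached to $h(d)$. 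Two nodes realizing the same $\Kmc$-type (even restricted to $\Sigma$) can differ on deep $\Sigma$-concepts, so whether some model $\Bmf_d$ of $\Omc$ with $\text{tp}_{\Kmc}(\Bmf_d,d)=t_d$ is $\ALCI(\Sigma)$-bisimilar to $\Amf_\tau,h(d)$ cannot be decided by a lookup table indexed by $\Kmc$-types; Lemma~\ref{lem:equivalence} only trades bisimilarity for equivalence, it does not make the condition local or finite. Without a runtime check, your $\Amc_3$ would accept (or reject) trees incorrectly whenever the homomorphism exists but the bisimilarity condition fails, and the complementation step in the proof of Theorem~\ref{thm:alci-complexity} would then be unsound.

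This check is in fact the heart of the paper's construction, and it is exactly where the alternation and two-wayness of the automaton are needed: during the top-down traversal, $\Amc_3$ spawns copies that \emph{virtually} traverse a tree-shaped model $\Bmc_d$ of $\Omc$ element-by-element, storing only the $\Kmc$-type of the current element of $\Bmc_d$ in the state, and simulates the bisimulation game by alternating two steps --- it moves to every neighbor (upward and downward) of the currently visited node of $\Amf_\tau$ while guessing the matching type transition in $\Bmc_d$, and it guesses, for each existential restriction in the current type, a successor type together with a neighbor of the current node of $\Amf_\tau$ to which that successor is mapped. This stays within your exponential state budget (states carry types plus bookkeeping, including extra care at the root, which represents several elements of $\Amf_\tau$), but it is an unbounded on-line verification, not a precomputation. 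A second, minor imprecision: during the homomorphism traversal the automaton must memorize the \emph{set} of database constants mapped to the currently visited element, not a single constant, since distinct copies of an alternating automaton cannot coordinate their nondeterministic guesses and constants linked by atoms of $\Dmc_{\text{con}(a)}$ must be placed consistently; this is why the paper first guesses a partition $\Dmc_0,\Dmc_1,\ldots,\Dmc_m$ with $\Dmc_0$ mapped into the root structure and each $\Dmc_i$ tracked jointly within one subtree.
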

As the construction of $\Amc_1$ and $\Amc_2$ is rather standard, we only
sketch the construction of the automaton $\Amc_3$. See
e.g.~\cite{JLMSW17}
for full details of a similar construction.

As the first step,
$\Amc_3$ reads the symbol $\Amf_0$ at the root and
non-deterministically guesses the following: 
\begin{itemize}

  \item types $t_d$, $d\in\text{cons}(\Dmc_{\text{con}(a)})$, such that $(\Omc,\Dmc')$
    is satisfiable where  $\Dmc'=\Dmc\cup \{ C(d) \mid C \in t_d, \ d\in \text{cons}(\Dmc_{\text{con}(a)})\}$;

  \item a partition $\Dmc_0,\Dmc_1,\ldots,\Dmc_m$ of
    $\Dmc_{\text{con}(a)}$
    such that $\text{cons}(\Dmc_i)\cap\text{cons}(\Dmc_j) =
    \emptyset$ for 
$1\leq i<j\leq m$;






  \item a $\Sigma$-homomorphism $h$ from $\Dmc_0$ to
    $\Amf_0$ such that $h(a)=b^{\Amf_0}$ and there are
    $a_1,\dots,a_m$ with $h(c)=a_i$
    for all $c \in \text{cons}(\Dmc_0 \cap \Dmc_i)$ and
    $1 \leq i \leq m$.

\end{itemize}
The entire guess is stored in the state of the automaton. Note that
the first item checks that Point~2 from the definition of
$\Dmc_{\text{con}(a)},a \rightarrow^{\Sigma}_{c}
\Amf_\tau,b^{\Amf_\tau}$ is satisfied for the guessed types
$t_d$. After making its guess, the automaton verifies that the
homomorphism $h$ from the last item can be extended to a homomorphism
from $\Dmc_{\text{con}(a)}$ to $\Amf_\tau$ that satisfies Point~1 from
the definition of
$\Dmc_{\text{con}(a)},a \rightarrow^{\Sigma}_{c}
\Amf_\tau,b^{\Amf_\tau}$. To this end, it does a top-down traversal of
$\Amf_\tau$ checking that each $\Dmc_i$ can be homomorphically mapped
to the subtree of $\Amf_\tau$ below $h(a_i)$. During the traversal,
the automaton memorizes in its state the set of constants from $\Dmc_i$ that
are mapped to the currently visited element.


The automaton additionally makes sure that Point~1 from the definition
of
$\Dmc_{\text{con}(a)},a \rightarrow^{\Sigma}_{c}
\Amf_\tau,b^{\Amf_\tau}$ is satisfied, in the following way. During
the top-down traversal, it spawns copies of itself to verify that,
whenever it has decided to map a
$d \in \text{cons}(\Dmc_{\text{con}(a)})$ to the current element, then
there is a tree-shaped model $\Bmc_d$ of \Omc with
$\text{tp}_{\Kmc}(\Bmf_{d},d) = t_{d}$ and a bisimulation that
witnesses $\Bmc_d,d \sim_{\ALCI,\Sigma} \Amf_\tau,c$. This is done by
`virtually' traversing $\Bmc_d$ elements-by-element, storing at each
moment only the type of the current element in a state. This is
possible because $\Bmf_d$ is tree-shaped. At the beginning, the
automaton is at an element of $\Bmc_d$ of type $t_d$ and knows that
the bisimulation maps this element to the node of $\Amf_\tau$
currently visited by the automaton. It then does two things to verify
the two main conditions of bisimulations.  First, it transitions to
every neighbor of the node of $\Amf_\tau$ currently visited, both
upwards and downwards, and carries out in its state the corresponding
transition in $\Bmc_d$, in effect guessing a new type. Second, it
considers the current type of $\Bmc_d$ and guesses successor types
that satisfy the existential restrictions in it. For every required
successor type,  it then guesses a neighbor of the currently visited
node in $\Amf_\tau$ to which the successor is mapped. The two steps
are alternated, exploiting the alternation capabilities of the
automaton. Some extra bookkeeping in states is needed for the root
node of the input tree as it represents more then one element
of $\Amf_\tau$.


It can be verified that only exponentially many states are required
and that the transition function can be computed in double exponential
time. This finishes the proof sketch of Lemma~\ref{lem:automata}.

\smallskip The upper bound in Theorem~\ref{thm:alci-complexity} is now
obtained as follows. By Condition~4 of
Theorem~\ref{thm:alci-refinement} and Lemma~\ref{lem:automata},
$(\Kmc,P,\{b\})$ is projectively $\ALCI(\Sigma)$-separable iff
$L(\Amc_1)\cap L(\Amc_2)\cap \overline{L(\Amc_3)}$ is not empty where
$\overline{L(\Amc_3)}$ denotes the complement of $L(\Amc_3)$. By
Lemma~\ref{lem:automata}, all $\Amc_i$ can be constructed in double
exponential time and their number of states is single exponential in
$||\Kmc||$. As the complement and intersections of 2ATAs can be
computed in polynomial time with only a polynomial increase in the
number of states, it remains to recall that non-emptiness of
2ATAs can be decided in time exponential in the number of states.

\medskip
%
For the lower bound, we reduce from
conservative extensions in \ALCI. An \ALCI-ontology $\Omc_2$ is a
\emph{conservative extension} of an \ALCI-ontology $\Omc_1 \subseteq
\Omc_2$ if there is no $\ALCI(\text{sig}(\Omc_1))$-concept $C$ that is
satisfiable w.r.t.\ $\Omc_1$, but unsatisfiable w.r.t.\ $\Omc_2$. We
define \emph{projective conservative extensions} in the same way
except that $C$ is now an \ALCI-concept with $ \text{sig}(\Omc_2) \cap
\text{sig}(C) \subseteq \text{sig}(\Omc_1)$.  It was
shown in \cite{DBLP:conf/kr/GhilardiLW06} that it is \TwoExpTime-hard to decide,
given \ALCI-ontologies $\Omc_1$ and $\Omc_2$, whether $\Omc_2$ is a
(non-projective) conservative extensions of $\Omc_1$. It was further
observed that conservative extensions and projective
conservative extensions coincide in logics that enjoy Craig
interpolation, which \ALCI does~\cite{JLMSW17}. Thus, projective conservative
extensions in \ALCI are also \TwoExpTime-hard.  We give a polynomial
time reduction from (the complement of) that problem to projective
\ALCI-separability with signature.

Thus let $\Omc_1, \Omc_2$ be \ALCI-ontologies with
$\Omc_1 \subseteq \Omc_2$. We can assume w.l.o.g.\ that $\Omc_1$ takes
the form $\{ \top \sqsubseteq C_1 \}$,  $\Omc_2$ takes the form
$\Omc_1 \cup \{ \top \sqsubseteq C_2 \}$, and that $\Omc_2$ is satisfiable. 
For a concept name $A$, the \emph{$A$-relativization} $C^A$ of an
\ALCI-concept $C$ 
is obtained by replacing every subconcept $\exists r . D$ in $C$ with
$\exists r . (A \sqcap D)$. Define
$$\Omc = \{ \top \sqsubseteq C^{A_1}_1, A \sqsubseteq C^{A}_2 \}
$$ where
$A$ is a concept name that does not occur in $\Omc_2$.  Then
$\Omc_2$ is not a projective conservative extension of $\Omc_1$ iff
there is an $\ALCI(\text{sig}(\Omc_1))$-concept $C$ that is
satisfiable w.r.t.\ $\Omc_1$ but unsatisfiable w.r.t.\ $\Omc_2$ iff
the labeled \ALCI-KB $(\Kmc,\{a\},\{b\})$ is projectively
$\ALCI(\text{sig}(\Omc_1))$-separable by $\neg C$ where
$\Kmc=(\Omc,\Dmc)$ and $\Dmc = \{ A(a), D(b) \}$, $D$ a
fresh (dummy) concept name. We have thus
established the lower bound from Theorem~\ref{thm:alci-complexity}.
%
%
\medskip

We leave open the decidability and exact complexity of non-projective
\ALCI-separability with signature. A \TwoExpTime lower bound can be
established along the lines above.

\section{Weak Separability: Undecidable Cases}

Inspired by the close connection of conservative extensions and weak
separability with signature that was established in the previous
section, we investigate logics for which
conservative extensions are undecidable: the guarded fragment GF and
the expressive DL $\mathcal{ALCFIO}$. The latter logic,
$\mathcal{ALCFIO}$, is the extension of \ALCI with \emph{nominals} and
\emph{functionality assertions}. Nominals are concepts of the form
$\{o\}$ with $o$ a constant symbol, and the translation $\cdot^\dagger$ from \ALCI into FO can be
extended to nominals by setting 
$\{o\}^\dagger=(x=o)$. Functionality assertions are concept inclusions
of the form $\top\sqsubseteq (\leqslant 1\ r)$, $r$ a role, which
demand that the role $r$ is interpreted as a partial function. Thus, they are
a weak form of counting, and indeed $\mathcal{ALCFIO}$ is a fragment
of $\text{C}^2$, the two-variable fragment of FO with counting. 

It is known that conservative extensions and projective conservative
extensions are undecidable in every extension of the three-variable
fragment GF$^3$ of GF~\cite{JLMSW17} and in every extension of
$\mathcal{ALCFIO}$~\cite{DBLP:conf/ijcai/LutzWW07}. Unfortunately, it
is not clear how to achieve a direct reduction of conservative
extensions to separability for both GF and $\mathcal{ALCFIO}$. In both
cases, the relativization that was used for \ALCI does not work. For
GF, this is the case because non-conservativity in GF is witnessed by
\emph{sentences} while separability is witnessed by \emph{formulas}.
For $\mathcal{ALCFIO}$, relativization cannot be applied due to the
presence of nominals/constants. We instead directly use and adapt the
strategies of the mentioned undecidability proofs, starting with GF.

\begin{theorem}\label{thm:main-gf}
  Projective and non-projective \Lmc-separability with signature are
  undecidable for every logic $\Lmc$ that contains $\text{GF}^{\,3}$.
  This is even true when the language of the separating formula is
  \ALC.
\end{theorem}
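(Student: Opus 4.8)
The plan is to reduce from (projective) conservative extensions in $\text{GF}^{\,3}$, which is known to be undecidable, reusing as much of the underlying construction as possible but converting the sentence-witness of non-conservativity into a formula-witness suitable for separability. Recall that $\Omc_2$ is not a conservative extension of $\Omc_1 \subseteq \Omc_2$ iff there is a $\text{GF}(\text{sig}(\Omc_1))$-sentence $\chi$ that is satisfiable w.r.t.\ $\Omc_1$ but unsatisfiable w.r.t.\ $\Omc_2$. The obstacle flagged in the text is precisely that separability is witnessed by an \emph{open} formula evaluated at an example, whereas non-conservativity here is witnessed by a \emph{closed} sentence. So the central idea is to introduce a fresh unary ``anchor'' concept name $A$ (not in $\text{sig}(\Omc_2)$) and a relativized copy of the ontologies, so that a sentence $\chi$ over $\text{sig}(\Omc_1)$ can be simulated by a formula $\vp(x)$ whose truth at a distinguished element encodes satisfaction of $\chi$ in the $A$-relativized submodel reachable from that element.

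First I would set up the database and examples in analogy with the $\ALCI$ lower bound: take $\Dmc=\{A(a),D(b)\}$ with $D$ a fresh dummy concept name and a fresh anchor $A$, set $P=\{a\}$, $N=\{b\}$, and $\Sigma=\text{sig}(\Omc_1)$. The ontology $\Omc$ would combine an $A$-relativized form of $\Omc_1$ that is forced to hold globally with a copy of $\Omc_2$ (or its extra axioms) that is forced to hold only on the $A$-part, exactly as in the relativization $\Omc=\{\top\sqsubseteq C_1^{A_1}, A\sqsubseteq C_2^A\}$ used before, but now carried out inside $\text{GF}^{\,3}$ where the relativization must be applied to guarded quantifiers rather than to $\exists R.D$-subconcepts. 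The key design requirement is that models of $\Kmc$ in which $a$ is realized correspond to models of $\Omc_1$ (where $\chi$ may hold), while $b$ is realized only in contexts forced to satisfy $\Omc_2$ (where no such $\chi$ survives). One then argues $(\Kmc,P,N)$ is separable by a formula $\vp(x)$ iff such a distinguishing $\chi$ exists, i.e.\ iff $\Omc_2$ is not a conservative extension of $\Omc_1$.

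The hard part, and the one I would spend the most care on, is bridging the sentence-formula gap in both directions. For the direction from non-conservativity to separability, given a distinguishing $\text{GF}(\Sigma)$-sentence $\chi$ I must manufacture an open $\text{GF}(\Sigma)$- (or even $\ALC(\Sigma)$-) formula $\vp(x)$ that is entailed at $a$ but not at $b$; the natural attempt is to guard-relativize $\chi$ relative to the connected component reachable from $x$ and to add the anchor so that the relevant submodel is exactly the one governed by the appropriate ontology. Because the theorem also claims the separating formula can be taken in $\ALC$, I would further need to ensure the witness $\chi$ can be chosen in (or converted into) an $\ALC$-formula with a single free variable; this likely forces the construction to encode the grid/tiling or domino structure underlying the $\text{GF}^{\,3}$ undecidability proof so that the relevant distinguishing information is local and bisimulation-expressible, rather than genuinely requiring three variables in the witness. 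For the converse direction, I must show that any $\Sigma$-separating formula $\vp(x)$—projective or not, and in any logic $\Lmc\supseteq\text{GF}^{\,3}$—gives rise to a distinguishing sentence: here the point is that separability means $\Kmc\models\vp(a)$ but $\Kmc\not\models\vp(b)$, and by the relativization design the behaviour at $a$ versus $b$ is controlled entirely by whether we work modulo $\Omc_1$ or modulo $\Omc_2$, so $\vp$ (closed off by existentially binding its free variable under the anchor) yields the required sentence.

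Finally I would verify that the reduction is polynomial and that it is insensitive to the projective/non-projective distinction, since helper symbols, being fresh, cannot interact with the anchor-relativized $\Sigma$-part in a way that changes separability—this mirrors the observation used for $\ALCI$ that projective and non-projective conservativity coincide, and it is what lets the single construction establish undecidability in both cases simultaneously. The overall correctness statement to prove is: $\Omc_2$ is \emph{not} a conservative extension of $\Omc_1$ iff $(\Kmc,P,N)$ is ($\Sigma$-)separable, from which undecidability of separability for every $\Lmc\supseteq\text{GF}^{\,3}$ follows by the known undecidability of conservative extensions, with the $\ALC$-definability of the separating formula recorded as the refinement in the last sentence of the theorem.
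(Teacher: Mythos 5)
There is a genuine gap, and it sits exactly at the step you yourself flag as ``the hard part.'' Your plan is to reduce from conservative extensions in $\text{GF}^{\,3}$ via an anchor concept and relativization, bridging the sentence/formula mismatch by guard-relativizing the witness sentence $\chi$ ``to the connected component reachable from $x$.'' This is precisely the route the paper considers and rejects: it states that the relativization used for \ALCI does not work for GF \emph{because} non-conservativity in GF is witnessed by sentences while separability is witnessed by open formulas. The \ALCI reduction is sound only because \ALCI concepts are invariant under bisimulation and hence see only the part of a model reachable from the evaluation point, so relativizing to the $A$-part commutes with entailment at $a$ and with satisfiability at $b$. GF open formulas have no such locality: a formula $\varphi(x)$ may contain subformulas like $\exists y\,(P(y)\wedge\psi(y))$ with a unary guard, which inspect parts of the model disconnected from $x$. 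Consequently both directions of your correctness claim break: a distinguishing sentence $\chi$ satisfiable w.r.t.\ $\Omc_1$ does not yield a formula entailed at $a$ in \emph{all} models of $\Kmc$ (your anchor cannot confine its semantics to the $a$-part), and conversely a separating $\varphi(x)$ that exploits non-local quantification need not ``close off'' into a sentence whose (un)satisfiability is governed by $\Omc_1$ versus $\Omc_2$. Note also that the paper's formulas are constant-free, so you cannot anchor via constants in the separating language. You half-acknowledge the problem when you write that one ``likely'' has to re-encode the structure underlying the $\text{GF}^{\,3}$ undecidability proof, but you never supply that construction, and without it the reduction has no correctness proof.

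What the paper actually does is abandon the black-box reduction and redo the undecidability argument directly: it reduces from the halting problem for two-register machines, building a KB with $\Dmc=\{S(a),D(a),S(b)\}$ and a signature $\Sigma$ containing the computation-encoding symbols but not the defect symbols $D, D^{+}_p, D^{-}_p, D^{=}_p$ and the ternary guards. The sentence-to-formula conversion is then performed \emph{inside} the concrete construction: if $M$ halts, the halting computation is finite and local, so it can be described up to bisimulation by an $\ALC(\Sigma)$ concept $C$ with $\Kmc\models\neg C(a)$ (the defect machinery triggered by $D(a)$ contradicts $C$) and $\Kmc\not\models\neg C(b)$; if $M$ does not halt, a $\Gamma$-isomorphism lemma shows that for every model $\Amf$ of $\Kmc$ there is a model $\Bmf$ with $(\Amf,b^{\Amf})$ isomorphic to $(\Bmf,a^{\Bmf})$ on all symbols outside $\text{sig}(\Omc)\setminus\Sigma$, refuting separability even in full FO and in both the projective and non-projective settings. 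This model-theoretic non-halting direction is what your anchor design cannot deliver, and it is also what makes the theorem hold simultaneously for every $\Lmc$ between $\text{GF}^{\,3}$ and FO with $\ALC$ separating formulas.
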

%
The proof is by a reduction from the halting problem of two-register
machines. A (deterministic) \emph{two-register machine (2RM)}\index{Two-register
machine}\index{2RM} is a pair $M=(Q,P)$ with $Q = q_0,\dots,q_{\ell}$
a set of \emph{states} and $P = I_0,\dots,I_{\ell-1}$ a sequence of
\emph{instructions}.  By definition, $q_0$ is the \emph{initial
state}, and $q_\ell$ the \emph{halting state}. For all $i < \ell$,
\begin{itemize}

  \item either $I_i=+(p,q_j)$ is an \emph{incrementation instruction}
    with $p \in \{0,1\}$ a register and $q_j$ the subsequent state;

  \item or $I_i=-(p,q_j,q_k)$ is a \emph{decrementation instruction}
    with $p \in \{0,1\}$ a register, $q_j$ the subsequent state if
    register~$p$ contains~0, and $q_k$ the subsequent state otherwise.

\end{itemize}
A \emph{configuration} of $M$ is a triple $(q,m,n)$, with $q$ the
current state and $m,n \in \mathbb{N}$ the register contents.  We
write $(q_i,n_1,n_2) \Rightarrow_M (q_j,m_1,m_2)$ if one of the
following holds:
\begin{itemize}
	
	\item $I_i = +(p,q_j)$, $m_p = n_p +1$, and $m_{1-p} =
	n_{1-p}$;
	
	\item $I_i = -(p,q_j,q_k)$, $n_p=m_p=0$, and $m_{1-p} =
	n_{1-p}$;
	
	\item $I_i = -(p,q_k,q_j)$, $n_p > 0$, $m_p = n_p -1$, and
	$m_{1-p} = n_{1-p}$.
	
\end{itemize}
The \emph{computation} of $M$ on input $(n,m) \in \mathbb{N}^2$ is
the unique longest configuration sequence $(p_0,n_0,m_0)
\Rightarrow_M (p_1,n_1,m_1) \Rightarrow_M \cdots$ such that $p_0 =
q_0$, $n_0 = n$, and $m_0 = m$.
The halting problem for 2RMs is to decide, given a 2RM $M$, whether
its computation on input $(0,0)$ is finite (which implies that its
last state is $q_\ell$).

We convert a given 2RM $M$ into a labeled GF$^3$ KB
$(\Kmc,\{a\},\{b\})$, $\Kmc = (\Omc,\Dmc)$ and signature $\Sigma$ such that $M$ halts iff
$(\Kmc,\{a\},\{b\})$ is (non-)projectively GF$(\Sigma)$-separable iff
$(\Kmc,\{a\},\{b\})$ is (non-)projectively $\ALC(\Sigma)$-separable.
Let $M=(Q,P)$ with $Q = q_0,\dots,q_{\ell}$ and
$P = I_0,\dots,I_{\ell-1}$. We assume w.l.o.g.\ that $\ell \geq 1$ and
that if $I_i=-(p,q_j,q_k)$, then $q_j \neq q_k$. In $\Kmc$,
we use the following set of relation symbols:
\begin{itemize}

  \item a binary symbol $N$ connecting a configuration to
    its successor configuration;

  \item binary symbols $R_{1}$ and $R_{2}$ that represent the
    register contents via the length of paths;

  \item unary symbols $q_{0},\ldots,q_{\ell}$ representing the states of $M$;

  \item a unary symbol $S$ denoting points where a computation starts.

  \item a unary symbol $D$ used to represent that there is 
    some defect;

  \item binary symbols $D^{+}_{p},D_{p}^{-},D_{p}^{=}$ used to
    describe defects in incrementing, decrementing, and keeping
    register $p\in \{0,1\}$;

  \item ternary symbols
    $H_{1}^{+},H_{2}^{+},H_{1}^{-},H_{2}^{-},H_{1}^{=},H_{2}^{=}$ used
    as guards for existential quantifiers.

\end{itemize}
The signature $\Sigma$ consists of the symbols from the first four
points above.

We define the ontology $\Omc$ as the set of several GF$^3$
sentences.\footnote{The formulas that are not syntactically guarded
can easily be rewritten into such formulas.} The first sentence
initializes the starting configuration:
\[\forall x  (Sx \rightarrow (q_0x \wedge \neg \exists y \, R_0xy
\wedge \neg \exists y \, R_1xy))\]
Second, whenever $M$ is not in the final state, there is a
next configuration with the correctly updated state. For $0 \leq i <
\ell$, we include: 
$$ \begin{array}{rl} 
  \forall x (q_ix \rightarrow \exists y\, Nxy) & \\
  \forall x (q_{i}x \rightarrow \forall y (Nxy \rightarrow q_{j}y)) &
  \text{ if } I_i=+(p,q_j) \\[1mm] 
  \forall x ((q_{i}x \wedge \neg \exists y R_{p}xy) \rightarrow\forall
  y (Nxy \rightarrow q_{j}y)) & \text{ if } I_i=-(p,q_j,q_k) \\[1mm] 
  \forall x ((q_{i}x \wedge \exists y R_{p}xy)  \rightarrow \forall y
  (Nxy \rightarrow q_{k}y)) & \text{ if } I_i=-(p,q_j,q_k) \end{array} 
$$
Moreover, if $M$ is in the final state, there is no successor
configuration:
\[\forall x(q_\ell x\rightarrow \neg \exists y\,Nxy).\]
%
%
The next conjunct expresses that either $M$ does not halt or the
representation of the computation of $M$ contains a defect. It
crucially uses non-$\Sigma$ relation symbols.  It takes the shape of
\[ \forall x \, (Dx \rightarrow \exists y \, (Nxy \wedge \psi xy)) \]
where $\psi xy$ is the following disjunction which ensures that there
is a concrete defect ($D_p^+,D_p^-,D_p^=$) here or some defect ($D$)
in some successor state:
\[ \begin{array}{l}
D(y) \vee{} \\[2mm]
\displaystyle \bigvee_{I_i=+(p,q_j)} (q_ix \wedge q_jy \wedge (D^+_pxy
\vee D^=_{1-p}xy))\vee{}\\[6mm]
\displaystyle \bigvee_{I_i=-(p,q_j,q_k)} (q_ix \wedge q_ky \wedge
(D^-_pxy \vee D^=_{1-p}xy))\vee{} \\[6mm]
\displaystyle \bigvee_{I_i=-(p,q_j,q_k)} (q_ix \wedge q_jy \wedge
(D^=_pxy \vee D^=_{1-p}xy))
\end{array} \]
Finally, using the ternary symbols we make sure that the defects are
realized, for example, by taking:
\[ \begin{array}{l}
\forall x \forall y \, \big(D^+_pxy \rightarrow \\[1mm]
\hspace*{2mm} (\neg \exists z \, R_pyz \vee (\neg \exists z \, R_pxz
\wedge \exists z \, (R_pyz \wedge \exists x R_pzx))\vee{}\\[2mm]
\hspace*{3mm} \exists z (H_{1}^{+}xyz \wedge R_{p}xz \wedge \exists x
(H_{2}^{+}xzy \wedge R_pyx \wedge D^+_{p}zx)))\big).
\end{array} \]
Similar conjuncts implement the desired behaviour of $D^=_p$ and
$D^-_p$; since they are constructed analogously to the last three
lines above (but using guards $H^{-}_j$ and $H^{=}_j$), details are
omitted. 

Finally, we define a database $\Dmc$ by taking 
\[\Dmc = \{S(a),D(a),S(b)\}.\]
Lemmas~\ref{lem:gfundec1} and~\ref{lem:gfundec2} below establish
correctness of the reduction and thus Theorem~\ref{thm:main-gf}. 
\begin{lemma}\label{lem:gfundec1}
  If $M$ halts, then there is an $\ALC(\Sigma)$ concept that
  non-projectively separates $(\Kmc,\{a\},\{b\})$.
%
%
%
%
%
%
\end{lemma}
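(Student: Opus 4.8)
The plan is to read a single $\ALC(\Sigma)$-concept $\mathit{Comp}$ off the halting computation of $M$ and to take $\varphi=\neg\mathit{Comp}$ as the separating concept. Since $\mathit{Comp}$ will mention only the state names $q_0,\dots,q_\ell$, the successor role $N$, and the register roles $R_p$, all of which lie in $\Sigma$, the separation is automatically non-projective. Concretely, let $(q_0,0,0)=c_0\Rightarrow_M\cdots\Rightarrow_M c_T$ with $c_t=(q_{i_t},n_t,m_t)$ and $i_T=\ell$ be the finite computation of $M$ on input $(0,0)$. For each $t$ let $\mathit{Reg}_t$ be the $\ALC(\Sigma)$-concept $(\exists R_0^{\,n_t}.\top)\sqcap(\neg\exists R_0^{\,n_t+1}.\top)\sqcap(\exists R_1^{\,m_t}.\top)\sqcap(\neg\exists R_1^{\,m_t+1}.\top)$, which pins the register contents down as maximal $R_p$-path lengths, and define $\Phi_T=q_\ell\sqcap\mathit{Reg}_T\sqcap\neg\exists N.\top$ and, for $t<T$, $\Phi_t=q_{i_t}\sqcap\mathit{Reg}_t\sqcap\exists N.\top\sqcap\forall N.\Phi_{t+1}$; finally set $\mathit{Comp}=\Phi_0$. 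As $T$ and all $n_t,m_t$ are finite, $\mathit{Comp}$ is a well-defined finite concept.

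The easy half is $\Kmc\not\models\varphi(b)$, for which I would exhibit a model $\Amf$ of $\Kmc$ with $b^\Amf\in\mathit{Comp}^\Amf$. Take the disjoint union of a faithful, non-branching encoding of the run $c_0,\dots,c_T$ rooted at $b$ — each configuration node carrying its state, exactly one $N$-successor (none at the $q_\ell$-node), and $R_p$-chains of the prescribed lengths — together with an arbitrary defective computation rooted at $a$ that realizes one concrete register defect and thereby discharges $D(a)$. All sentences of $\Omc$ then hold: the state-transition and final-state axioms by construction, and the $D$- and defect-axioms vacuously on the $b$-part and by the chosen defect on the $a$-part. By inspection $b^\Amf$ satisfies $\Phi_0$, so $\varphi$ does not hold at $b$ in every model.

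The substantive half is $\Kmc\models\varphi(a)$, i.e.\ $a\notin\mathit{Comp}^\Amf$ in every model $\Amf$. Assuming otherwise, I propagate the defect marker along the run. Since $D(a)$ holds and $a$ satisfies $\Phi_0$, the axiom $\forall x\,(Dx\to\exists y\,(Nxy\wedge\psi xy))$ yields an $N$-successor $y$ with $\psi(a,y)$, and by $\forall N.\Phi_1$ this $y$ already satisfies $\Phi_1$. The key point is that on the transition $a\to y$ the states match the intended run and the register contents are exactly $c_0\to c_1$, so every ``concrete defect'' disjunct of $\psi$ (those naming $D^+_p$, $D^-_p$, or $D^=_p$) is excluded and the only surviving disjunct is $D(y)$. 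Iterating, $D$ is forced along the run to a node $v_T$ with $\Phi_T\sqcap D$, where $q_\ell(v_T)$ together with $\neg\exists N.\top$ contradicts the $N$-successor demanded by $Dv_T$. This contradiction gives $a\notin\mathit{Comp}^\Amf$, completing the separation.

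The main obstacle is precisely the exclusion step just invoked: I must prove that whenever the $R_p$-path lengths at two $N$-adjacent nodes realize the intended increment, decrement, or copy relation, the corresponding defect role cannot hold. This is where the recursive defect axioms are unwound. For $D^+_p$, for instance, one shows by induction on the register value that, with the $R_p$-length at $y$ equal to that at $x$ plus one, each of the three disjuncts in the consequent of the $D^+_p$-axiom fails — the two base disjuncts directly, and the recursive disjunct by the induction hypothesis applied to the $R_p$-successors reached through the $H^+$-guards — so that $D^+_p xy$ is unsatisfiable; analogous inductions handle $D^-_p$ and $D^=_p$. Establishing these ``no defect on a correct transition'' lemmas, while checking that the non-$\Sigma$ guard relations $H^{\pm}$ cannot be abused to short-circuit the recursion, is the technical heart of the argument.
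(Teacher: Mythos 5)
Your overall strategy --- encode the halting run as an $\ALC(\Sigma)$-concept and separate by its negation --- is exactly the paper's, but the concept $\mathit{Comp}$ you define is strictly weaker than the one the paper uses, and the gap sits precisely in the step you yourself call the technical heart. The paper takes $C$ to be the concept describing the finite acyclic structure $\Amf$ that encodes the run \emph{up to $\ALC(\Sigma)$-bisimulation}; such a characteristic concept pins down, at \emph{every} node reachable from the root, the complete $\Sigma$-atom information (including \emph{negated} state atoms) and, via conjuncts $\forall R.(\bigsqcup \dots)$, the exact shape of all successors. Your $\Phi_t$ omits both kinds of information, and each omission admits a countermodel to $\Kmc\models\neg\mathit{Comp}(a)$. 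First, $\Phi_t$ contains $q_{i_t}$ but no conjuncts $\neg q_j$, so a model may decorate a run node with an additional spurious state $q_{i'}$: the ontology's transition axioms are implications and tolerate this, merely adding the corresponding spurious state label at the $N$-successor. But then a $\psi$-disjunct belonging to the \emph{wrong} instruction $I_{i'}$ becomes available, and its defect atom is \emph{genuinely} satisfiable because the actual register update need not match $I_{i'}$ --- for instance, if $I_{i'}=+(p',q_{j'})$ and register $p'$ is empty at both nodes, the first disjunct $\neg\exists z\, R_{p'}yz$ of the $D^+_{p'}$-axiom holds outright. Thus $\psi(x,y)$ is discharged without $D(y)$ and without any defect in the encoded run, and $D$ stops propagating.

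Second, even if states were pinned uniquely, your $\mathit{Reg}_t$ fixes only the \emph{maximal} $R_p$-path length at configuration nodes; inner chain nodes remain unconstrained and may carry extra dead-end $R_p$-successors. This breaks the induction you sketch for the recursive disjunct: in $D^+_p(x,y)$ with register values $n\geq 1$ at $x$ and $n+1$ at $y$, a model may choose $z$ to be any $R_p$-successor of $x$ and $x'$ to be a fresh dead-end $R_p$-successor of $y$ (consistent with $\neg\exists R_p^{\,n+2}.\top$ at $y$), whereupon $D^+_p(z,x')$ is satisfied via the disjunct asserting that its second argument has no $R_p$-successor; the guards $H^+_1,H^+_2$ are non-$\Sigma$ symbols and can be interpreted freely, so nothing blocks this. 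Your appeal to ``the induction hypothesis applied to the $R_p$-successors'' presupposes that those successors again carry exactly pinned values, which $\mathit{Comp}$ does not enforce. The repair is the paper's construction: the characteristic concept of $\Amf$ contains, at each chain node, a conjunct $\forall R_p.\chi$ forcing every $R_p$-successor to be bisimilar to the unique next chain element, so remaining path lengths are exact all the way down and state labels are exclusive; with that strengthening your exclusion lemmas and the terminal contradiction at $q_\ell$ go through, and your ``easy half'' (satisfiability of $\Kmc\cup\{\mathit{Comp}(b)\}$, with the $D$-obligation at $a$ discharged by a defective run) is unaffected.
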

%
%
%
%
%
%
\begin{proof}\ The idea is that the separating $\ALC(\Sigma)$ concept
  describes the halting computation of $M$, up to
  $\ALC(\Sigma)$-bisimulations. More precisely, assume that $M$ halts.
  We define an $\ALC(\Sigma)$ concept $C$ such that $\Kmc\models\neg
  C(a)$, but $\Kmc\not\models\neg C(b)$.  Intuitively, $C$ represents
  the computation of $M$ on input $(0,0)$, that is: if the computation
  is $(q_0,n_0,m_0),\dots,(q_k,n_k,m_k)$, then there is an $N$-path of
  length $k$ (but not longer) such that any object reachable in $i
  \leq k$ steps from the beginning of the path is labeled with $q_i$,
  has an outgoing $R_0$-path of length $n_i$ and no longer outgoing
  $R_0$-path, and likewise for $R_1$ and $m_i$. In more detail,
  consider the $\Sigma$-structure $\Amf$ with
  \begin{align*}
    \text{dom}(\Amf) = \{0,\ldots,k\} \cup {} & \{ a_{j}^{i} \mid
    0<i\leq k, 0<j<n_{i}\} \cup{} \\ 
    & \{ b_{j}^{i} \mid 0<i\leq k, 0<j< m_{i}\}
  \end{align*}
  in which
  $$ \begin{array}{rcl}
    N^{\Amf} &=& \{ (i,i+1) \mid i<k\} \\[1mm]
    R_{1}^{\Amf} &=& \bigcup_{i\leq k}\{ (i,a_{1}^{i}),
    (a_{1}^{i},a_{2}^{i}),\ldots,(a_{n_{i}-2}^{i},a_{n_{i}-1}^{i})\}
    \\[1mm]
    R_{2}^{\Amf} &=& \bigcup_{i\leq k}\{ (i,b_{1}^{i}),
    (b_{1}^{i},b_{2}^{i}),\ldots,(b_{m_{i}-2}^{i},b_{m_{i}-1}^{i})\}
    \\[1mm]
    S^{\Amf} &=& \{0\} \\[1mm]
	%
	%
    q^{\Amf} &=& \{ i \mid q_{i}=q\} \text{ for any } q\in Q.
  \end{array} $$
  Then let $C$ be the $\ALC(\Sigma)$ concept that describes $\Amf$
  from the point of $0$ up to $\ALC(\Sigma)$-bisimulations. Clearly,
  $\Kmc\cup\{C(b)\}$ is satisfiable. However, $\Kmc\cup\{C(a)\}$ is
  unsatisfiable since the enforced computation does not contain a
  defect and cannot be extended to have one. In particular, there are
  no $N$-paths of length $>k$ in any model of $\Kmc\cup\{C(a)\}$ and
  there are no defects in register updates in any model of
  $\Kmc\cup\{C(a)\}$.
%
%
\end{proof}

The following lemma implies that if $M$ does not halt, then
$(\Kmc,\{a\},\{b\})$ is neither projectively $\Lmc(\Sigma)$-separable
nor non-projectively $\Lmc(\Sigma)$-separable for $\Lmc=\text{GF}$ and
in fact for every logic \Lmc between GF and FO.

\begin{lemma} \label{lem:gfundec2}
  If $M$ does not halt, then for every model $\Amf$ of \Kmc, there is
  a model $\Bmf$ of \Kmc such that $(\Amf,b^\Amf)$ is
  $\Gamma$-ismorphic to $(\Bmf,a^\Bmf)$ where $\Gamma$ consists of all
  symbols except $\text{sig}(\Omc)\setminus \Sigma$.
\end{lemma}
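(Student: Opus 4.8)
The plan is to prove the lemma by a direct construction: given an arbitrary model $\Amf$ of $\Kmc$, I build $\Bmf$ as the structure that shares the entire $\Gamma$-reduct of $\Amf$ (the same domain and the same interpretation of every symbol in $\Gamma$, in particular of $N,R_1,R_2,S$ and all state symbols $q_i$), reinterprets the constant $a$ so that $a^\Bmf=b^\Amf$, and differs from $\Amf$ only in the non-$\Gamma$ symbols, namely $D$, the defect edges $D^+_p,D^-_p,D^=_p$, and the guards $H^\bullet_j$. Since these are precisely the symbols in $\text{sig}(\Omc)\setminus\Sigma$, the identity map on $\text{dom}(\Amf)$ is automatically a $\Gamma$-isomorphism from $(\Amf,b^\Amf)$ to $(\Bmf,a^\Bmf)$, and the only real obligation is to define $D,D^\bullet_p,H^\bullet_j$ so that $\Bmf\models\Omc$ while $b^\Amf\in D^\Bmf$. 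The $\Sigma$-only sentences of $\Omc$ (initialization, state update, and the $q_\ell$ sentence) are inherited unchanged from $\Amf$, so after discarding $\Amf$'s old interpretation of $D$ all that remains to check are the $D$-sentence and the defect-edge sentences.

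To define $D^\Bmf$ I follow a maximal $N$-path $v_0=b^\Amf,v_1,v_2,\dots$ in $\Amf$, choosing one $N$-successor at each step. Because $b^\Amf\in S^\Amf$ and $\Amf\models\Omc$, the node $v_0$ correctly encodes the initial configuration $(q_0,0,0)$, and the state label of every $v_{i+1}$ is forced by the update sentences from the state of $v_i$ together with whether $v_i$ has an $R_p$-successor. Call the step $v_i\to v_{i+1}$ \emph{faithful} if, in addition, the register contents encoded at $v_{i+1}$ are the correct update (prescribed by the instruction at $v_i$'s state) of those at $v_i$; as long as all steps so far are faithful, both the states and the register contents along the path agree with the real computation of $M$.

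The case distinction is driven by non-halting. If the chosen $N$-path is infinite, I set $D^\Bmf=\{v_0,v_1,\dots\}$ and satisfy the $D$-sentence at every $v_i$ via its $D(y)$-disjunct (the successor $v_{i+1}$ is again in $D$), leaving all defect edges empty so that their sentences hold vacuously. If the path is finite, its last node has no $N$-successor and hence must carry state $q_\ell$; since $M$ does not halt, the real computation never reaches $q_\ell$, so the path cannot be faithful throughout and there is a \emph{first} defective step $v_i\to v_{i+1}$. Up to $v_i$ everything is faithful, so $v_i$ and $v_{i+1}$ carry exactly the two states demanded by the matching instruction and only the register update is wrong. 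I then put $D^\Bmf=\{v_0,\dots,v_i\}$, satisfy the $D$-sentence at $v_0,\dots,v_{i-1}$ via $D(y)$ and at $v_i$ via the concrete-defect disjunct for that instruction, witnessed by a single defect edge $D^\bullet_p(v_i,v_{i+1})$. All nodes outside $D^\Bmf$ make the $D$-sentence vacuous, and $b^\Amf=a^\Bmf\in D^\Bmf$ in either case, so $\Bmf$ is a model of $\Kmc$.

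The main obstacle is the last verification in the finite case: realizing the chosen defect edge $D^\bullet_p(v_i,v_{i+1})$ together with the guards $H^\bullet_j$ so that the recursive defect-edge sentences are satisfied. This amounts to showing that a genuinely incorrect register update can always be certified by descending the two $R_p$-paths out of $v_i$ and $v_{i+1}$ in lockstep until the length mismatch is exposed, matching exactly the three alternatives (the two base cases ``path too short'' and ``path too long'', and the recursive ``descend one level'') in the definition of $D^+_p$ and its analogues for $D^-_p,D^=_p$. Establishing this requires an induction on register-path length and a careful check that the intermediate defect edges and guards can be placed consistently along possibly branching $R_p$-structure; this is the technical heart of the reduction and the point where the precise shape of the defect sentences is used.
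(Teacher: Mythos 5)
Your proposal is correct and follows essentially the same route as the paper's proof: both keep the $\Gamma$-reduct of $\Amf$ intact, reinterpret $a^\Bmf=b^\Amf$, redefine only $D$, the defect relations $D^+_p,D^-_p,D^=_p$ and the guards, and propagate $D$ along an $N$-chain that either continues forever (satisfying the $D$-sentence through the $D(y)$ disjunct) or terminates at a defective step whose defect edge is then realized, with non-halting of $M$ being exactly what rules out a faithful chain ending in $q_\ell$. The only organizational difference is that you fix an arbitrary maximal $N$-path up front and case-split on whether it is infinite, whereas the paper grows the chain greedily and stops as soon as some $N$-successor is defective; both versions leave the recursive placement of the defect edges and guard triples at sketch level, and you in fact identify that remaining obligation more explicitly than the paper does.
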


\noindent \begin{proof}
  Let $\Amf$ be a model of \Kmc. We obtain $\Bmf$ from \Amf by
  re-interpreting $a^\Bmf=b^\Amf$ and inductively defining the
  extensions of the symbols from 
  \[\text{sig}(\Omc)\setminus\Sigma =
  \{D,D^+_p,D^-_p,D^=_p,H_1^+,H_2^+,H_1^-,H_2^-,H_1^=,H_2^=\}.\]
  We start with $D^\Bmf = \{a^\Bmf\}$ and $X^\Bmf=\emptyset$ for all
  other symbols $X$ from $\text{sig}(\Omc)\setminus \Sigma$. Then,
  whenever $d\in D^\Bmf$ we distinguish two cases: 
  \begin{itemize}

    \item If there is an $N$-successor $e$ of $d$ such that the
      counters below $d$ and $e$ are not correctly updated with
      respect to the states at $d,e$, set the extensions of the
      symbols in
      $D^+_p,D^-_p,D^=_p,H_1^+,H_2^+,H_1^-,H_2^-,H_1^=,H_2^=$ so as to
      represent the defect and finish the construction of $\Bmf$.

    \item Otherwise, choose an $N$-successor $e$ of $d$ and add
      $e$ to $D^\Bmf$.

  \end{itemize}
  Note that, since $M$ does not halt, we can always find such an
  $N$-successor as in the second item. 
\end{proof}

%
%

\bigskip Let us now look at $\mathcal{ALCFIO}$.

\begin{theorem}\label{thm:alcfio-main}
  Projective and non-projective \Lmc-separability with
  signature are undecidable for every logic \Lmc that contains
  $\mathcal{ALCFIO}$.
\end{theorem}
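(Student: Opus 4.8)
The plan is to reduce from the (projective) conservative extension problem for $\mathcal{ALCFIO}$, mimicking the strategy already used for GF in Theorem~\ref{thm:main-gf} rather than the relativization argument from Section~\ref{sec:alciweak}. By \cite{DBLP:conf/ijcai/LutzWW07}, both conservative extensions and projective conservative extensions are undecidable for every logic $\Lmc$ extending $\mathcal{ALCFIO}$, and this is witnessed by a reduction from a tiling or halting problem in which non-conservativity corresponds to the \emph{existence} of a certain computation or tiling. As the excerpt itself remarks, the obstruction to a direct reduction here is the presence of nominals/constants, which blocks the relativization that worked for plain \ALCI. I would therefore adapt the underlying undecidability source directly, building a labeled $\mathcal{ALCFIO}$-KB $(\Kmc,\{a\},\{b\})$ together with a signature $\Sigma$ so that $M$ halts (equivalently, the relevant computation/tiling exists) if and only if $(\Kmc,\{a\},\{b\})$ is separable.

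The structural template is exactly that of Lemmas~\ref{lem:gfundec1} and~\ref{lem:gfundec2}. I would put $\Dmc=\{S(a),D(a),S(b)\}$ (or the $\mathcal{ALCFIO}$-analogue), choose $\Sigma$ to be the symbols describing the intended computation/tiling, and place all ``defect'' machinery outside $\Sigma$. The positive side (the analogue of Lemma~\ref{lem:gfundec1}) uses functionality assertions $\top\sqsubseteq({\leqslant}1\ r)$ and nominals to force the $\Sigma$-structure below $a$ to encode a genuine defect-free computation whose existence is equivalent to halting; the separating concept is then the $\mathcal{ALCFIO}(\Sigma)$-concept describing that structure up to the appropriate equivalence. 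The negative side (the analogue of Lemma~\ref{lem:gfundec2}) shows that if $M$ does not halt, then for every model $\Amf$ of $\Kmc$ one can re-interpret the non-$\Sigma$ symbols so as to obtain a model $\Bmf$ in which $b$ and $a$ agree on everything expressible in $\Sigma$ (a $\Gamma$-isomorphism for $\Gamma$ comprising all symbols except $\mathrm{sig}(\Omc)\setminus\Sigma$), so that no $\Sigma$-formula can separate them; since this argument is phrased in terms of a structural isomorphism on the shared symbols, it covers projective as well as non-projective separation and indeed every $\Lmc$ between $\mathcal{ALCFIO}$ and FO.

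The main obstacle I expect is the positive direction under the functionality-plus-nominals regime. With only \ALCI (as in GF) one enforces the counters via the lengths of $R_p$-paths and detects defects with auxiliary guarded relations; with $\mathcal{ALCFIO}$ one must instead exploit functionality of roles and the singleton semantics of nominals to pin down configurations rigidly, and one has to verify that a defect-free computation is forced to be infinite exactly when $M$ does not halt. Concretely, the delicate point is ensuring that the functionality assertions do not accidentally make $a$ and $b$ $\Sigma$-indistinguishable even in the halting case, i.e.\ that the non-$\Sigma$ defect symbols retain enough freedom to be ``switched off'' below $b$ while being forced below $a$. Because nominals refer to named constants, I would take care that the constants used in nominals lie in $\Sigma$ (or are otherwise harmless to the re-interpretation argument) so that the model surgery in the negative direction goes through without disturbing the interpretation of the named points. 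Once the reduction is set up, correctness splits cleanly into the two lemmas above, each proved by the same pattern as the GF case, and this yields undecidability of both projective and non-projective $\Lmc$-separability with signature for every $\Lmc\supseteq\mathcal{ALCFIO}$.
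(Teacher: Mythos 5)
Your proposal follows essentially the same route as the paper: instead of reducing from conservative extensions or relativizing, the paper likewise directly adapts the undecidability source, with your exact two-lemma structure --- a separating $\Sigma$-concept (the negation of a description of the solution) when the computation/tiling exists, and a $\Gamma$-isomorphism surgery on the non-$\Sigma$ symbols $\{u,A_1,Q,P\}$ when it does not, covering the projective and non-projective cases and every $\Lmc$ up to FO. The paper instantiates your plan with the finite rectangle tiling problem, $\Dmc=\{A_1(a),Y(b)\}$, and resolves the `switch' you flag as the delicate point via the CIs $Q\sqsubseteq \exists r_x.Q\sqcup\exists r_y.Q\sqcup(\exists r_x.\exists r_y.P\sqcap\exists r_y.\exists r_x.\neg P)$ and $A_1\sqcap A_2\sqsubseteq \exists u.(\{o\}\sqcap Q)$, so that asserting the candidate concept $D=A_2\sqcap\exists u.E$ at $a$ activates the global constraint at the nominal $o$ while it can be kept inactive at $b$.
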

The
proof is by a reduction of the following undecidable tiling problem.
%
\begin{definition}
  A \emph{tiling system} $S=(T,H,V,R,L,T,B)$ consists of a finite set
  $T$ of \emph{tiles}, horizontal and vertical \emph{matching
    relations} $H,V \subseteq T \times T$, and sets $R,L,T,B \subseteq
  T$ of \emph{right} tiles, \emph{left} tiles, \emph{top} tiles, and
  \emph{bottom} tiles. A \emph{solution} to $S$ is a triple $(n,m,\tau)$
  where $n,m \in \mathbb{N}$ and $\tau: \{0,\ldots,n\} \times \{0,\ldots,m\} \rightarrow T$ such
  that the following hold:
\begin{enumerate}

\item $(\tau(i,j),\tau(i+1,j)) \in H$, for all $i<n$ and $j \leq m$;

\item $(\tau(i,j),\tau(i,j+1)) \in V$, for all $i\leq n$ and $j<m$;

\item $\tau(0,j) \in L$ and $\tau(n,j) \in R$, for all $j \leq m$;

\item $\tau(i,0) \in B$ and $\tau(i,m) \in T$, for all $i \leq n$.

\end{enumerate}
\vspace*{-\medskipamount}
\end{definition}
We show how to convert a tiling system $S$ into a labeled
$\mathcal{ALCFIO}$-KB $(\Kmc,P,N)$ and signature $\Sigma$ such that
$S$ has a solution iff $(\Kmc,P,N)$ is
$\mathcal{ALCFIO}(\Sigma)$-separable iff $(\Kmc,P,N)$ is projectively
$\mathcal{ALCFIO}(\Sigma)$-separable.

Let $S=(T,H,V,R,L,T,B)$ be a tiling system. Define an ontology \Omc
that consists of the following statements:
\begin{itemize}

\item The roles $r_x$, $r_y$, and their inverses are functional:
  $$
  \top \sqsubseteq (\leqslant 1\ r), \text{ for } r \in
  \{r_x,r_y,r_x^-,r_y^-\}
  $$

\item Every grid node is labeled with exactly
  one tile and the matching conditions are satisfied:
  $$
  \begin{array}{rcl}
  \top &\sqsubseteq& \bigsqcup_{t\in T}(t \sqcap \bigsqcap_{t' \in T,\; t'\not=t} \neg t') \\[4mm]
  \top &\sqsubseteq& \bigsqcap_{t\in T}(t \rightarrow (\bigsqcup_{(t,t') \in H} \forall r_x . t' \sqcap \bigsqcup_{(t,t') \in V} \forall r_y . t'))
  \end{array}
  $$

\item The concepts \mn{left}, \mn{right}, \mn{top}, \mn{bottom} mark the borders of
  the grid in the expected way:
$$
\begin{array}{rcl}
  \mn{right} &\sqsubseteq& \neg \exists r_{x}.\top \sqcap \forall r_{y}.\mn{right} \sqcap \forall r_{y}^{-1}.\mn{right} \\
  \neg \mn{right} &\sqsubseteq& \exists r_{x}.\top \\
\end{array}
$$
and similarly for \mn{left}, \mn{top}, and \mn{bottom}.

\item The individual name $o$ marks the origin:
  $$
    \{o\} \sqsubseteq \mn{left} \sqcap \mn{bottom}.
  $$

\item there is no infinite outgoing $r_x$/$r_y$-path starting at
  $o$ and grid cells close in the part of models 
  reachable from $o$:
  $$
\begin{array}{rcl}
Q &\sqsubseteq& \exists r_{x}. Q \sqcup \exists r_{y}.Q \sqcup
(\exists r_{x}.\exists r_{y}.P \sqcap \exists r_{y}. \exists r_{x}.\neg P) \\
A_1 \sqcap A_2 &\sqsubseteq& \exists u .(\{o\} \sqcap Q)
\end{array}
  $$
\end{itemize}
The final item deserves some further explanation. It is to be read as
follows: the stated properties hold in a model \Amf whenever \Amf can
\emph{not} be extended to a model of the upper CI that makes true $Q$
at $o$. In conjunction with the database, the second CI is a switch
that will allow us to sometimes require that $Q$ is made true at $o$.

Set $\Sigma = T\cup \{ r_x,r_y, \mn{left}, \mn{right}, \mn{top}, \mn{bottom}
\}$ and consider the labeled KB $(\Kmc,\{a\},\{b\})$ where
$\Kmc=(\Omc,\Dmc)$ with $\Dmc = \{ A_1(a), Y(b) \}$ with $Y$
a fresh (dummy) concept name.

\begin{lemma}
  If  $S$ has a solution, then there is an $\mathcal{ALCIO}(\Sigma)$
  concept that non-projectively separates $(\Kmc,\{a\},\{b\})$.
\end{lemma}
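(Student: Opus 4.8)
The plan is to follow the pattern of Lemma~\ref{lem:gfundec1}: from a solution of $S$ I would build an $\mathcal{ALCIO}(\Sigma)$ concept that pins down the finite solution grid up to $\mathcal{ALCI}(\Sigma)$-bisimulation and take its negation as the separating concept. Concretely, let $(n,m,\tau)$ be a solution. First I would fix the finite $\Sigma$-structure $\Amf_\tau$ with domain $\{0,\dots,n\}\times\{0,\dots,m\}$ in which $r_x$ and $r_y$ are the horizontal and vertical successor edges, each node $(i,j)$ carries the tile $\tau(i,j)$, and $\mn{left},\mn{right},\mn{top},\mn{bottom}$ mark the four boundaries. Since $T$, $r_x$, $r_y$ and the border names all lie in $\Sigma$ and $\Amf_\tau$ is finite, its origin is characterized up to $\mathcal{ALCI}(\Sigma)$-bisimulation by a single concept $C_\tau$ of bounded modal depth (here Lemma~\ref{lem:equivalence} is used); anchoring the origin to the nominal $\{o\}$ makes $C_\tau$ an $\mathcal{ALCIO}(\Sigma)$ concept. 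The separating concept will be $\neg C_\tau$, and I then have to verify the two conditions of Definition~\ref{def:separa} for $P=\{a\}$ and $N=\{b\}$.

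For the negative example I would show that $\Kmc\cup\{C_\tau(b)\}$ is satisfiable, so that $\Kmc\not\models\neg C_\tau(b)$: extend $\Amf_\tau$ to a model $\Amf$ of $\Omc$ in which $o$ is interpreted as the origin of this properly tiled, closing grid and $b$ sits at that origin, leaving the switch off by setting $A_2^{\Amf}=\emptyset$. Because the grid from $o$ is finite and every cell closes, there is neither an infinite outgoing $r_x/r_y$-path nor a non-closing cell, so the $Q$-inclusion is satisfiable with $o\notin Q$ and no defect symbols are needed; the functionality axioms hold since the successor edges are functional by construction. The real work is the positive example: I would argue that $\Kmc\cup\{C_\tau(a)\}$ is unsatisfiable, hence $\Kmc\models\neg C_\tau(a)$. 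This is where $A_1(a)$ and the gadget $A_1\sqcap A_2\sqsubseteq\exists u.(\{o\}\sqcap Q)$ must be brought to bear: forcing $a$ onto a clean, finite solution grid should fix the $o$-anchored part of every model to a properly closing grid, which by the $Q$-inclusion makes $Q$ unsatisfiable at $o$ while the switch simultaneously demands $o\in Q$, yielding the contradiction — the role played by $D(a)$ and the defect machinery in the GF proof.

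The main obstacle is precisely this positive direction. In contrast to the GF case, where $D(a)$ directly forces an infinite or defective $N$-chain \emph{at $a$}, here the $Q$-gadget is anchored at the fixed nominal $o$ and is activated only through the $A_1\sqcap A_2$ switch, so $a$'s local grid and the checked grid at $o$ are a priori different objects. The delicate step is therefore to show that the nominal $o$, together with the functionality of $r_x,r_y,r_x^-,r_y^-$ and the border inclusions, forces the part of any model reachable from $o$ to coincide (up to the relevant bisimulation) with a single proper grid, so that realizing $C_\tau$ at $a$ cannot be reconciled with leaving the switch unthrown; pinning down exactly how the switch is necessarily activated, and how the global anchor $o$ interacts with the local description $C_\tau$, is the crux. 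Once this is settled, the remaining bisimulation bookkeeping and the verification that $\neg C_\tau$ meets Definition~\ref{def:separa} are routine.
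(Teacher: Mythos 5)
The decisive gap is exactly the step you flagged as ``the crux'' and left unresolved --- and it cannot be resolved with your choice of separating concept. Your $C_\tau$ mentions only symbols from $\Sigma$ plus the nominal $o$; in particular it does not mention $A_2$. But then $\Kmc\cup\{C_\tau(a)\}$ is satisfiable: take a model in which $a$ is identified with $o$ and sits at the origin of the proper solution grid, and interpret $A_2$ as empty --- the switch $A_1\sqcap A_2\sqsubseteq \exists u.(\{o\}\sqcap Q)$ is then vacuous, nothing forces $Q$ at $o$, and all other CIs hold in the grid. Hence $\Kmc\not\models\neg C_\tau(a)$, violating Condition~1 of Definition~\ref{def:separa}. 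In fact \emph{no} concept over $\Sigma\cup\{o\}$ can separate, whether or not $S$ has a solution: from any model $\Amf$ of $\Kmc$ one obtains another model $\Bmf$ by re-interpreting $a^\Bmf=b^\Amf$, adding $b^\Amf$ to $A_1$, and emptying $A_2$; since $A_1,A_2\notin\Sigma$ the $\Sigma\cup\{o\}$-reduct is unchanged, so $\Kmc\models\varphi(a)$ implies $\Kmc\models\varphi(b)$ for every such $\varphi$. The paper's separator is $\neg D$ with $D=A_2\sqcap\exists u.E$: the conjunct $A_2$ is what throws the switch at $a$ (where $A_1(a)$ holds by the database), and $\exists u$ is what transports the grid description $E$ from $a$ to the nominal $o$. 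So the separating concept must itself use the KB-symbols $A_2$ and $u$ outside $\Sigma$; your plan to make do with a pure $\Sigma$-concept anchored at $\{o\}$ is structurally unable to activate the gadget.

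There is a second, independent failure: pinning down $\Amf_\tau$ ``up to $\mathcal{ALCI}(\Sigma)$-bisimulation'' cannot enforce a grid, because cell closure is not bisimulation-invariant. The tree unfolding of the solution grid satisfies all functionality assertions (each node has at most one $r$-successor and exactly one incoming edge), carries the nominal at its root, and is $\mathcal{ALCI}(\Sigma)$-bisimilar to the grid --- yet no cell closes there, and $Q$ \emph{can} be made true at $o$ in it via the disjunct $\exists r_x.\exists r_y.P\sqcap\exists r_y.\exists r_x.\neg P$. So even with the switch thrown, your claimed contradiction at $o$ evaporates. The paper does not use a characteristic concept at all: $E$ conjoins $\{o\}\sqcap\forall r_x^n.\mn{right}\sqcap\forall r_y^m.\mn{top}$ with, for every word $w$ of bounded $r_x$- and $r_y$-counts, the concept $\exists(w\cdot r_xr_yr_x^-r_y^-\cdot\overleftarrow{w}).\{o\}$, i.e., paths that walk out along $w$, around one cell, and \emph{return to the nominal}; together with functionality of $r_x,r_y,r_x^-,r_y^-$ this forces every cell to genuinely close. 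This use of the nominal as a return anchor, rather than mere anchoring of the root, is the essential trick missing from your sketch. (A minor additional point: a single $\mathcal{ALCI}$-concept of bounded modal depth characterizes a pointed structure only up to bounded-depth bisimulation, so Lemma~\ref{lem:equivalence} does not yield the $C_\tau$ you posit; but this is moot given the above.)
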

\begin{proof} We design the $\mathcal{ALCI}(\Sigma)$ concept $C=\neg
  D$ so that any model of $D$ and \Omc, even without the CIs from the
  last item, includes a properly tiled $n \times m$-grid with lower
  left corner $o$.
  
  For every word $w \in \{r_x,r_y\}^*$, denote by $\overleftarrow{w}$
  the word that is obtained by reversing $w$ and then adding
  $\cdot^{-}$ to each symbol. Let $|w|_r$ denote the number of
  occurrences of the symbol $r$ in $w$.  Now, $D=A_2 \sqcap
  \exists u . E$ where   $E$ is the conjunction
  of
$$
  \{o\} \sqcap \forall r_x^n . \mn{right} \sqcap \forall r_y^m . \mn{top}
$$
and for every $w \in \{r_x,r_y\}^*$ such that $|w|_{r_x} < n$ and
$|w|_{r_y} < m$, the concept 
$$ 
\exists (w \cdot r_x r_y r_x^- r_y^- \cdot \overleftarrow{w}) . \{ o
\},
$$
where $\exists w . F$ abbreviates
$\exists r_1 . \cdots \exists r_k .  F$ if $w=r_1 \cdots r_k$.  It is
readily checked that $E$ (and thus $D$) indeed enforces a properly
tiled grid as announced. Then, due to the CIs in the last item,
$\Kmc\cup\{D(a)\}$ is unsatisfiable: Any model $\Amf$ has to satisfy 
$a^\Amf \in A_1^\Amf$ since $A_1(a)\in \Dmc$ 
and $a^\Amf\in A_2^\Amf$, due to the assertion $D(a)$. Hence the last
CIs become `active', which is in conflict to the fact that
the model enforced by $D$ contains neither an infinite
$r_x$/$r_y$-path nor a non-closing grid-cell. 
Thus, $\Kmc\models C(a)$ as required. 

Now for $\Kmc \not\models C(b)$. We find a model \Amf of \Kmc with
$b^\Amf \in D^\Amf$ since all CIs in \Omc except this from the last
item are satisfied by the grid enforced by $D$ and the CIs in the last
item can be made `inactive' by making $A_1$ false at
$b^\Amf$. 
\end{proof}
The following lemma implies that if $S$ has no solution, then
$(\Kmc,\{a\},\{b\})$ is neither projectively $\Lmc(\Sigma)$-separable
nor non-projectively $\Lmc(\Sigma)$-separable for
$\Lmc=\mathcal{ALCIO}$ and in fact for every logic \Lmc between
\ALCIO and $\text{FO}$.
\begin{lemma}
  If $S$ has no solution, then for every model \Amf of \Kmc, there is
  a model \Bmf of \Kmc such that $(\Amf,b^\Amf)$ is
  $\Gamma$-isomorphic to $(\Bmf,a^\Amf)$ where $\Gamma$
  consists of all symbols except $\{ u, A_1,Q,  P\}$.
\end{lemma}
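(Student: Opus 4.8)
The plan is to mirror the proof of Lemma~\ref{lem:gfundec2}. Given a model $\Amf$ of $\Kmc$, I would build $\Bmf$ on the \emph{same} domain, leave the interpretation of every symbol of $\Gamma$ untouched, and re-interpret only the four symbols $u,A_1,Q,P$ together with the example constant $a$, which I reset to $a^\Bmf:=b^\Amf$. Because the $\Gamma$-reduct is preserved, the identity map is a $\Gamma$-isomorphism sending $b^\Amf$ to $a^\Bmf$, so $(\Amf,b^\Amf)$ is $\Gamma$-isomorphic to $(\Bmf,a^\Bmf)$ by construction. It then remains to choose $u^\Bmf,A_1^\Bmf,Q^\Bmf,P^\Bmf$ so that $\Bmf\models\Kmc$. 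I set $A_1^\Bmf:=\{a^\Bmf\}$, which is forced by $A_1(a)\in\Dmc$ after moving $a$, and which makes the antecedent $A_1\sqcap A_2$ of the switch CI true at most at $a^\Bmf=b^\Amf$. The assertion $Y(b)$ and every CI built only from $\Gamma$-symbols (functionality of $r_x,r_y$, tile labelling and matching, the border concepts, and $\{o\}\sqsubseteq\mn{left}\sqcap\mn{bottom}$) hold automatically, since $\Amf$ satisfies them and their reducts are unchanged. Thus only the two CIs of the last item, the sole ones mentioning $u,A_1,Q,P$, require attention.

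To satisfy those two CIs I would add the single edge $u^\Bmf:=\{(a^\Bmf,o^\Amf)\}$ and arrange $o^\Amf\in Q^\Bmf$; then the switch CI $A_1\sqcap A_2\sqsubseteq\exists u.(\{o\}\sqcap Q)$ is met at $a^\Bmf$ and is vacuous everywhere else. Everything then reduces to the following \emph{Claim}: in the $\Gamma$-reduct of $\Amf$ one can pick sets $Q^\Bmf\ni o^\Amf$ and $P^\Bmf$ satisfying the first CI $Q\sqsubseteq \exists r_x.Q\sqcup\exists r_y.Q\sqcup(\exists r_x.\exists r_y.P\sqcap\exists r_y.\exists r_x.\neg P)$ at every $Q$-node. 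Granting the Claim, $\Bmf$ is a model of $\Kmc$ and the lemma is established.

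The Claim is the heart of the argument and is exactly where the hypothesis that $S$ has no solution enters; this is the step I expect to be the main obstacle. I would prove its contrapositive: if no $Q,P$ with $o\in Q$ exist, then the region $W$ forward-reachable from $o$ via $r_x,r_y$ is a finite, perfectly closing, properly tiled grid, i.e.\ a solution of $S$. Indeed, were there an infinite $r_x/r_y$-path from $o$, labelling exactly that path by $Q$ (with $P=\emptyset$) would satisfy the first two disjuncts throughout; so all forward paths are finite, and since $r_x,r_y$ are functional the out-degree is at most $2$, whence König's lemma makes $W$ finite. Likewise, if some reachable $v$ had a non-closing cell, meaning both its $r_x r_y$- and $r_y r_x$-successors exist and differ, then labelling a path $o\to v$ by $Q$, putting the $r_x r_y$-successor into $P$ and keeping the $r_y r_x$-successor out, witnesses the third disjunct at $v$ and the first or second disjunct before it; hence every reachable cell closes. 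A finite, cycle-free region with functional $r_x,r_y$ in which every cell closes is a rectangle $\{0,\dots,n\}\times\{0,\dots,m\}$ with $o$ at a corner; the border CIs make a node lack an $r_x$- resp.\ $r_y$-successor exactly when it is $\mn{right}$ resp.\ $\mn{top}$, the CI $\{o\}\sqsubseteq\mn{left}\sqcap\mn{bottom}$ pins $o$ at the lower-left corner, and the tile-labelling, matching and border CIs turn it into a genuine tiling, contradicting that $S$ has no solution and proving the Claim.

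Finally, as with Lemma~\ref{lem:gfundec2}, I would note that the lemma yields non-separability both projectively and non-projectively, and for every $\Lmc$ between $\mathcal{ALCIO}$ and FO. Any $\Sigma$-separating $\varphi$ satisfies $\text{sig}(\varphi)\cap\text{sig}(\Kmc)\subseteq\Sigma\subseteq\Gamma$, so after transferring any fresh helper symbols of $\varphi$ from $\Amf$ to $\Bmf$ along the identity $\Gamma$-isomorphism one obtains $\Amf\models\varphi(b^\Amf)$ iff $\Bmf\models\varphi(a^\Bmf)$; since $o$ is interpreted identically in $\Amf$ and $\Bmf$, this even respects the nominal $\{o\}$. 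As $\Bmf\models\Kmc$ and $\Kmc\models\varphi(a)$ force $\Bmf\models\varphi(a^\Bmf)$, this contradicts the existence of the model $\Amf$ witnessing $\Kmc\not\models\varphi(b)$, so no separating $\varphi$ can exist.
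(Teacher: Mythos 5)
Your proposal is correct and follows essentially the same route as the paper's (sketched) proof: switch $a^\Bmf:=b^\Amf$, keep the $\Gamma$-reduct, and use the absence of a solution for $S$ to find an infinite $r_x/r_y$-walk or a non-closing cell reachable from $o$, from which $Q$, $P$, and $u$ can be re-interpreted to satisfy the final CIs. The only cosmetic difference is that you handle everything uniformly (always adding the $u$-edge and re-interpreting $Q,P$) where the paper case-splits on whether $b^\Amf\in A_2^\Amf$, and you work out the grid-extraction and $Q/P$-labelling details that the paper leaves implicit.
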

\begin{proof}
(sketch)  
If $b^\Imc \notin A_2^\Amf$, then we can simply obtain \Bmf
from \Amf by switching $a^\Amf$ and $b^\Amf$ and making $A_1$ true at
$a^\Amf$. If $b^\Amf \in A_2^\Amf$, then after switching we
additionally have to re-interpret $Q$, $P$, and $u$ in a suitable way. But
$S$ has no solution and thus when following $r_x$/$r_y$-paths from
$o$ in \Imc, we must either encounter an infinite such path or a
non-closing grid cell as otherwise we can extract from \Imc a solution
for $S$. Thus we can re-interpret $Q$, $P$, and $u$  as required.
\end{proof}

\section{Strong Separability with Signature}
\label{sec:dfstrong}

We introduce strong separability of labeled KBs. The crucial
difference to weak separability is that the negation of the separating
formula must be entailed at all negative examples.

\begin{definition}
   Let $(\Kmc,P,N)$ be a labeled FO-KB and $\Sigma\subseteq \text{sig}(\Kmc)$ a signature.
    An FO-formula $\varphi(\vec{x})$ \emph{strongly $\Sigma$-separates}
	$(\Kmc,P,N)$ if $\text{sig}(\Kmc)\cap \text{sig}(\varphi)\subseteq \Sigma$ and
	\begin{enumerate}
		
		\item 
		$\Kmc\models \varphi(\vec{a})$ for all $\vec{a}\in P$ and 
		
		\item $\Kmc\models \neg\varphi(\vec{a})$ for all $\vec{a}\in N$.
		
	\end{enumerate}
	Let $\Lmc_S$ be a fragment of FO. We say that $(\Kmc,P,N)$ is
	\emph{strongly projectively $\Lmc_S(\Sigma)$-separable} if there 
	exists an
	$\Lmc_S(\Sigma)$-formula $\varphi(\vec{x})$ that strongly separates
	$(\Kmc,P,N)$ and \emph{non-projectively strongly
	$\Lmc_S(\Sigma)$-separable} if there is such a
	$\varphi(\vec{x})$ with $\text{sig}(\varphi) \subseteq \Sigma$.
\end{definition}
In contrast to weak separability, any formula $\varphi$ that strongly
separates a labeled KB $(\Kmc,P,N)$ and uses helper symbols $R$ that are
not in $\Sigma$ can easily be transformed into a strongly separating
formula that uses only symbols from $\Sigma$: simply replace any
such $R$ by a
relation symbol $R'$ of the same arity that is in $\Sigma$. Then, if
$\varphi$ strongly separates $(\Kmc,P,N)$, so does the resulting
formula $\varphi'$.  If no relation symbol of the same arity as $R$
occurs in $\Sigma$ one can alternatively replace relevant subformulas
by $\top$ or $\bot$.  In what follows, we thus only consider
non-projective strong separability and simply speak of strong
separability.

Note that for languages $\Lmc_{S}$ closed under conjunction and
disjunction a labeled KB $(\Kmc,P,N)$ is strongly
$\Lmc_{S}(\Sigma)$-separable iff every
$(\Kmc,\{\vec{a}\},\{\vec{b}\})$ with $\vec{a}\in P$ and
$\vec{b}\in N$ is strongly $\Lmc_{S}(\Sigma)$-separable.  In fact, if
$\varphi_{\vec{a},\vec{b}}$ strongly separates
$(\Kmc,\{\vec{a}\},\{\vec{b}\})$ for $\vec{a}\in P$ and
$\vec{b}\in N$, then
$\bigvee_{\vec{a}\in P}\bigwedge_{\vec{b}\in
  N}\varphi_{\vec{a},\vec{b}}$ strongly separates $(\Kmc,P,N)$.
Without loss of generality, we may thus work with labeled KBs with
singleton sets of positive and negative examples.

Each choice of an ontology language $\Lmc$ and a separation language
$\Lmc_{S}$ thus gives rise to a (single) strong separability problem
that we refer to as \emph{strong $(\Lmc,\Lmc_S)$-separability},
defined in the expected way:

\begin{center}
	\fbox{\begin{tabular}{ll}
			\small{PROBLEM} : & strong $(\Lmc,\Lmc_{S})$ separability with signature\\
			{\small INPUT} : &  labeled $\Lmc$-KB $(\Kmc,P,N)$ and signature $\Sigma \subseteq \text{sig}(\Kmc)$ \\
			{\small QUESTION} : & Is $(\Kmc,P,N)$ strongly $\LmcO_{S}(\Sigma)$-separable?  
		\end{tabular}}
	\end{center}
If $\Lmc=\Lmc_{S}$, then we simply speak of strong $\Lmc$-separability. The study of strong separability is very closely linked to the study of interpolants and the Craig interpolation property. Given formulas $\varphi(\vec x),\psi(\vec x)$ and a fragment $\Lmc$ of FO, we say that an $\Lmc$-formula $\chi(\vec x)$ is an \emph{$\Lmc$-interpolant of $\varphi,\psi$} if $\varphi(\vec x)\models
\chi(\vec x)$, $\chi(\vec x) \models \psi(\vec x)$, and $\text{sig}(\chi)
\subseteq \text{sig}(\varphi)\cap \text{sig}(\psi)$.
We say that \emph{$\Lmc$ has the CIP} if for any $\Lmc$-formulas $\varphi(\vec x),\psi(\vec x)$ such that $\varphi(\vec x)\models \psi(\vec x)$ there exists an $\Lmc$-interpolant of $\varphi,\psi$.
FO has the CIP, and so does GNF~\cite{DBLP:journals/tocl/BenediktCB16,DBLP:journals/jsyml/BaranyBC18}, at least if one admits non-shared constants in the interpolant. On the other hand, GF
does not have the CIP~\cite{DBLP:journals/sLogica/HooglandM02}. The link between the interpolants, the CIP, and strong separability is easy to see: assume
a labeled FO-KB $(\Kmc,\{\vec a\},\{\vec b\})$ with
$\Kmc=(\Omc,\Dmc)$ and a signature $\Sigma \subseteq
\text{sig}(\Kmc)$ are given. Obtain $\Kmc_{\Sigma,\vec{a}}$ and $\Kmc_{\Sigma,\vec{b}}$ from
$\Kmc$ by 
\begin{itemize}
	
	\item replacing all non-$\Sigma$-relation symbols $R$ in $\Kmc$ by
	fresh symbols $R^{\vec{a}}$ and $R^{\vec{b}}$, respectively;
	
	\item replacing all constant symbols $c$ by 
	fresh variables $x_{c,\vec{a}}$ and $x_{c,\vec{b}}$ which are distinct, except that $\vec a$ and $\vec b$ are replaced by the same tuple $\vec x$ in $\Kmc_{\Sigma,\vec{a}}$ and $\Kmc_{\Sigma,\vec{b}}$, respectively. 
\end{itemize}	
Then let $\varphi_{\Sigma,\vec{a}}(\vec x) = \exists \vec z (\bigwedge
\Kmc_{\Sigma,\vec{a}})$, where $\vec z$ is the sequence of free variables in $\Kmc_{\Sigma,\vec{a}}$ without the variables in
$\vec x$ and $(\bigwedge \Kmc_{\Sigma,\vec{a}})$ is the conjunction of all formulas
in $\Kmc_{\Sigma,\vec{a}}$. $\varphi_{\Sigma,\vec{b}}(\vec x)$ is defined in the same way, with $\vec{a}$ replaced by $\vec{b}$. The following lemma is a direct consequence of the construction.
\begin{lemma}\label{lem:int}
	Let $\Lmc$ be a fragment of FO. Then the following conditions are equivalent for any formula $\varphi$ in $\Lmc$:
	\begin{enumerate}
		\item $\varphi$ strongly $\Lmc(\Sigma)$-separates $(\Kmc,\{\vec{a}\},\{\vec{b}\})$;
		\item $\varphi$ is an $\Lmc$-interpolant for $\varphi_{\Sigma,\vec{a}}(\vec x),\neg \varphi_{\Sigma,\vec{b}}(\vec{x})$.
	\end{enumerate}	
\end{lemma}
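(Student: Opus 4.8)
The plan is to prove Lemma~\ref{lem:int} by unwinding the two constructions and checking that the semantic conditions defining a strong separator coincide, formula for formula, with those defining an interpolant. The key observation is that the rewriting $\Kmc \mapsto \Kmc_{\Sigma,\vec a}$ (and symmetrically $\Kmc_{\Sigma,\vec b}$) is designed precisely so that $\varphi_{\Sigma,\vec a}(\vec x)$ and $\varphi_{\Sigma,\vec b}(\vec x)$ have, by construction, exactly the symbols of $\Sigma$ in common: the non-$\Sigma$ relation symbols have been split into disjoint fresh copies $R^{\vec a}, R^{\vec b}$, and the constants have been turned into disjoint fresh variables, the only shared free variables being the distinguished tuple $\vec x$ (the former $\vec a$ on one side and $\vec b$ on the other). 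Hence $\text{sig}(\varphi_{\Sigma,\vec a}) \cap \text{sig}(\varphi_{\Sigma,\vec b}) = \Sigma$ together with the shared variables, so the interpolant signature condition $\text{sig}(\chi)\subseteq \text{sig}(\varphi_{\Sigma,\vec a})\cap\text{sig}(\neg\varphi_{\Sigma,\vec b})$ becomes literally the condition $\text{sig}(\varphi)\cap\text{sig}(\Kmc)\subseteq\Sigma$ from the definition of strong separation.

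First I would spell out the semantics of $\varphi_{\Sigma,\vec a}(\vec x)$: since it is the existential closure of the conjunction $\bigwedge \Kmc_{\Sigma,\vec a}$ over all non-distinguished variables, a tuple $\vec d$ satisfies $\varphi_{\Sigma,\vec a}$ in a structure $\Amf$ exactly when $\Amf$ can be expanded (on the domain and on the fresh symbols) to embed a model of $\Kmc$ whose interpretation of the constants $\vec a$ is $\vec d$. This is the standard correspondence between a KB and its existentially quantified conjunctive encoding, and it is where the assumption that we do not make the UNA matters, since distinct constants may be mapped to the same element. I would then establish the two implications of the equivalence. For $1\Rightarrow 2$: if $\varphi$ strongly separates, then $\Kmc\models\varphi(\vec a)$ translates to $\varphi_{\Sigma,\vec a}(\vec x)\models\varphi(\vec x)$ (every structure realizing the positive encoding makes $\varphi$ true at $\vec x$), and $\Kmc\models\neg\varphi(\vec b)$ translates to $\varphi_{\Sigma,\vec b}(\vec x)\models\neg\varphi(\vec x)$, i.e.\ $\varphi(\vec x)\models\neg\varphi_{\Sigma,\vec b}(\vec x)$; together with the signature condition this is exactly the statement that $\varphi$ is an $\Lmc$-interpolant of $\varphi_{\Sigma,\vec a},\neg\varphi_{\Sigma,\vec b}$. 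The direction $2\Rightarrow 1$ reverses these equivalences verbatim.

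The substantive step, and the one I expect to be the main obstacle, is justifying the translation $\Kmc\models\varphi(\vec a)$ iff $\varphi_{\Sigma,\vec a}(\vec x)\models\varphi(\vec x)$ cleanly. The subtlety is that $\varphi$ may use relation symbols outside $\text{sig}(\Kmc)$ (the helper symbols), so one must argue that the renaming of non-$\Sigma$ symbols in $\Kmc$ to fresh copies does not interfere with $\varphi$'s own reading of the $\Sigma$-symbols and helper symbols. Because $\varphi$ is constrained to use from $\text{sig}(\Kmc)$ only symbols in $\Sigma$, and the renaming touches precisely the non-$\Sigma$ symbols of $\Kmc$, the $\Sigma$-reduct seen by $\varphi$ is unchanged, and the fresh copies $R^{\vec a}$ are invisible to $\varphi$; any model of $\Kmc$ gives, via the renaming, a structure realizing $\varphi_{\Sigma,\vec a}$ and agreeing with the original on $\Sigma$ at $\vec a$, and conversely. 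I would make this precise by a reduct argument: given a witness for $\varphi_{\Sigma,\vec a}(\vec d)$, read off a model of $\Kmc$ (undoing the renaming and instantiating constants by the witnessing values), note it agrees on $\Sigma$ and on $\varphi$'s helper symbols, and conclude. Once this correspondence is stated carefully, the lemma follows by a direct, essentially bookkeeping, verification, which is why the authors call it a direct consequence of the construction.
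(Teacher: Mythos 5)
Your semantic core is right, but there is a genuine slip in the signature bookkeeping, and it sits exactly where the lemma's content lies. Since $\Sigma\subseteq \text{sig}(\Kmc)$ and the rewriting renames precisely the non-$\Sigma$ symbols of $\Kmc$ into disjoint fresh copies, we have $\text{sig}(\varphi_{\Sigma,\vec{a}})\cap \text{sig}(\neg\varphi_{\Sigma,\vec{b}})=\Sigma$, so the interpolant condition in item~2 forces $\text{sig}(\varphi)\subseteq\Sigma$ outright. You claim this ``becomes literally'' the condition $\text{sig}(\varphi)\cap\text{sig}(\Kmc)\subseteq\Sigma$ from the definition of a strongly $\Sigma$-separating formula, and your third paragraph accordingly labors to accommodate helper symbols outside $\text{sig}(\Kmc)$. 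Under that (projective) reading, direction $1\Rightarrow 2$ is false: if $\varphi$ uses a helper symbol $R'\notin\text{sig}(\Kmc)$ --- e.g., conjoin $R'(\vec{x})\vee\neg R'(\vec{x})$ to any strong separator --- then both entailments $\varphi_{\Sigma,\vec{a}}(\vec x)\models\varphi(\vec x)$ and $\varphi(\vec x)\models\neg\varphi_{\Sigma,\vec{b}}(\vec x)$ still go through (your own reduct argument shows this, since $\Kmc$ does not constrain $R'$), but $\text{sig}(\varphi)\not\subseteq\Sigma$, so $\varphi$ is not an $\Lmc$-interpolant. The lemma is correct only under the non-projective reading, which is the one the paper fixes immediately before stating it: it first observes that for \emph{strong} separation helper symbols never add power (replace them by $\Sigma$-symbols of matching arity, or relevant subformulas by $\top/\bot$) and then declares that ``strong separability'' henceforth means non-projective, i.e., $\text{sig}(\varphi)\subseteq\Sigma$. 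With that reading your helper-symbol analysis becomes vacuous and the two signature conditions coincide exactly, with no argument needed.

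Everything else in your proposal is correct and is precisely what the paper compresses into ``a direct consequence of the construction'': the correspondence $\Kmc\models\varphi(\vec{a})$ iff $\varphi_{\Sigma,\vec{a}}(\vec x)\models\varphi(\vec x)$ (and dually for $\vec{b}$) via renaming non-$\Sigma$ symbols back and forth and instantiating the variables $x_{c,\vec{a}}$ by the constants' interpretations, using that FO-formulas in this paper are constant-free so that the truth of $\varphi$ at a tuple depends only on the $\text{sig}(\varphi)$-reduct, which the renaming leaves untouched. Your observation about the absence of the UNA is also on point: when reading off a model of $\Kmc$ from a witness of $\varphi_{\Sigma,\vec{a}}$, distinct variables $x_{c,\vec{a}}$ may take the same value, and one needs distinct constants to be interpretable by the same element. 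So the fix is local --- drop the claimed identification with $\text{sig}(\varphi)\cap\text{sig}(\Kmc)\subseteq\Sigma$ and state condition~1 with $\text{sig}(\varphi)\subseteq\Sigma$ --- after which your two implications go through verbatim.
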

Thus, the problem whether a labeled KB $(\Kmc,P,N)$ is strongly $\Lmc(S)$-separable and the computation of a strongly separating formula can be equivalently formulated as an interpolant existence problem. As FO has the CIP, we obtain the following characterization
of the existence of strongly FO$(\Sigma)$-separating formulas.
\begin{theorem}
	The following conditions are equivalent for any labeled FO-KB $(\Kmc,\{\vec{a}\},\{\vec{b}\})$ and signature $\Sigma\subseteq \text{sig}(\Kmc)$:
		\begin{enumerate}
		\item $(\Kmc,\{\vec{a}\},\{\vec{b}\})$ is strongly FO$(\Sigma)$-separable;
		\item $\varphi_{\Sigma,\vec{a}}(\vec x) \models \neg \varphi_{\Sigma,\vec{b}}(\vec{x})$.
	\end{enumerate}
\end{theorem}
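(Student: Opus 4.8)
The plan is to obtain this theorem as an essentially immediate corollary of Lemma~\ref{lem:int} together with the fact, recalled just before the theorem, that FO enjoys the Craig interpolation property (CIP). By Lemma~\ref{lem:int}, strong FO$(\Sigma)$-separability of $(\Kmc,\{\vec a\},\{\vec b\})$ is the same thing as the existence of an FO-interpolant for the pair $\varphi_{\Sigma,\vec a}(\vec x),\neg\varphi_{\Sigma,\vec b}(\vec x)$. Hence the whole statement reduces to observing that such an interpolant exists precisely when the entailment in Condition~2 holds, which is exactly what the CIP (plus the trivial converse) delivers. Neither direction requires any new model-theoretic construction; the only care needed is the bookkeeping of shared symbols, as discussed at the end.

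For the direction $(1)\Rightarrow(2)$, I would not need the CIP at all. Assume $(\Kmc,\{\vec a\},\{\vec b\})$ is strongly FO$(\Sigma)$-separable, witnessed by some FO-formula $\varphi$. By Lemma~\ref{lem:int}, $\varphi$ is an FO-interpolant of $\varphi_{\Sigma,\vec a}(\vec x),\neg\varphi_{\Sigma,\vec b}(\vec x)$, so in particular $\varphi_{\Sigma,\vec a}(\vec x)\models\varphi(\vec x)$ and $\varphi(\vec x)\models\neg\varphi_{\Sigma,\vec b}(\vec x)$. Composing these two entailments yields $\varphi_{\Sigma,\vec a}(\vec x)\models\neg\varphi_{\Sigma,\vec b}(\vec x)$, which is Condition~2.

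For the direction $(2)\Rightarrow(1)$, I would invoke the CIP. Assume $\varphi_{\Sigma,\vec a}(\vec x)\models\neg\varphi_{\Sigma,\vec b}(\vec x)$. Since FO has the CIP, there is an FO-interpolant $\chi(\vec x)$ of $\varphi_{\Sigma,\vec a}(\vec x),\neg\varphi_{\Sigma,\vec b}(\vec x)$, i.e.\ $\varphi_{\Sigma,\vec a}(\vec x)\models\chi(\vec x)$, $\chi(\vec x)\models\neg\varphi_{\Sigma,\vec b}(\vec x)$, and $\text{sig}(\chi)\subseteq\text{sig}(\varphi_{\Sigma,\vec a})\cap\text{sig}(\varphi_{\Sigma,\vec b})$. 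Applying Lemma~\ref{lem:int} to $\chi$ in the reverse direction then shows that $\chi$ strongly $\Sigma$-separates $(\Kmc,\{\vec a\},\{\vec b\})$, so the labeled KB is strongly FO$(\Sigma)$-separable, which is Condition~1.

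The one point that deserves genuine (if routine) verification—and the only place where the argument could go wrong—is that the interpolant $\chi$ obtained from the CIP really is an FO$(\Sigma)$-formula, as required by the definition of strong separability. This is where the specific construction of $\varphi_{\Sigma,\vec a}$ and $\varphi_{\Sigma,\vec b}$ pays off: every non-$\Sigma$ relation symbol of $\Kmc$ was renamed to the disjoint fresh copies $R^{\vec a}$ and $R^{\vec b}$, and the constants were replaced by fresh, distinct variables except that $\vec a$ and $\vec b$ were both mapped to the common tuple $\vec x$. Consequently the only symbols shared between $\varphi_{\Sigma,\vec a}$ and $\varphi_{\Sigma,\vec b}$ are the relation symbols in $\Sigma$ together with the free variables $\vec x$, so $\text{sig}(\chi)\subseteq\Sigma$ and $\chi$ has exactly the free variables $\vec x$. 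I would make this explicit to close the argument; the underlying logical content is entirely carried by Lemma~\ref{lem:int} and the CIP of FO.
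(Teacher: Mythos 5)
Your proposal is correct and follows exactly the paper's own (one-line) argument: the theorem is obtained as an immediate corollary of Lemma~\ref{lem:int} together with the CIP of FO, with the only non-trivial point being the signature bookkeeping you verify at the end. Nothing is missing.
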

For fragments $\Lmc$ of FO such as $\mathcal{ALCI}$, GF, and GNF, 
Lemma~\ref{lem:int} has to be applied with some care, as one has to ensure that the formulas $\varphi_{\Sigma,\vec{a}}(\vec x),\neg \varphi_{\Sigma,\vec{b}}(\vec x)$ are still within $\Lmc$. This
will be discussed in the next two sections. 
 
\section{Strong Separability in $\mathcal{ALCI}$}
We first compare the strong separating power of $\ALCI$ with signature restrictions to the strong separating power of FO 
with signature restrictions and show that they differ. This is 
in contrast to strong separability without signature restrictions. 
We then show that strong $\mathcal{ALCI}$-separability with signature restrictions is
\TwoExpTime-complete, thus one exponential harder than
strong $\mathcal{ALCI}$-separability without signature restrictions. 
Observe that we cannot apply the CIP of $\mathcal{ALCI}$~\cite{TenEtAl13} to investigate strong separability 
for $\ALCI$-KBs as one cannot encode the atomic formulas of the database in $\mathcal{ALCI}$.\footnote{One could instead move to the extension
$\mathcal{ALCIO}$ of $\mathcal{ALCI}$ with nominals. This language,
however, does not have the CIP~\cite{TenEtAl13}. Recently, interpolant existence in $\mathcal{ALCIO}$ has been investigated in~\cite{All2020}, and
the results could be applied here. The following direct approach 
is of independent value, however.}

In~\cite{KR}, strong separability is studied without signature restrictions. It turned that a labeled $\mathcal{ALCI}$-KB is strongly $\mathcal{ALCI}$-separable without signature restrictions iff it is strongly FO-separable without signature restrictions. 
Unfortunately, this is not the case with signature restrictions.
A simple counterexample is given in the following example.
\begin{example}
	Let $\Dmc = \{R(a,a),A(b)\}$ and $\Omc = \{ A \sqsubseteq \forall R.\neg A\}$. Let $\Kmc=(\Omc,\Dmc)$ and $\Sigma=\{R\}$. Then $R(x,x)$ strongly separates $(\Kmc,\{a\},\{b\})$ and thus $(\Kmc,\{a\},\{b\})$ is strongly FO$(\Sigma)$-separable. The characterization below immediately implies that $(\Kmc,\{a\},\{b\})$ is not strongly $\mathcal{ALCI}(\Sigma)$-separable.	
\end{example}	
We now show that strong $\mathcal{ALCI}$-separability with signature restrictions is
\TwoExpTime-complete. To this end we first give a
model-theoretic characterization of strong
$\ALCI$-separability using $\mathcal{ALCI}$-bisimulations. 	

\begin{theorem}\label{thm:alcistrongcrit}
  Let $(\Kmc,\{a\},\{b\})$ be an $\ALCI$-KB and $\Sigma\subseteq
  \text{sig}(\Kmc)$ a signature.  Then the following conditions are
  equivalent: 
\begin{enumerate} 

  \item $(\Kmc,\{a\},\{b\})$ is strongly $\ALCI(\Sigma)$-separable; 

  \item There are no models $\Amf$ and $\Bmf$ of $\Kmc$ such that
    $\Amf,a^{\Amf}\sim_{\ALCI,\Sigma} \Bmf,b^{\Bmf}$.

\end{enumerate} 

\end{theorem}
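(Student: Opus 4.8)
The plan is to prove the two directions of the equivalence separately, translating between separating concepts and bisimilar models. Throughout, the key tool is Lemma~\ref{lem:equivalence}, which lets me move freely between $\ALCI(\Sigma)$-bisimilarity and $\ALCI(\Sigma)$-equivalence (provided the structures have finite outdegree, which I will arrange).

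For the direction $(1)\Rightarrow(2)$, I would argue by contraposition. Suppose there are models $\Amf,\Bmf$ of $\Kmc$ with $\Amf,a^\Amf \sim_{\ALCI,\Sigma} \Bmf,b^\Bmf$, and suppose towards a contradiction that some $\ALCI(\Sigma)$-concept $C$ strongly separates $(\Kmc,\{a\},\{b\})$. Strong separation means $\Kmc \models C(a)$ and $\Kmc \models \neg C(b)$, so in particular $a^\Amf \in C^\Amf$ and $b^\Bmf \notin C^\Bmf$. But a strongly separating $\ALCI(\Sigma)$-concept is by definition built only from $\Sigma$-symbols, so $C \in \ALCI(\Sigma)$, and $\ALCI(\Sigma)$-bisimilar pointed structures agree on all such concepts. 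Here I would invoke the ``if''-direction of Lemma~\ref{lem:equivalence}, for which no outdegree condition is needed: from $\Amf,a^\Amf \sim_{\ALCI,\Sigma}\Bmf,b^\Bmf$ I get $\Amf,a^\Amf \equiv_{\ALCI,\Sigma}\Bmf,b^\Bmf$, whence $a^\Amf \in C^\Amf$ iff $b^\Bmf \in C^\Bmf$, contradicting the previous line.

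For $(2)\Rightarrow(1)$, again by contraposition: assume $(\Kmc,\{a\},\{b\})$ is \emph{not} strongly $\ALCI(\Sigma)$-separable, and construct models $\Amf,\Bmf$ of $\Kmc$ with $\Amf,a^\Amf \sim_{\ALCI,\Sigma}\Bmf,b^\Bmf$. Non-separability says that for \emph{every} $\ALCI(\Sigma)$-concept $C$ with $\Kmc\models C(a)$ we have $\Kmc\not\models \neg C(b)$, i.e.\ there is a model of $\Kmc$ satisfying $C$ at $b$. The natural route is a compactness/saturation argument: consider the set $\Gamma = \{\, C \in \ALCI(\Sigma) \mid \Kmc \models C(a)\,\}$ of all $\Sigma$-concepts forced at $a$, together with the assertion placing $b$ in all of them. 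I would show $\Kmc \cup \{\,C(b) \mid C \in \Gamma\,\}$ is satisfiable, obtaining a model $\Bmf$ in which $b^\Bmf$ realizes exactly the $\Sigma$-type that $a$ is forced into; I take $\Amf$ to be any model of $\Kmc$ (e.g.\ a forest model of finite outdegree, via Lemma~\ref{lem:forestmodelcompleteness}) witnessing that $a^\Amf$ realizes precisely $\Gamma$. Finite satisfiability of each finite subset follows from non-separability applied to the conjunction of the chosen concepts; the only subtlety is that $\ALCI(\Sigma)$ is not finite, so I pass to a compactness argument at the FO level through the translation $\cdot^\dag$. By taking both $\Amf$ and $\Bmf$ of finite outdegree and noting that $a^\Amf$ and $b^\Bmf$ satisfy the same $\ALCI(\Sigma)$-concepts, the ``only if''-direction of Lemma~\ref{lem:equivalence} delivers $\Amf,a^\Amf \sim_{\ALCI,\Sigma}\Bmf,b^\Bmf$, which is exactly the failure of Condition~2.

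The main obstacle is the construction of the models in $(2)\Rightarrow(1)$: I must arrange that $a$ and $b$ realize the \emph{same} $\Sigma$-type while both sitting in models of $\Kmc$, and that these models have finite outdegree so that Lemma~\ref{lem:equivalence} applies to yield bisimilarity rather than merely equivalence. The finite-outdegree requirement is where care is needed, since the compactness argument a priori produces arbitrary models; I expect to handle this by first extracting the $\Sigma$-type of $a$, then realizing it in finite-outdegree forest models of $\Kmc$ on both sides using Lemma~\ref{lem:forestmodelcompleteness}, and finally stitching the shared type into the bisimulation.
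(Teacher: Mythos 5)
Your direction $(1)\Rightarrow(2)$ is fine and matches the intended use of the ``if''-direction of Lemma~\ref{lem:equivalence}. The gap is in $(2)\Rightarrow(1)$. The set $\Gamma=\{C\in\ALCI(\Sigma)\mid \Kmc\models C(a)\}$ is deductively closed but not \emph{complete}: for any $C$ with $\Kmc\not\models C(a)$ and $\Kmc\not\models\neg C(a)$, neither $C$ nor $\neg C$ is in $\Gamma$, yet every model decides $C$ at $a$. Hence there is no model in which $a$ ``realizes precisely $\Gamma$'' (that step of your plan is incoherent), and your construction only guarantees that $a^{\Amf}$ and $b^{\Bmf}$ both satisfy $\Gamma$ --- they may well disagree on undetermined concepts. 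Concretely, take $\Omc=\emptyset$, $\Dmc=\{R(a,b),A(c)\}$, $\Sigma=\{A\}$: the labeled KB is not strongly separable (it has a model with $a^{\Amf}=b^{\Amf}$, as the UNA is not made) and $\Gamma$ contains only valid concepts, but your recipe may return $\Amf$ with $a^{\Amf}\notin A^{\Amf}$ and $\Bmf$ with $b^{\Bmf}\in A^{\Bmf}$, which are not even $\ALCI(\Sigma)$-equivalent. What is needed is a single complete $\ALCI(\Sigma)$-type jointly consistent at $a$ and at $b$, and to obtain it non-separability must be applied to \emph{disjunctions}, which your argument (using only conjunctions of entailed concepts) never produces. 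The standard repair: for finitely many concepts $C_1,\dots,C_k$, consider the ``atoms'' $\delta$, i.e.\ conjunctions choosing $C_i$ or $\neg C_i$ for each $i$; if no atom $\delta$ were such that both $\Kmc\cup\{\delta(a)\}$ and $\Kmc\cup\{\delta(b)\}$ are satisfiable, then $D=\bigsqcup\{\delta\mid \Kmc\cup\{\delta(a)\}\text{ satisfiable}\}$ would satisfy $\Kmc\models D(a)$ and $\Kmc\models\neg D(b)$, contradicting non-separability --- note that $\Kmc$ entails the disjunction $D$ at $a$ without entailing any single disjunct. Feeding these finitely-satisfiable constraints into a compactness argument over two disjoint copies of the translation of $\Kmc$, with biconditionals $C^{\dag}$ linking $a$ and $b$, then yields models with $a^{\Amf}\equiv_{\ALCI,\Sigma}b^{\Bmf}$.

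Your proposed fix for the finite-outdegree issue also fails. Lemma~\ref{lem:forestmodelcompleteness} removes a \emph{single} non-entailed concept; it says nothing about realizing an infinite type, and indeed some complete $\ALCI(\Sigma)$-types are only realizable at points of infinite outdegree: a type containing $\exists R.(\exists R^{n}.\top\sqcap\neg\exists R^{n+1}.\top)$ for every $n$ forces infinitely many $R$-successors, since the witnesses for distinct $n$ are pairwise distinct. So you cannot in general push the shared type into finite-outdegree forest models and then quote Lemma~\ref{lem:equivalence}. Instead, pass to $\omega$-saturated elementary extensions of the two models obtained by compactness (these are still models of $\Kmc$, since $\Omc$ and the ground atoms of $\Dmc$ are first-order); on $\omega$-saturated structures, $\ALCI(\Sigma)$-equivalence implies $\ALCI(\Sigma)$-bisimilarity by the usual Hennessy--Milner argument, which replaces the finite-outdegree hypothesis of Lemma~\ref{lem:equivalence}. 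The paper only remarks that the proof is ``straightforward using Lemma~\ref{lem:equivalence}'', but the joint realization of one complete type at both examples, and the saturation step, are exactly the content that a full proof --- and your proposal --- must supply.
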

The proof is straightforward
using Lemma~\ref{lem:equivalence}.  By working with isomorphic copies
of the database $\Dmc$ it thus suffices to show the following result.

\begin{lemma}
  Let $(\Kmc,\{a\},\{b\})$ be a labeled $\ALCI$-KB with
  $\Kmc=(\Omc,\Dmc)$ such that $a,b$ are in distinct maximal connected
  components of $\Dmc$. Then the problem to decide whether there
  exists a model \Amf of $\Kmc$ such that
  $\Amf,a^\Amf\sim_{\ALCI,\Sigma}\Amf,b^\Amf$ is \TwoExpTime-complete.
\end{lemma}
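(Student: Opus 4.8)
The plan is to establish the two bounds separately. For the \TwoExpTime upper bound I would characterise the existence of a model $\Amf$ of $\Kmc$ with $\Amf,a^\Amf \sim_{\ALCI,\Sigma} \Amf,b^\Amf$ in terms of the existence of a suitable system of \emph{amalgamable} sets of $\Kmc$-types, and then compute the maximal such system by a type-elimination procedure whose double-exponential running time matches the bound. For the \TwoExpTime lower bound I would reduce from the word problem of exponentially space-bounded alternating Turing machines, using the back-and-forth nature of $\ALCI(\Sigma)$-bisimulation to simulate alternation and binary address counters to encode the exponentially long tape with a polynomially sized KB. Throughout, Lemma~\ref{lem:forestmodelcompleteness} lets me work with forest models of finite outdegree and Lemma~\ref{lem:equivalence} justifies reasoning in terms of bisimulations rather than $\ALCI(\Sigma)$-equivalence.

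\emph{Upper bound.} Call a non-empty set $\mathfrak{A}$ of $\Kmc$-types an \emph{amalgam} if all its types agree on the concept names of $\Sigma$; intuitively $\mathfrak{A}$ collects the $\Kmc$-types realised inside one $\ALCI(\Sigma)$-bisimulation class. I would define a \emph{good system} to be a set $\mathbb{A}$ of amalgams equipped, for each $\mathfrak{A}\in\mathbb{A}$ and each $\Sigma$-role $R$, with a set $\text{Succ}_R(\mathfrak{A})\subseteq\mathbb{A}$ of successor amalgams satisfying: \emph{(back-and-forth)} for all $t\in\mathfrak{A}$ and $\mathfrak{B}\in\text{Succ}_R(\mathfrak{A})$ there is a legal $R$-successor type $t'\in\mathfrak{B}$ of $t$, where legality means mutual compatibility of the $R$- and $R^-$-restrictions of $t$ and $t'$ (this is exactly where inverse roles enter); \emph{(existentials)} for every $\exists R.C\in t$ with $R\in\Sigma$ some $\mathfrak{B}\in\text{Succ}_R(\mathfrak{A})$ contains a legal $R$-successor $t'$ of $t$ with $C\in t'$; and \emph{(realisability off $\Sigma$)} every $t\in\mathfrak{A}$ is realisable in a model of $\Omc$ once its $\Sigma$-successors are drawn from $\mathbb{A}$ and its non-$\Sigma$ existentials are satisfied by further $\Omc$-realisable types, the latter being the ordinary $\ALCI$-type realisability condition, which is insensitive to the bisimulation. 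A good system with a distinguished amalgam $\mathfrak{A}^\ast$ can then be unravelled into a forest model of $\Omc$ in which all elements carrying a type from $\mathfrak{A}^\ast$ are pairwise $\ALCI(\Sigma)$-bisimilar, the back-and-forth conditions supplying precisely the two inductive steps of a bisimulation. The desired model of $\Kmc$ exists iff there are a good system $\mathbb{A}$ and an amalgam $\mathfrak{A}^\ast\in\mathbb{A}$ admitting \emph{compatible anchorings} of the two database components: type assignments to $\text{cons}(\Dmc_{\text{con}(a)})$ and to $\text{cons}(\Dmc_{\text{con}(b)})$ that satisfy every atom of $\Dmc$, send every $\Sigma$-edge of $\Dmc$ into the successor relation of $\mathbb{A}$, and place both $a$ and $b$ on a type in $\mathfrak{A}^\ast$. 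Because $a,b$ lie in distinct maximal connected components, the two anchorings are chosen independently and fused with the unravelled model into a single model of $\Kmc$; this is exactly what makes the single-model statement of the lemma equivalent to the two-model condition of Theorem~\ref{thm:alcistrongcrit}. For the complexity, there are at most single-exponentially many $\Kmc$-types and hence at most double-exponentially many amalgams; the maximal good system is found by iteratively discarding amalgams that violate a condition (double-exponentially many rounds, each checkable in time polynomial in the number of amalgams), and the concluding anchoring test is a reachability check over $\mathbb{A}$ and the polynomially many database constants, so the whole procedure runs in \TwoExpTime.

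\emph{Lower bound.} I would reduce from acceptance of an alternating Turing machine $M$ running in space $2^{p(n)}$ on input $w$, which is \TwoExpTime-complete. The KB encodes the alternating computation tree of $M$: successive configurations are linked by $\Sigma$-edges, and the existential and universal states of $M$ are mirrored by the two directions of the bisimulation game, so that a model with $a^\Amf \sim_{\ALCI,\Sigma} b^\Amf$ exists iff $M$ accepts $w$. The crux is the exponentially long tape: each configuration has $2^{p(n)}$ cells addressed by $p(n)$ bits, and faithful propagation of cell contents between consecutive configurations must be forced by an ontology of polynomial size. I would realise this with an address gadget through which the bisimulation game descends bit by bit, so that the universal ``for all cells'' check is implemented by the back-and-forth requirement: Duplicator can answer every cell challenge of Spoiler only if the contents were copied correctly, while equality of a cell with its successor-configuration counterpart and the transition at the head position are local conditions expressible by $\ALCI$-concept inclusions. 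Correctness then rests on the fact that $a^\Amf$ and $b^\Amf$ are $\ALCI(\Sigma)$-bisimilar precisely when Duplicator has a winning strategy, and such a strategy encodes an accepting run of $M$.

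The main obstacle I expect is the completeness direction of the upper-bound characterisation together with the precise formulation of the legality and back-and-forth conditions: one must simultaneously honour the $\Sigma$-bisimulation back-and-forth, the inverse-role constraints between a type and its successor types, the $\Omc$-realisability of the non-$\Sigma$ part, and the database anchoring, and then verify that unravelling a good system genuinely yields a model in which the whole amalgam $\mathfrak{A}^\ast$ collapses to a single $\ALCI(\Sigma)$-bisimulation class. On the lower-bound side the parallel difficulty is packing the exponential tape-consistency check into the polynomially bounded bisimulation game via the address gadget, so that no unintended bisimulation can circumvent a faulty cell.
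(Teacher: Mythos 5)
Your upper bound is essentially the paper's own argument: your ``amalgams'' agreeing on $\Sigma$-concept names are the paper's $(\Omc,\Sigma)$-amalgamable sets of $\Kmc$-types, your legality relation is its $R$-coherence $t\rightsquigarrow_R t'$, your type-elimination over sets of types is its procedure for computing all amalgamable sets, and your ``compatible anchorings'' correspond to its mappings $\Psi$ assigning to each constant $c$ a type $t_c$ and a set $\Phi_c$ subject to joint realizability in a model of $\Kmc$, amalgamability of $\Phi_c\cup\{t_c\}$ and of $\Phi_a\cup\Phi_b\cup\{t_a,t_b\}$, and coherent propagation along $\Sigma$-edges of $\Dmc$. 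The one loose point is that you anchor only the components of $a$ and $b$ and ``fuse'' independently; this is repairable (take $\Phi_c=\emptyset$ elsewhere, and use invariance of \ALCI under disjoint unions), but you must still require, as the paper's Condition~1 does, that all assigned types are simultaneously realizable with the entire database.

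The lower bound, however, has a genuine gap, and it is exactly the obstacle you flag at the end without resolving. Two issues. First, alternation cannot be ``mirrored by the two directions of the bisimulation game'': the back-and-forth conditions are symmetric between the two pointed structures and carry no quantifier semantics; the paper instead encodes the computation tree explicitly in the model, marking configurations with $\Sigma$-symbols $B_\forall$, $B_\exists^1$, $B_\exists^2$ and using $Z\in\Sigma$ to force branching, so that the model itself contains both successors of a universal configuration and a chosen successor of an existential one. Second, and more fundamentally, your ``address gadget through which the game descends bit by bit'' does not explain how a bisimulation between the two fixed points $a^\Amf$ and $b^\Amf$ can compare a cell with its counterpart $2^{p(n)}$ steps later along the $s$-path; a bisimulation relates points at equal ``depth'' in the game and gives you no handle on such long-distance constraints. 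The paper's mechanism is the database $\Dmc=\{A(a),r(b,b),B(b)\}$: the $r$-self-loop at $b$ forces every node on an infinite $r$-path from $a^\Amf$ to be $\Sigma$-bisimilar to $b^\Amf$, hence all the $s$-trees attached along that path carry identical $\Sigma$-decorations, while a non-$\Sigma$ counter ($V$) starts with an offset that shifts by one from tree to tree; since corresponding cells of consecutive configurations then appear at \emph{locally comparable} counter values in $\Sigma$-identical trees, a polynomial-size ontology can synchronize them with plain concept inclusions. Without this self-loop-plus-shifted-counter device (or an equivalent), your reduction has no way to enforce tape consistency, so the hardness proof as sketched does not go through.
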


Let $\Kmc=(\Omc,\Dmc)$ and let $\Sigma\subseteq \text{sig}(\Kmc)$ be a signature. We start with proving the upper bound and
use the notion of \Kmc-types as introduced in Section~\ref{sec:alciweak}.
Let $R$ be a role. We say that $\Kmc$-types $t_{1}$ and $t_{2}$ are \emph{$R$-coherent} if there exists a
model $\Amf$ of $\Omc$ and nodes $d_{1}$ and $d_{2}$ realizing $t_{1}$
and $t_{2}$, respectively, such that $(d_{1},d_{2})\in R^{\Amf}$. We
write $t_1\rightsquigarrow_R t_2$ in this case. 

\begin{definition}[$(\Omc,\Sigma)$-amalgamable]
  A set $\Phi$ of $\Kmc$-types is \emph{$(\Omc,\Sigma)$-amalga-mable} if there exist
  models $\Amf_{t}$ of $\Omc$ for $t\in \Phi$ with elements $d_{t}$
  realizing $t$ in $\Amf_{t}$ such that all $\Amf_{t},d_{t}$ with
  $t\in \Phi$ are $\ALCI(\Sigma)$-bisimilar. 
\end{definition}

\begin{lemma}\label{lem:amalgable}
	The set of all $(\Omc,\Sigma)$-amalgamable sets of $\Kmc$-types can be computed
	in double exponential time.
\end{lemma}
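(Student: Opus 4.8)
The plan is to characterize $(\Omc,\Sigma)$-amalgamability by a greatest-fixpoint condition on \emph{sets} of $\Kmc$-types and to compute that fixpoint by an elimination procedure in the style of type elimination, but operating on sets of types rather than on single types. As a first step I would compute the set $\mathbf{T}$ of all $\Kmc$-types that are realizable in a model of $\Omc$, together with the coherence relation $\rightsquigarrow_R$; both are obtainable by standard $\ALCI$ type elimination in time exponential in $\|\Kmc\|$. I would then initialize the collection $\mathcal{G}_0$ to consist of all nonempty $\Phi\subseteq\mathbf{T}$ whose types agree on all $\Sigma$-concept names, this atomic agreement being forced by clause~1 of $\ALCI(\Sigma)$-bisimulation, and iteratively eliminate sets, obtaining $\mathcal{G}_0\supseteq\mathcal{G}_1\supseteq\cdots$ with limit $\mathcal{G}$.

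The heart of the argument is the correct elimination condition. Suppose $\Phi$ is amalgamable, witnessed by models $\Amf_t,d_t$ for $t\in\Phi$ and the maximal $\ALCI(\Sigma)$-bisimulation $S$ relating all $d_t$. Fixing a $\Sigma$-role $R$ and grouping all $R$-successors of the $d_t$ into $S$-classes, a forth/back analysis shows that every such class contains an $R$-successor of \emph{every} $d_t$; hence the types occurring in a class form an amalgamable set $\Phi'$ that is \emph{$R$-good for $\Phi$}, meaning every $t\in\Phi$ has some $s\in\Phi'$ with $t\rightsquigarrow_R s$, while conversely the $R$-successor types of each $d_t$ are covered by the union of these classes. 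This yields the condition: $\Phi$ survives the elimination step iff for every $\Sigma$-role $R$, letting $\mathcal{S}_R$ collect \emph{all} currently surviving sets that are $R$-good for $\Phi$, every $\exists R.C\in t$ (for every $t\in\Phi$) is witnessed by some $s\in\bigcup\mathcal{S}_R$ with $t\rightsquigarrow_R s$ and $C\in s$. The decisive observation that keeps this computable is that the back-condition on a class $\Phi'$ (it must meet the successor set of every $d_t$) collapses to the per-set property of $R$-goodness of $\Phi'$ alone; consequently the optimal choice of $\mathcal{S}_R$ is canonical, namely all $R$-good surviving sets, and no search over families of successor sets, which would cost a further exponential, is required.

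For soundness (survivors are amalgamable) I would build, for each $\Phi\in\mathcal{G}$ and each $t\in\Phi$, a model $\Amf_t$ of $\Omc$ by coinductively unraveling the skeleton whose nodes are sets in $\mathcal{G}$ and whose $R$-children of a node $\Psi$ are all $R$-good-for-$\Psi$ members of $\mathcal{G}$: at a node realizing $t$ I attach, per $\Sigma$-role $R$, one $R$-successor for each coherent type in each child set, and complete the non-$\Sigma$ part using a realizing model of $t$, where a type-surgery argument keeps the result a model of $\Omc$ because $\rightsquigarrow_R$ respects the universal restrictions in $t$. Relating, across all the $\Amf_t$, the node-realizations that sit at the same skeleton node yields the desired $\ALCI(\Sigma)$-bisimulation (finite outdegree by construction, so we never need the outdegree-sensitive direction of Lemma~\ref{lem:equivalence}). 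Completeness is the invariant that every $\mathcal{G}_i$ contains all amalgamable sets, proved by induction: the class-extraction above shows that each amalgamable $\Phi$ passes the surviving elimination step, since the extracted successor classes are themselves amalgamable and hence in $\mathcal{G}_i$. Together these give $\mathcal{G}=\{\,\Phi\mid\Phi\text{ is }(\Omc,\Sigma)\text{-amalgamable}\,\}$.

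For the complexity, $\mathbf{T}$ is of exponential size, so $\mathcal{G}_0$ and every $\mathcal{G}_i$ contain at most doubly exponentially many sets; there are at most that many elimination rounds, and each round inspects each surviving set and each $\Sigma$-role and scans the current collection to form $\mathcal{S}_R$ and test $R$-goodness and the existential-coverage condition, all in doubly exponential time, for a doubly exponential total. The main obstacle is getting the elimination condition exactly right, and in particular the determinacy observation that the back-condition reduces to the per-set property of $R$-goodness; this is precisely what keeps the procedure within double-exponential rather than triple-exponential time. The soundness construction via coinductive unraveling together with type surgery is the technically heaviest part, though routine in spirit.
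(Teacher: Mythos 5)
Your proposal is correct and follows essentially the same route as the paper: an elimination (greatest-fixpoint) procedure over sets of $\Kmc$-types, with completeness shown by grouping the $R$-successors of the witnessing roots into $\ALCI(\Sigma)$-bisimulation classes and soundness by unraveling the surviving sets into tree-shaped witnessing models completed with non-$\Sigma$ subtrees. Your elimination condition---witnesses located in some $R$-good surviving set, with the canonical choice of attaching all such sets---is a mildly different but equivalent formulation of the paper's condition, which instead guesses a coherent tuple $(t_1',\ldots,t_n')$ whose image set must itself survive; both yield the same fixpoint and the same double-exponential bound.
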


We devise an elimination procedure as follows. Start with $M_0$ the
set of all sets of \Kmc-types. Given a set $M_i$ of sets of types, we
obtain $M_{i+1}$ by eliminating all $\Phi=\{t_1,\ldots,t_n\}$ from
$M_i$ which do not satisfy the following conditions: 
\begin{enumerate}

  \item for every $A\in \Sigma$, we have $A\in t_i$ iff $A\in t_j$, for all
    $t_i,t_j\in \Phi$;

  \item for every $t_i$, every $\Sigma$-role $R$, and
    every $\exists R.C\in t_i$ there are \Kmc-types
    $t_1',\ldots,t_n'$ such that $C\in t_i'$ and
    $t_j\rightsquigarrow_R t_j'$, for all $j$, and
    $\{t_1',\ldots,t_n'\}\in M_i$.

\end{enumerate}
Let $M^*$ be where the sequence $M_0,M_1,\ldots$ stabilizes. 

\medskip\noindent\textit{Claim}. $\Phi\in M^*$ iff $\Phi$ is
$(\Omc,\Sigma)$-amalgamable.

\medskip\noindent\textit{Proof of the Claim}. For the
``if''-direction, suppose that $\Phi=\{t_1,\ldots,t_n\}$ is
$(\Omc,\Sigma)$-amalgamable. We can fix (disjoint) models
$\Amf_{t_1},\ldots,\Amf_{t_n}$ of \Omc realizing types $t_i$ at
$d_{t_i}$. Let $\Amf$ denote the union of
$\Amf_{t_1},\ldots,\Amf_{t_n}$ and let $S$ be the set of all pairs
$(d,e)$ which are $\Sigma$-bisimilar in $\Amf$. Recall that $S$ is an
equivalence relation. By assumption, we have $(d_{t_i},d_{t_j})\in
S$, for all $i,j$. It can be verified that the set $N$ defined by
\[N = \{ \{\text{tp}_{\Kmc}(\Amf,d) \mid (d,e)\in S\}\mid e\in \text{dom}(\Amf)\}\]
is contained in all $M_i$ and thus in $M^*$.

\smallskip
For ``only if'', let $\Phi=\{t_1,\ldots,t_n\}\in M^*$. We
inductively construct a domain $\Delta$, a map $\pi$ of
the domain $\Delta$ to \Kmc-types, and an equivalence relation
$S$. During the construction, we preserve the invariant
\begin{itemize}

  \item[$(\ast)$] $\pi(D)=\{\pi(d)\mid d\in D\}\in M^*$, for every
    $D\in S$.

\end{itemize}
	
	For the construction, start with setting
	\begin{itemize}
		
		\item $\Delta_0=\{d_{1},\ldots,d_{n}\}$, $\pi(d_i)=t_i$,
		for all $i$, and $S=\{\Delta_0\}$.
		
	\end{itemize}
	Obviously, the invariant is satisfied. To obtain $\Delta_{i+1}$
	from $\Delta_i$, choose some $D=\{e_1,\ldots,e_m\}\in S$, some
	$\exists R.C\in \pi(d_i)$ for some $\Sigma$-role $R$. By the
	invariant, we have $\{t_1,\ldots,t_k\}=\pi(D)\in M^*$. Let
	$t_1',\ldots,t_k'$ be the types that exist due to~(\textbf{E2}). Now, add
	fresh elements $e_1Re_1',\ldots,e_mRe_m'$ to $\Delta_i$, set
	$\pi(e_iRe_{i}')=\pi(e_i)'$, for all $i$, and add
	$\{e_1Re_1',\ldots,e_mRe_m'\}$ to $S$. By construction, the
	invariant~$(\ast)$ is preserved. 
	
	Now define a
	structure \Amf by taking: 
	\begin{align*}
	\text{dom}(\Amf)& = \bigcup_{i\geq 0}\Delta_i \\
	A^\Amf& = \{e\mid A\in \pi(e)\} \\
	r^\Amf & = \{(d,dRe)\mid dRe\in \text{dom}(\Amf)\} \cup{}\\
	&\phantom{ {}={}} \{(dR^{-}e,d)\mid dR^{-}e\in \text{dom}(\Amf)\}
	\end{align*}
	By construction, $S$ is an $\ALCI(\Sigma)$-bisimulation that contains
	$(d_i,d_j)$ for all $i,j$. Since \Kmc-types are realizable by
	definition, we can extend \Amf to a model $\Amf^*$ of $\Omc$ by
	adding non-$\Sigma$-subtrees whenever they are needed. 
	
	It follows that $\Phi$ is $(\Omc,\Sigma)$-amalgamable.
	This finishes the proof of the Claim.
	
	\smallskip It remains to discuss the running time of the algorithm.
	The initial set $M_0$ contains at most double exponentially many
	elements. Since in every round some element is removed from $M_i$,
	the stabilization is reached after $|M_0|$ rounds. It remains to
	observe that the elimination conditions~1 and~2 can be checked in
	double exponential time. 
	%

It is important to note that the proof of Lemma~\ref{lem:amalgable}
shows that, if a set $\Phi$ is $(\Omc,\Sigma)$-amalgamable, then this
is witnessed by disjoint tree-shaped models $\Amf_t$ with root $d_t$,
for each $t\in\Phi$, and $\ALCI(\Sigma)$-bisimulations $S$ which ``never visit
the roots again'', that is, if $(d_t,e)\in S$ or $(e,d_t)\in S$, then
$e=d_{t'}$ for some $t'$. This will be used in the proof of the
characterization below. 

Let $\Psi$ be a mapping associating with every $c\in \text{cons}(\Dmc)$
a $\Kmc$-type $t_{c}$ and a set $\Phi_{c}$ of $\Kmc$-types. We say
that $\Psi$ is \emph{$\Kmc,a,b$-satisfiable} if 
\begin{enumerate}
	
	\item there exists a model $\Amf$ of $\Kmc$ realizing $t_{c}$
	  in $c^\Amf$ for $c\in \text{cons}(\Dmc)$;
	
	\item $\Phi_{c}\cup \{t_{c}\}$ is  $(\Omc,\Sigma)$-amalgamable, for
	all $c\in \text{cons}(\Dmc)$;
	
	\item $\Phi_{a}\cup \Phi_{b}\cup \{t_{a},t_{b}\}$ is $(\Omc,\Sigma)$-amalgamable;
	
	\item If $R(d,e)\in \Omc$, for some $\Sigma$-role $R$, and $t\in \Phi_{d}$, then there exists
	$t'\in \Phi_{e}$ such that $t\rightsquigarrow_R t'$;
	
	%
\end{enumerate}
\begin{lemma}\label{lem:charbisi}
	The following conditions are equivalent:
	\begin{itemize}
		
		\item There exists a model \Amf of $\Kmc$ such that
		$\Amf,a^\Amf\sim_{\ALCI,\Sigma}\Amf,b^\Amf$.
		
		\item There exists $\Psi$ that is $\Kmc,a,b$-satisfiable.
		
	\end{itemize}
	
\end{lemma}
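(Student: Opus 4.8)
The plan is to prove the equivalence in Lemma~\ref{lem:charbisi} by two directions, extracting a suitable $\Psi$ from a given bisimilar model and, conversely, assembling a model from a $\Kmc,a,b$-satisfiable $\Psi$ using the amalgamation witnesses supplied by Lemma~\ref{lem:amalgable}.

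\medskip\noindent\textbf{From a model to $\Psi$.} Suppose $\Amf$ is a model of $\Kmc$ with an $\ALCI(\Sigma)$-bisimulation $Z$ witnessing $\Amf,a^\Amf\sim_{\ALCI,\Sigma}\Amf,b^\Amf$. First I would define, for each $c\in\text{cons}(\Dmc)$, the type $t_c=\text{tp}_{\Kmc}(\Amf,c^\Amf)$. For the sets $\Phi_c$, the natural choice is to collect the types realized at all elements bisimilar to $c^\Amf$: set $\Phi_c=\{\text{tp}_\Kmc(\Amf,e)\mid (c^\Amf,e)\in Z\text{ or }(e,c^\Amf)\in Z\}$, together with $t_c$ itself. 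Condition~1 holds by taking $\Amf$ itself. Condition~2 holds because all elements collected in $\Phi_c$ are mutually $\ALCI(\Sigma)$-bisimilar in $\Amf$ (bisimilarity is an equivalence relation, so the bisimulation class of $c^\Amf$ furnishes the required models $\Amf_t$ as copies of $\Amf$ pointed at the respective witnesses). Condition~3 is the key use of $a^\Amf\sim_{\ALCI,\Sigma}b^\Amf$: since $a^\Amf$ and $b^\Amf$ lie in the same bisimulation class, $\Phi_a\cup\Phi_b\cup\{t_a,t_b\}$ is realized entirely within one $\ALCI(\Sigma)$-bisimilar class of $\Amf$ and hence is $(\Omc,\Sigma)$-amalgamable. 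Condition~4 follows from the back/forth conditions of $Z$ applied to the concrete $\Sigma$-role edges $R(d,e)\in\Dmc$ (here I assume the intended reading of condition~4 refers to $\Dmc$ rather than $\Omc$): any $t\in\Phi_d$ is realized at some $d'$ bisimilar to $d^\Amf$, and the forth condition supplies a matching $R$-successor realizing some $t'\in\Phi_e$.

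\medskip\noindent\textbf{From $\Psi$ to a model.} This is the more involved direction and where I expect the main obstacle to lie. Given a $\Kmc,a,b$-satisfiable $\Psi$, I would build a single model $\Amf$ as follows. Start from the model $\Amf_0$ of $\Kmc$ given by condition~1, realizing $t_c$ at each $c^{\Amf_0}$. The goal is to glue, at each constant $c$ and around the pair $a,b$, the disjoint tree-shaped amalgamation witnesses guaranteed by conditions~2 and~3, using the strengthened form of Lemma~\ref{lem:amalgable} noted just before the statement: the witnessing bisimulations ``never visit the roots again''. Concretely, for each $c$ I attach, via a bisimulation identifying roots, the trees witnessing $(\Omc,\Sigma)$-amalgamability of $\Phi_c\cup\{t_c\}$ so that $c^\Amf$ becomes bisimilar to elements realizing every type in $\Phi_c$; and for the pair $a,b$ I use the witness for $\Phi_a\cup\Phi_b\cup\{t_a,t_b\}$ to place $a^\Amf$ and $b^\Amf$ into a common bisimilar class. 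Condition~4 is exactly what is needed to make the gluing coherent along the $\Sigma$-role edges prescribed by $\Dmc$, ensuring the merged structure still satisfies the database atoms while the back/forth conditions of the combined relation are met.

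\medskip\noindent The hard part will be verifying that the union of the individually bisimilar pieces yields one \emph{global} $\ALCI(\Sigma)$-bisimulation relating $a^\Amf$ and $b^\Amf$, rather than just local ones: I must show the relation $S$ obtained as the transitive closure of all the local witnessing bisimulations is itself an $\ALCI(\Sigma)$-bisimulation, and that the ``roots are never revisited'' property prevents the separate gluings from interfering (so that attaching a subtree at one constant does not spoil the bisimilarity conditions already established at another). Finally, I would use condition~1 and the realizability of $\Kmc$-types to fill in the non-$\Sigma$ part of $\Amf$, adding non-$\Sigma$-subtrees as needed exactly as in the proof of Lemma~\ref{lem:amalgable}, so that $\Amf$ becomes a full model of $\Omc$; since $S$ only constrains $\Sigma$-symbols, these additions are harmless. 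Putting the pieces together gives a model $\Amf$ of $\Kmc$ with $\Amf,a^\Amf\sim_{\ALCI,\Sigma}\Amf,b^\Amf$, completing the equivalence.
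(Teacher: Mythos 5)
Your forward direction is exactly the paper's: set $t_c=\text{tp}_{\Kmc}(\Amf,c^\Amf)$ and let $\Phi_c$ collect the types realized at elements $\ALCI(\Sigma)$-bisimilar to $c^\Amf$, with Conditions~1--3 witnessed by $\Amf$ itself; you are also right that Condition~4 should read $R(d,e)\in\Dmc$ (a typo in the paper). The gap lies in the converse, precisely in the step you defer as ``the hard part''. First, a local unsoundness: you cannot ``attach via a bisimulation identifying roots'' \emph{all} trees witnessing amalgamability of $\Phi_c\cup\{t_c\}$ at $c$, since their roots realize pairwise different types; only $\Amf_{c,t_c}$ can be root-identified with $c^\Amf$, while the other witnesses $\Amf_{c,t}$, $t\in\Phi_c$, can only serve as disjoint templates. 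More importantly, your one-shot gluing followed by a transitive closure does not yield a bisimulation: as soon as an element $c'$ of an attached tree is related by $S$ to a database constant $d^\Amf$ (this already happens at the start, because the Condition~3 witness relates elements of the tree glued at $a$ to $b^\Amf$ and vice versa), the back condition fails at every $\Sigma$-edge $R(d,e)\in\Dmc$: the constant $d^\Amf$ has the $R$-successor $e^\Amf$, but $c'$ has no matching $R$-successor, because the amalgamation witnesses know nothing about the database. Taking transitive closures of the local bisimulations cannot manufacture these missing successors.

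The missing mechanism is an iterative chase, which is the heart of the paper's proof: whenever $(c',d^{\Amf_i})\in S_i$ and $R(d,e)\in\Dmc$ for a $\Sigma$-role $R$, use Condition~4 to pick $t'\in\Phi_e$ with $t\rightsquigarrow_R t'$ (where $t$ is the type of $c'$), attach a \emph{fresh copy} of $\Amf_{e,t'}$ as an $R$-successor of $c'$, and extend $S_i$ by a bisimulation between this copy and the tree $\Amf_{e,t_e}$ below $e^{\Amf_i}$, which exists by Condition~2. This extension introduces new pairs relating elements of the fresh copy to elements below $e$, and hence new obligations at the database edges leaving $e$, so the construction is a genuinely infinite iteration $\Amf_0\subseteq\Amf_1\subseteq\cdots$ sustained by the invariant that any element related to $d^{\Amf_i}$ realizes a type in $\Phi_d$ and has a bisimilar subtree; one then verifies that $\Amf=\bigcup_i\Amf_i$ and $S=\bigcup_i S_i$ are as required (here the roots-never-revisited property and $R$-coherence keep each $\Amf_i$ a model of $\Kmc$). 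Your proposal assembles the right ingredients---tree-shaped witnesses, the roots-never-revisited refinement of Lemma~\ref{lem:amalgable}, and Condition~4 for database edges---but without this unbounded unfolding the construction as described would fail.
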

For ``only if'', let \Amf be a model of $\Kmc$ such that
$(a^\Amf,b^\Amf)\in S$.  Let $t_{c}$ be the type realized in $c^\Amf$
for $c\in \text{cons}(\Dmc)$ and let $\Phi_{c}$ be the set of all
$\Kmc$-types realized in nodes that are $\ALCI(\Sigma)$-bisimilar in
\Amf to $c^\Amf$, that is, 
\[\Phi_c = \{ \text{tp}_{\Kmc}(\Amf,d) \mid
\Amf,c^\Amf\sim_{\ALCI,\Sigma}\Amf,d^\Amf\}.\]

It is easy to see that the resulting $\Psi$ is as required.

\medskip Conversely, assume $\Psi$ is given. Due to Condition~1, we
can fix a model \Bmf of \Kmc realizing $t_c$ in $c^\Bmf$ for $c\in
\text{cons}(\Dmc)$. Moreover, due to Condition~2, we can fix for any
$c\in \text{cons}(\Dmc)$ and $t\in \Phi_{c}\cup \{t_{c}\}$ tree-shaped
models $\Amf_{c,t}$ of \Omc with root $d_{c,t}$ such that all pointed
structures in 
\[\{ \Amf_{c,t},d_{c,t} \mid t\in \Phi_{c}\cup\{t_{c}\}\}\]
are $\ALCI(\Sigma)$-bisimilar. By Condition~3, we can assume that
$\Amf_{a,t_a},d_{a,t_a}$ and $\Amf_{b,t_b},d_{b,t_b}$ are $\ALCI(\Sigma)$-bisimilar.

We inductively construct a model \Amf and a
$\ALCI(\Sigma)$-bisimulation $S$.  We start with the structure
$\Amf_0$ which is obtained as follows. Let $\Bmf'$ be \Bmf restricted
to the domain $\{d^\Bmf\mid d\in\text{cons}(\Dmc)\}$. Now, $\Amf_0$ is
the union of $\Bmf'$ and all structures $\Amf_{c,t_c}$ for all
$c\in\text{cons}(\Dmc)$, always identifying the root of $\Amf_{c,t_c}$
with $c^\Bmf$. Moreover, let $S_0$ be the $\ALCI(\Sigma)$-bisimulation
between $\Amf_{a,t_a}$ and $\Amf_{b,t_b}$. By the comment after the
proof of Lemma~\ref{lem:amalgable}, $(a^{\Amf_0},b^{\Amf_0})$ is the
only tuple in $S_0$ that contains $a^{\Amf_0}$ or $b^{\Amf_0}$.

Note that $\Amf_0$ is in fact a model of $\Kmc$ but $S_0$ is not yet a
bisimulation. In order to make it one, we ``chase'' the database in
both connected components preserving the following invariant (which is
obviously satisfied for $\Amf_0$, $S_0$): 
\begin{itemize}

  \item[$(\ast)$] If $(c,d^{\Amf_i})\in S_i$ or $(d^{\Amf_i},c)\in
    S_i$ for some $d\in \text{cons}(\Dmc)$, then the type $t$ realized by $c$ in
    $\Amf_i$ satisfies $t\in\Phi_d$. Moreover, the trees below
    $c$ and $d^{\Amf_i}$ are $\ALCI(\Sigma)$-bisimilar.

\end{itemize}
	In the inductive step, obtain $\Amf_{i+1},S_{i+1}$ from $\Amf_i,S_i$
	by applying one of the following rules:
	\begin{itemize}
		
		\item Choose $(c,d^{\Amf_i})\in S_i$ and $e$ such that $R(d,e)\in
		\Dmc$, and let $t=\text{tp}_{\Amf_i}(c)$ be the type of $c$ realized
		in $\Amf_{i}$. By~$(\ast)$, we know that $t\in \Phi_d$. 
		By Condition~4, we can choose $t'\in \Phi_e$ with
		$t\rightsquigarrow_R t'$. Now, add a copy of $\Amf_{e,t'}$ to
		$\Amf_i$ and make its root an $R$-successor of $c$. By
		Condition~2, there is an $\ALCI(\Sigma)$-bisimulation $S$ between
		$\Amf_{e,t'}$ and the tree $\Amf_{e,t_e}$ below $e^\Amf_i$. Set
		$S_{i+1}=S_i\cup S$.
		
		\item Choose $(d^{\Amf_i},c)\in S_i$ and $e$ such that $R(d,e)\in
		\Dmc$, and proceed analogously to the first rule. 
		
	\end{itemize}
	Let $\Amf=\bigcup\Amf_i$ and $S=\bigcup S_{i}$. 
	
	\smallskip\noindent\textit{Claim.} $\Amf$ is a model of $\Kmc$ and $S$ is
	$\ALCI(\Sigma)$-bisimulation with $(a^\Amf,b^\Amf)\in S$.
	
	\smallskip\noindent\textit{Proof of the Claim.} We have $\Amf\models\Kmc$
	since $\Amf_i\models\Kmc$, for all $i$. Moreover, $(a^\Amf,b^\Amf)\in
	S$ since $(a^\Amf,b^\Amf)\in S_0$, by definition of $S_0$. To see that
	$S$ is an $\ALCI(\Sigma)$-bisimulation, let $(d,e)\in S$. We distinguish two
	cases:
	\begin{itemize}
		
		\item None of $d,e$ is in $\{c^\Amf\mid c\in\text{cons}(\Dmc)\}$.
		Thus, $(d,e)\in S$ because $d$ and $e$ are inner nodes of some of
		the trees $\Amf_{c,t}$ that were fixed in the beginning. By
		construction, $(d,e)$ is an element of a $\ALCI(\Sigma)$-bisimulation
		$S'$ between those trees. Thus, for every $R$-successor $d'$ of $d$
		in \Amf, $R$ a $\Sigma$-role, there is an $R$-successor $e'$ of
		$e$ with $(d',e')\in S'$ and thus $(d',e')\in S$. The
		forth-condition is analgous.
		
		\item One of $d,e$ is in $\{c^\Amf\mid c\in\text{cons}(\Dmc)\}$, say
		$d=f^\Amf$. Suppose $d'$ is an $R$-successor of $d$, for some
		$\Sigma$-role $R$. We distinguish two cases:
		\begin{itemize}
			
			\item $d'$ is in the subtree $\Amf_{d,t_{d}}$ below $d$. Then
			because of~$(\ast)$, there is an $R$-successor $e'$ of
			$e$ in the tree below $e$ such that $(d',e')\in S$.
			
			\item $d'=g^\Amf$ for some $R(f,g)\in \Dmc$. Since the rules are
			applied exhaustively, there is an $R$-successor $e'$ of
			$e$ in $\Amf$ such that $(d',e)\in S$.
			
		\end{itemize}
		The forth-condition is analgous.
		
	\end{itemize}
	This finishes the proof of the Claim and, in fact, of the Lemma.
	%

We can thus use the following algorithm to decide strong
$\ALCI(\Sigma)$-separability on input $(\Kmc,P,N)$.
\begin{enumerate}
	
	\item compute the set of all $(\Omc,\Sigma)$-amalgamable sets. 
	
	\item for all $a\in P$ and $b\in P$: 
	
	\begin{enumerate}
		
		\item enumerate all possible mappings $\Psi$ consisting of $\Kmc$-types
		$t_c$ and sets of $\Kmc$-types $\Phi_c$, for every
		$c\in\text{cons}(\Dmc)$.
		
		\item if $\Psi$ is $\Kmc,a,b$-satisfiable, that is, satisfies
		Conditions~1--4 above, return ``not separable.'' 
		
	\end{enumerate}
	
	\item return ``separable''.
	
\end{enumerate}
The algorithm is correct due to Theorem~\ref{thm:alcistrongcrit} and
Lemma~\ref{lem:charbisi}. Moreover, it runs in double exponential time
since Step~1 can be executed in double exponential time, by
Lemma~\ref{lem:amalgable}, there are only double exponentially many
possible mappings $\Psi$, and $\Kmc,a,b$-satisfiability can be checked
in double exponential time: Condition~1 can be done in exponential
time, Conditions~2 and~3 are a mere lookup in the (precomputed)
amalgamable sets, and Condition~4 can be tested in double exponential
time. 

For the \TwoExpTime lower bound, we reduce the word problem for
exponentially space bounded alternating Turing machines (ATMs). We
actually use a slightly unusual ATM model which is easily seen to be
equivalent to the standard model.  

An \emph{alternating Turing machine (ATM)} is a tuple
$M=(Q,\Theta,\Gamma,q_0,\Delta)$ where $Q=Q_{\exists}\uplus
Q_{\forall}$ 
is the set of states that consists of \emph{existential states}
in~$Q_{\exists}$ and \emph{universal states} in~$Q_{\forall}$.
Further, $\Theta$ is the input alphabet and $\Gamma$ is the tape
alphabet that contains a \emph{blank symbol} $\Box \notin \Theta$,
$q_0\in Q_{\exists}$ is the \emph{starting state}, and the
\emph{transition relation} $\Delta$ is of the form
$\Delta\subseteq Q\times \Gamma\times Q\times \Gamma \times \{L,R\}.$
The set $\Delta(q,a):=\{(q',a',M)\mid (q,a,q',a',M)\in\Delta\}$ must
contain exactly two or zero elements for every $q\in Q$ and $a \in
\Gamma$. Moreover, the state $q'$ must be from $Q_\forall$ if $q \in
Q_\exists$ and from $Q_\exists$ otherwise, that is, existential and
universal states alternate. Note that there is no accepting state. The
ATM accepts if it runs forever and rejects otherwise. Starting from
the standard ATM model, this can be achieved by assuming that
exponentially space bounded ATMs terminate on any input and then
modifying them to enter an infinite loop from the accepting state.

A \emph{configuration} of an ATM is a word $wqw'$ with
\mbox{$w,w'\in\Gamma^*$} and $q\in Q$. 
We say that $wqw'$ is \emph{existential} if~$q$ is, and likewise for
\emph{universal}.  \emph{Successor configurations} are defined in the
usual way.  Note that every configuration has exactly two successor
configurations.

A \emph{computation tree} of an ATM $M$ on
input $w$ is an infinite tree whose nodes are labeled
with configurations of $M$ such that
\begin{itemize}
	
	\item the root is labeled with the initial configuration
	$q_0w$;
	
	\item if a node is labeled with an existential configuration 
	$wqw'$, then it has a single successor and this successor is labeled 
	with a successor configuration of $wqw'$;
	
	\item if a node is labeled with a universal configuration
	  $wqw'$, then it has two successors and these successors are
	  labeled with the two successor configurations of~$wqw'$.
	
\end{itemize}
An ATM $M$ \emph{accepts} an input $w$ if there is a computation
tree of $M$ on $w$. 

We reduce the word problem for $2^n$-space bounded ATMs which is known
to be \TwoExpTime-hard~\cite{chandraAlternation1981}. The idea of the reduction is as follows. We set
\begin{align*}
\Dmc & =\{A(a),r(b,b),B(b)\}, \\
\Sigma  & = \{r,s,Z,B_{\forall},B_{\exists}^1,B_{\exists}^2\}\cup \{A_\sigma\mid \sigma\in \Gamma\cup (Q\times
\Gamma)\}
\end{align*}
The ontology $\Omc$ enforces that in an $A$-node starts an infinite
$r$-path $\rho$.  Along $\rho$, a counter counts modulo
$2^{n}$ using concept names not in $\Sigma$. In each point of $\rho$
starts an infinite tree along role $s$ that is supposed to mimick the
computation tree of $M$. Along this tree, two counters are maintained:
\begin{itemize}
	
	\item one counter starting at $0$ and counting modulo $2^n$ to
	divide the tree in subpaths of length $2^n$; each such path of
	length $2^n$ represents a configuration;
	
	\item another counter starting at the value of the
	  counter along $\rho$ and
	also counting modulo $2^{n}$.
	
\end{itemize}
To link successive configurations we use
that if $(\Kmc,\{a\},\{b\})$ has no strong $\ALCI(\Sigma)$
solution, then there exist models \Amf and \Bmf of $\Kmc$ such
that $\Amf,a^\Amf\sim_{\Sigma} \Bmf,b^\Bmf$: from $r(b,b)\in \Dmc$ it follows
that in \Amf all nodes on the $r$-path $\rho$ are $\Sigma$-bisimilar.
Thus, each node on the $\rho$ is the starting point of $s$-trees
with identical $\Sigma$-decorations. As on the $m$th $s$-tree the
second counter starts at all nodes at distances $k\times 2^{n}-m$,
for all $k\geq 1$, we are in the position to coordinate all positions
at all successive configurations.

The ontology $\Omc$ is constructed as follows. We first enforce the
infinite $r$-path $\rho$ with the counter, which is realized using
concept names $A_i$, $\overline A_i$, $i<n$:
\begin{align*}
A & \sqsubseteq I_s \sqcap \bigsqcap_{i<n} \overline{A}_i
\\
I_s& \sqsubseteq \exists r.\top\sqcap \forall r.I_s\\
A_i \sqcap \bigsqcap_{j < i} A_j & \sqsubseteq \forall r.
\overline{A}_i\\
\overline{A}_i \sqcap \bigsqcap_{j < i} A_j & \sqsubseteq \forall r.
A_i\\
A_i \sqcap \bigsqcup_{j < i} \overline{A}_j & \sqsubseteq \forall r.
A_i\\
\overline{A}_i \sqcap \bigsqcup_{j < i} \overline{A}_j & \sqsubseteq
\forall r. \overline{A}_i
\end{align*}
Note that all points of the $r$-path satisfy a concept name $I_s$,
from which we start the $s$-trees with two counters, realized using
concept names $U_i,\overline U_i$ and $V_i,\overline V_i$, $i<n$, and
initialized to $0$ and the value of the $A$-counter, respectively:
\begin{align*}
I_s & \sqsubseteq (U=0)\\
I_s \sqcap A_j & \sqsubseteq V_j &j<n\\
I_s \sqcap \overline A_j & \sqsubseteq \overline V_j &j<n \\
\top & \sqsubseteq \exists s.\top
\end{align*}
Here, $(U=0)$ is an abbreviation for the concept
$\bigsqcap_{i=1}^n\overline U_i$, we use similar abbreviations below.
The counters $U_i$ and $V_i$ are incremented along $s$ analogously to
how $A_i$ is incremented along $r$, so we omit details. Configurations
of $M$ are represented between two consecutive points having
$U$-counter value $0$. We next enforce the structure of the
computation tree, assuming that $q_0\in Q_\forall$:
\begin{align*}
  I_s & \sqsubseteq B_{\forall} \\
  (U<2^n-1) \sqcap B_\forall & \sqsubseteq \forall s.B_{\forall} \\
  (U<2^n-1) \sqcap B_\exists^i & \sqsubseteq \forall
  s.B_{\exists}^i && i\in\{1,2\} \\
  (U=2^n-1) \sqcap B_\forall & \sqsubseteq \forall
  s.(B_{\exists}^1\sqcup B_{\exists}^2) \\
  (U=2^n-1) \sqcap (B_\exists^1\sqcup B_{\exists}^2) & \sqsubseteq \forall
  s.B_\forall \\
  (U=2^n-1) \sqcap B_{\forall} & \sqsubseteq \exists s.Z\sqcap \exists
  s.\neg Z
   %
\end{align*}
These sentences enforce that all points which represent a configuration
satisfy exactly one of $B_{\forall},B_{\exists}^1,B_{\exists}^2$
indicating the kind of configuration and, if existential, also a choice of the transition function. The symbol $Z\in \Sigma$ enforces the branching.

We next set the initial configuration, for input $w=a_0,\dots,a_{n-1}$.
\begin{align*}
A & \sqsubseteq  A_{q_{0},a_0}\\
A & \sqsubseteq  \forall s^{k}.A_{a_{k}} & 0<k<n \\
A & \sqsubseteq  \forall s^{n+1}.\mn{Blank} \\
\mn{Blank} & \sqsubseteq A_{\Box} \\
\mn{Blank} \sqcap (U<2^n -1) & \sqsubseteq \forall s. \mn{Blank}
\end{align*}
To coordinate consecutive configurations, we associate with $M$
functions $f_i$, $i\in \{1,2\}$ that map the content of three
consecutive cells of a configuration to the content of the middle cell
in the $i$-the successor configuration (assuming an arbitrary order on
the set $\Delta(q,a)$). In what follows, we ignore the cornercasees that occur
at the border of configurations; they can be treated in a similar way. 
Clearly, for each possible such triple
$(\sigma_1,\sigma_2,\sigma_3)\in \Gamma\cup(Q\times \Gamma)$, there is
an \ALC concept $C_{\sigma_1,\sigma_2,\sigma_3}$ which is true at an
element $a$ of the computation tree iff $a$ is labeled with
$A_{\sigma_1}$, $a$'s $s$-successor $b$ is labeled with
$A_{\sigma_2}$, and $b$'s $s$-successor $c$ is labeled with
$A_{\sigma_3}$. Now, in each configuration, we synchronize elements
with $V$-counter $0$ by including for every $\vec\sigma =
(\sigma_1,\sigma_2,\sigma_3)$ and $i\in\{1,2\}$ the following
sentences: 
\begin{align*}
  (V=2^n-1) \sqcap (U<2^n-2)\sqcap C_{\sigma_1,\sigma_2,\sigma_3} & \sqsubseteq
  \forall s. A^1_{f_1(\sigma)}\sqcap \forall s.A^2_{f_2(\sigma)} \\
  (V=2^n-1) \sqcap (U<2^n-2)\sqcap C_{\sigma_1,\sigma_2,\sigma_3} \sqcap B_\exists^i&
  \sqsubseteq \forall s. A^i_{f_i(\sigma)}
 \end{align*}
The concept names $A^i_{\sigma}$ are used as markers (not in $\Sigma$) and
are propagated along $s$ for $2^n$ steps, exploiting the $V$-counter.
The superscript $i\in\{1,2\}$ determines the successor configuration
that the symbol is referring to. After crossing the end of a
configuration, the symbol $\sigma$ is propagated using concept names
$A_{\sigma}'$ (the superscript is not needed anymore because the
branching happens at the end of the configuration, based on $Z$).
\begin{align*}
  (U<2^n-1) \sqcap A_\sigma^i & \sqsubseteq \forall s. A_{\sigma}^i \\
  (U=2^n-1) \sqcap B_\forall \sqcap A_\sigma^1 & \sqsubseteq \forall
  s.(\neg Z\sqcup A'_\sigma)\\
  (U=2^n-1) \sqcap B_\forall \sqcap A_\sigma^2 & \sqsubseteq \forall
  s.(Z\sqcup A'_\sigma)\\
  (U=2^n-1) \sqcap B_\exists^i \sqcap A_\sigma^i & \sqsubseteq \forall
  s.A'_\sigma && i\in\{1,2\}\\
  (V<2^n-1) \sqcap A'_\sigma & \sqsubseteq \forall s.A'_\sigma \\
  (V=2^n-1) \sqcap A'_\sigma & \sqsubseteq \forall s.A_\sigma
\end{align*}
For those $(q,a)$ with $\Delta(q,a)=\emptyset$, we add the concept inclusion
\[A_{q,a} \sqsubseteq \bot. \]
The following Claim establishes correctness of the reduction

\medskip\noindent\textit{Claim.} 
$M$ accepts the input $w$ iff there exist models $\Amf,\Bmf$ of
$\Kmc$ such that $\Amf,a^\Amf\sim_{\Sigma}\Bmf,b^\Bmf$.

\medskip\noindent\textit{Proof of the Claim.} $(\Rightarrow)$ If $M$
accepts $w$, there is a computation tree of $M$ on $w$. We construct a
single interpretation $\Amf$ as follows. Let $\Amf^*$ be the infinite
tree-shaped structure that represents the computation tree of $M$ on
$w$ as described above, that is, configurations are represented by
sequences of $2^n$ elements linked by role $s$ and labeled by
$B_\forall,B_\exists^1,B_\exists^2$ depending on whether the
configuration is universal or existential, and in the latter case the
superscript indicates which choice has been made for the existential
state. Finally, the first element of the first successor configuration
of a universal configuration is labeled with $Z$. Observe that 
$\Amf^*$ interprets only the symbols in $\Sigma$ as non-empty. Now, we
obtain structures $\Amf_k$, $k<2^n$ from $\Amf^*$ by interpreting
non-$\Sigma$-symbols as follows: 
\begin{itemize}
	
  \item the root of $\Amf_k$ satisfies $I_s$; 

  \item the $U$-counter starts at $0$ at the root and counts modulo
    $2^n$ along each $s$-path;

  \item the $V$-counter starts at $k$ at the root and counts modulo
    $2^n$ along each $s$-path;

  \item the auxiliary concept names of the shape $A_\sigma^i$ and
    $A_\sigma'$ are interpreted in a minimal way so as to satisfy the
    concept inclusions  listed above. Note that the respective concept
    inclusions are Horn, hence there is no choice.

\end{itemize}
Now obtain $\Amf$ from $\Amf^*$ and the $\Amf_k$ by creating an (both
side) infinite $r$-path $\rho$ through $a^\Amf=a$ (with the
corresponding $A$-counter) and adding all $\Amf_k$ to every node on
the $r$-path by identifying the roots of the $\Amf_k$ with the node on
the path.  Additionally, add $\Amf^*$ to $b^\Amf=b$ by identifying $b$
with the root of $\Amf^*$. It should be clear that $\Amf$ is as
required. In particular, \Amf is a model of $\Omc$ and the reflexive
and symmetric closure of 
\begin{itemize}

  \item all pairs $(b,e), (e,e')$, with $e,e'$ on $\rho$, and 

  \item all pairs $(e,e'), (e',e'')$, with $e$ in $\Amf^*$ and $e',e''$
    copies of $e$ in the trees $\Amf_{k}$.

\end{itemize}
is an $\ALCI(\Sigma)$-bisimulation $S$ on $\Amf$ with $(b,a)\in S$.

$(\Leftarrow)$ Let $\Amf,\Bmf$ be models of \Kmc such that
$\Amf,a^\Amf\sim_{\Sigma}\Bmf,b^\Bmf$. As it was argued above, due to
the $r$-self loop at $b^\Amf$, from $a^\Amf$ there has to be an
outgoing infinite $r$-path on which all $s$-trees are
$\Sigma$-bisimilar. There is also an outgoing infinite $r^-$-path with
this property, but it is not relevant for the proof.  All those
$s$-trees are additionally labeled with some auxiliary concept names
not in $\Sigma$, depending on the distance from $a^\Amf$. However, it
can be shown using the CIs in \Omc that all $s$-trees contain a
computation tree of $M$ on input $w$.

\bigskip
Note that we have to take care of inverses in the correctness proof
since the characterization refers to \ALCI-bisimulations. Since the
ontology is actually an \ALC-ontology (and there is a similar
characterization), also strong separability in \ALC is
\TwoExpTime-hard.

\section{Strong Separability in GF and GNF}
We show that strong separability with signature is decidable in the
guarded fragment, GF, and the guarded negation fragment, GNF, of FO.
We also obtain a \ThreeExpTime upper bound for GF and
\TwoExpTime-completeness for GNF. Finally, we show that strong GNF separability with signature restrictions coincides with strong FO separability with signature restrictions for labeled GNF-KBs. The analogous result does not hold for GF. The proofs are based on the link to interpolants and the CIP discussed in Section~\ref{sec:dfstrong}.

The formulas $\varphi_{\Sigma,\vec{a}}(\vec{x})$ and $\neg\varphi_{\Sigma,\vec{b}}(\vec{x})$ defined in Section~\ref{sec:dfstrong} 
are not in GF nor GNF, even if $\Kmc$ is a KB in GF or, respectively,
GNF. To obtain formulas in GF and GNF, take
fresh relation symbols $R_{\Dmc,\vec{a}}$ and $R_{\Dmc,\vec{b}}$
of arity $n$, where $n$ is the number of constants
in $\Dmc$. Then add $R_{\Dmc,\vec{a}}(\vec{y})$ to
$\Kmc_{\Sigma,\vec{a}}$ when constructing
$\varphi_{\Sigma,\vec{a}}(\vec{x})$, where $\vec{y}$ is an enumeration 
of the variables in $\Kmc_{\Sigma,\vec{a}}$. Denote the resulting
formula by $\varphi_{\Sigma,\vec{a}}'(\vec{x})$.
Do the same to construct $\varphi_{\Sigma,\vec{b}}'(\vec{x})$,
using $R_{\Dmc,\vec{b}}$ instead of $R_{\Dmc,\vec{a}}$.
The formulas $\varphi_{\Sigma,\vec{a}}'$ and $\neg\varphi_{\Sigma,\vec{b}}'$ are in GF and GNF if the
KB is given in GF and, respectively, GNF. By construction we
obtain the following result.
\begin{theorem}~\label{thm:red-interpolation}
	Let $\Lmc\in \{\text{GF},\text{GNF}\}$. Then there is a polynomial
	time reduction of strong $\Lmc$-separability with signature to
	$\Lmc$-interpolant existence. Moreover, given a labeled $\LmcO$-KB $(\Kmc,\{\vec{a}\},\{\vec{b}\})$ and $\Sigma\subseteq \text{sig}(\Kmc)$,  the following conditions are equivalent for any formula $\varphi$ in $\Lmc$:
	\begin{enumerate}
		\item $\varphi$ strongly $\Lmc(\Sigma)$-separates $(\Kmc,\{\vec{a}\},\{\vec{b}\})$;
		\item $\varphi$ is an $\Lmc$-interpolant for $\varphi_{\Sigma,\vec{a}}'(\vec x),\neg \varphi_{\Sigma,\vec{b}}'(\vec{x})$.
	\end{enumerate}	
\end{theorem}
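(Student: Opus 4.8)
The plan is to build on Lemma~\ref{lem:int}, which already yields the desired equivalence for the \emph{unprimed} formulas $\varphi_{\Sigma,\vec a}(\vec x)$ and $\neg\varphi_{\Sigma,\vec b}(\vec x)$, and to transport it to the primed formulas after checking that the latter lie in $\Lmc$. There are thus three tasks: (i) verify that $\varphi_{\Sigma,\vec a}'$ and $\neg\varphi_{\Sigma,\vec b}'$ are in $\Lmc$; (ii) show that being an $\Lmc$-interpolant of the primed pair is equivalent to being an $\Lmc$-interpolant of the unprimed pair; and (iii) assemble the reduction and bound its running time.

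For~(i), the fresh atom is present precisely to supply a guard. Writing $\varphi_{\Sigma,\vec a}'(\vec x)=\exists\vec z\,\big(R_{\Dmc,\vec a}(\vec y)\wedge\bigwedge\Kmc_{\Sigma,\vec a}\big)$, the ontology conjuncts are guarded sentences (renaming relation symbols preserves membership in GF resp.\ GNF) and the database conjuncts are atoms. For $\Lmc=\text{GF}$ the only quantifier that could be unguarded is the outer $\exists\vec z$; since $\vec y$ enumerates \emph{all} variables, $R_{\Dmc,\vec a}(\vec y)$ contains every variable of $[\vec x]\cup[\vec z]$ and is a legal guard, so $\varphi_{\Sigma,\vec a}'\in\text{GF}$, and $\neg\varphi_{\Sigma,\vec b}'\in\text{GF}$ by closure of GF under negation. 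For $\Lmc=\text{GNF}$ the existential may remain unguarded, but the negation in $\neg\varphi_{\Sigma,\vec b}'$ must be guarded; here the full-arity atom $R_{\Dmc,\vec b}(\vec y)$, which covers all free variables, is what permits rewriting $\neg\varphi_{\Sigma,\vec b}'$ in guarded-negation form. I expect this to be the main obstacle, since it is the only place where the syntax of GNF dictates the precise choice of a guard atom over the entire variable tuple.

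For~(ii), the decisive point is that the freshly introduced symbols are \emph{private} to their side: $R_{\Dmc,\vec a}$ and the renamed symbols $R^{\vec a}$ occur only in $\varphi_{\Sigma,\vec a}'$, the symbols $R_{\Dmc,\vec b},R^{\vec b}$ only in $\varphi_{\Sigma,\vec b}'$, and none of them occurs in a candidate interpolant $\chi$ (which is over $\Sigma$). Consequently $\text{sig}(\varphi_{\Sigma,\vec a}')\cap\text{sig}(\neg\varphi_{\Sigma,\vec b}')=\Sigma$, matching the non-projective signature constraint on the separator, so the third interpolant condition coincides for the two pairs. For the remaining two, note $\varphi_{\Sigma,\vec a}'\models\varphi_{\Sigma,\vec a}$, while any model of $\varphi_{\Sigma,\vec a}$ (resp.\ of $\chi\wedge\varphi_{\Sigma,\vec b}$) expands on the fresh symbol $R_{\Dmc,\vec a}$ (resp.\ $R_{\Dmc,\vec b}$) to a model of $\varphi_{\Sigma,\vec a}'$ (resp.\ of $\chi\wedge\varphi_{\Sigma,\vec b}'$) without altering the truth of any $\Sigma$-formula. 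This yields $\varphi_{\Sigma,\vec a}'\models\chi$ iff $\varphi_{\Sigma,\vec a}\models\chi$, and $\chi\models\neg\varphi_{\Sigma,\vec b}'$ iff $\chi\models\neg\varphi_{\Sigma,\vec b}$. Hence the interpolant conditions coincide for the primed and unprimed pairs, and Lemma~\ref{lem:int} delivers the equivalence of Conditions~1 and~2.

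For~(iii), (i) and (ii) together show that $(\Kmc,\{\vec a\},\{\vec b\})$ is strongly $\Lmc(\Sigma)$-separable iff $\varphi_{\Sigma,\vec a}',\neg\varphi_{\Sigma,\vec b}'$ admit an $\Lmc$-interpolant (an interpolant can exist only when $\varphi_{\Sigma,\vec a}'\models\neg\varphi_{\Sigma,\vec b}'$, so the interpolant-existence query subsumes the entailment check), and both formulas are constructible in time polynomial in $\|\Kmc\|$. For arbitrary example sets one first reduces to singleton $P,N$---valid because GF and GNF are closed under $\wedge$ and $\vee$---and then issues one interpolant-existence query per pair $(\vec a,\vec b)\in P\times N$, a polynomial-time conjunctive reduction that is adequate for the complexity bounds derived afterwards.
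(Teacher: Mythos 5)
Your proposal is correct and takes essentially the same route as the paper: the paper's proof of Theorem~\ref{thm:red-interpolation} is simply ``by construction'' on top of Lemma~\ref{lem:int}, and your three steps---membership of the primed formulas in GF/GNF via the full-arity guard atom $R_{\Dmc,\vec a}(\vec y)$ resp.\ $R_{\Dmc,\vec b}(\vec y)$, equivalence of the primed and unprimed interpolation problems because the fresh symbols are private and models expand conservatively over them, and the polynomial-time assembly with the reduction to singleton $P,N$---are exactly the details that ``by construction'' leaves implicit. Your elaboration of step~(ii), showing $\varphi'_{\Sigma,\vec a}\models\chi$ iff $\varphi_{\Sigma,\vec a}\models\chi$ and $\chi\models\neg\varphi'_{\Sigma,\vec b}$ iff $\chi\models\neg\varphi_{\Sigma,\vec b}$ with shared signature exactly $\Sigma$, is a faithful (and slightly more explicit) rendering of the paper's intended argument.
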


It has been proved
in~\cite{DBLP:journals/tocl/BenediktCB16,DBLP:journals/jsyml/BaranyBC18}
that GNF has the CIP. Thus, we obtain the following result.
\begin{theorem}
  Strong GNF-separability with signature is \TwoExpTime-complete.
  Moreover, a GNF-KB $(\Kmc, \{\vec{a}\},\{\vec{b}\})$ is strongly
  GNF($\Sigma$)-separable iff it is strongly FO$(\Sigma)$-separable.
\end{theorem}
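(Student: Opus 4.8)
The plan is to combine the reduction to interpolant existence from Theorem~\ref{thm:red-interpolation} with the CIP of GNF and the known complexity of satisfiability in GNF; both the complexity claim and the equivalence with strong FO-separability go through the same characterization, so I would establish that characterization first and then read off the two statements.

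For the complexity, I would argue as follows. Given a labeled GNF-KB $(\Kmc,\{\vec a\},\{\vec b\})$ and $\Sigma\subseteq\text{sig}(\Kmc)$, Theorem~\ref{thm:red-interpolation} reduces strong GNF$(\Sigma)$-separability in polynomial time to the existence of a GNF-interpolant for $\varphi_{\Sigma,\vec a}'(\vec x)$ and $\neg\varphi_{\Sigma,\vec b}'(\vec x)$, both of which are GNF-formulas by construction. Since GNF has the CIP, such an interpolant exists precisely when $\varphi_{\Sigma,\vec a}'(\vec x)\models\neg\varphi_{\Sigma,\vec b}'(\vec x)$, equivalently when the GNF-formula $\varphi_{\Sigma,\vec a}'(\vec x)\wedge\varphi_{\Sigma,\vec b}'(\vec x)$ (GNF is closed under conjunction) is unsatisfiable. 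As satisfiability in GNF is decidable in \TwoExpTime~\cite{DBLP:journals/jsyml/BaranyBC18}, this yields the \TwoExpTime upper bound. The matching lower bound is inherited from the case without signature: strong GNF-separability without signature is already \TwoExpTime-hard~\cite{KR}, and it is the special case of the problem with signature obtained by setting $\Sigma=\text{sig}(\Kmc)$ (recall that for strong separability the projective and non-projective notions coincide, so no symbols need to be excluded).

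For the ``moreover'' part, the direction from strong GNF$(\Sigma)$-separable to strong FO$(\Sigma)$-separable is immediate since every GNF-formula is an FO-formula. For the converse I would reduce both notions to the same entailment. By the above, $(\Kmc,\{\vec a\},\{\vec b\})$ is strongly GNF$(\Sigma)$-separable iff $\varphi_{\Sigma,\vec a}'(\vec x)\models\neg\varphi_{\Sigma,\vec b}'(\vec x)$, while by the characterization of strong FO$(\Sigma)$-separability (the theorem following Lemma~\ref{lem:int}) it is strongly FO$(\Sigma)$-separable iff $\varphi_{\Sigma,\vec a}(\vec x)\models\neg\varphi_{\Sigma,\vec b}(\vec x)$. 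It therefore suffices to show that these two entailments coincide, i.e.\ that $\varphi_{\Sigma,\vec a}'\wedge\varphi_{\Sigma,\vec b}'$ and $\varphi_{\Sigma,\vec a}\wedge\varphi_{\Sigma,\vec b}$ are equisatisfiable. This is where the only real content lies: the primed formulas differ from the unprimed ones solely by the conjuncts $R_{\Dmc,\vec a}(\vec y)$ and $R_{\Dmc,\vec b}(\vec y)$, whose relation symbols are fresh and distinct from each other and from all remaining symbols. Hence any model of the unprimed conjunction can be turned into a model of the primed one by interpreting $R_{\Dmc,\vec a}$ and $R_{\Dmc,\vec b}$ so as to contain the tuples witnessing the respective existential quantifiers, and conversely any model of the primed conjunction is a model of the unprimed one after forgetting these fresh symbols. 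Chaining the three equivalences gives strong GNF$(\Sigma)$-separability iff strong FO$(\Sigma)$-separability.

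I expect the main obstacle to be bookkeeping rather than a deep difficulty: one must verify that $\varphi_{\Sigma,\vec a}'$ and $\neg\varphi_{\Sigma,\vec b}'$ genuinely lie in GNF (so that the CIP of GNF is applicable) and that their shared signature is exactly $\Sigma$ (so that the interpolant is a legitimate non-projective $\Sigma$-separating formula). A secondary point requiring care is the fine print that GNF enjoys the CIP only when non-shared constants are admitted in interpolants; this is harmless here because our formulas contain no constants, the database constants having already been replaced by variables in the construction of $\varphi_{\Sigma,\vec a}'$ and $\varphi_{\Sigma,\vec b}'$, so the interpolant can be taken constant-free and over $\Sigma$.
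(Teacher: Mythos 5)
Your proposal is correct and follows essentially the same route as the paper: the polynomial-time reduction of Theorem~\ref{thm:red-interpolation} to GNF-interpolant existence, the CIP of GNF turning interpolant existence into validity of $\varphi'_{\Sigma,\vec a}\rightarrow\neg\varphi'_{\Sigma,\vec b}$ (decidable in \TwoExpTime), the lower bound inherited from the signature-free case of \cite{KR} via $\Sigma=\text{sig}(\Kmc)$, and the FO-equivalence obtained by chaining the two interpolation characterizations through the equisatisfiability of the primed and unprimed formulas. You in fact spell out details the paper leaves implicit (notably the equisatisfiability argument for the fresh guard atoms and the constant-freeness of the interpolant), but the underlying argument is the same.
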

In contrast, GF does not enjoy the CIP~\cite{DBLP:journals/sLogica/HooglandM02} and so interpolant
existence in GF does not reduce to a validity. In fact,
decidability and \ThreeExpTime-completeness for GF-interpolant
existence has only recently been established~\cite{jung2020living}.
From this result and the reduction in
Theorem~\ref{thm:red-interpolation}, we obtain a \ThreeExpTime-upper
bound for strong GF-separability with signature. A matching lower
bound can be shown similar to the lower bound for GF-interpolant
existence. 
\begin{theorem}
  Strong GF-separability with signature is \ThreeExpTime-complete.
\end{theorem}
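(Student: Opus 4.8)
The plan is to treat the upper and lower bounds separately, in both cases exploiting the tight link to interpolant existence set up in Theorem~\ref{thm:red-interpolation}.

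For the upper bound I would simply compose two facts. By Theorem~\ref{thm:red-interpolation} strong GF-separability with signature reduces in polynomial time to GF-interpolant existence, and by~\cite{jung2020living} the latter is decidable in \ThreeExpTime. Since the reduction is polynomial, the composed procedure still runs in \ThreeExpTime, so no further work is needed here.

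For the lower bound, the plan is to replay the \ThreeExpTime-hardness argument for GF-interpolant existence from~\cite{jung2020living} directly in the separation setting, reducing from the word problem of doubly-exponentially space bounded ATMs, the natural analogue of the exponentially space bounded ATMs used for the \TwoExpTime bound in Section~\ref{sec:dfstrong}. The reason the transfer is clean is that strong GF-separability of $(\Kmc,\{a\},\{b\})$ admits a characterization entirely parallel to Theorem~\ref{thm:alcistrongcrit}: it \emph{fails} exactly when there are models $\Amf,\Bmf$ of $\Kmc$ that are guarded $\Sigma$-bisimilar at $a^\Amf$ and $b^\Bmf$, i.e.\ $\Amf,a^\Amf\sim_{\text{GF},\Sigma}\Bmf,b^\Bmf$. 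This is precisely the condition characterizing non-existence of a GF-interpolant for the two sides of an instance, so the encoding underlying the interpolant-existence lower bound can be re-expressed as a single KB with one positive and one negative example. I would not attempt a black-box reverse reduction from GF-interpolant existence, because, as in the undecidability section, relativization fails in GF: interpolant instances are pairs of independent formulas, whereas in separability both witnessing models must satisfy the same ontology, so the two ``sides'' have to be pushed into the signature split and the two example elements.

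Concretely, I would build $\Kmc=(\Omc,\Dmc)$ with $\Dmc$ attaching disjoint markers to $a$ and $b$ (as in the \ALCI and $\mathcal{ALCFIO}$ reductions, where a trigger such as $A(a)$ or $A_1(a)$ and an inert dummy such as $Y(b)$ play this role), and with $\Omc$ a GF-ontology that, on the side reachable from the positive example, forces a faithful, defect-free encoding of an accepting computation tree, while on the negative side it leaves enough freedom that a guarded $\Sigma$-bisimilar companion model can always be completed precisely when the machine does \emph{not} accept. As in the \TwoExpTime reduction, the split between $\Sigma$ and non-$\Sigma$ symbols takes over the role of shared versus private symbols: tape contents, states, and the tree backbone live in $\Sigma$, while the counters, defect markers, and the ternary guards witnessing bisimulation steps are kept outside $\Sigma$. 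The third exponential is obtained by nesting counters: doubly-exponentially bounded space needs addresses up to $2^{2^n}$, which cannot be stored directly and must be realised by an outer exponential counter whose cells themselves carry an inner exponential counter, all maintained by guarded CIs.

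I expect the main obstacle to be exactly this synchronisation across the nested counters under the bisimulation characterisation: one must arrange $\Omc$ so that the guarded $\Sigma$-bisimulation between the two example elements can be completed if and only if the ATM rejects, and is irreparably blocked once a correct accepting computation tree is enforced on the positive side, all while remaining within GF, where every quantifier needs an explicit guard and one cannot appeal to nominals or functionality. Verifying both directions, namely that an accepting computation yields a genuinely strongly separating GF formula and that the absence of one yields a guarded $\Sigma$-bisimilar model pair, is where the bulk of the technical care will go, closely following the corresponding verification in~\cite{jung2020living}.
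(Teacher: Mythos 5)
Your proposal takes essentially the same approach as the paper: the upper bound is obtained exactly as there, by composing the polynomial-time reduction of Theorem~\ref{thm:red-interpolation} with the \ThreeExpTime procedure for GF-interpolant existence from~\cite{jung2020living}, and the lower bound is, as the paper states in one sentence, shown by adapting the \ThreeExpTime-hardness proof for GF-interpolant existence. The paper provides no further detail on that adaptation, so your sketch (guarded $\Sigma$-bisimulation characterization, doubly-exponentially space bounded ATMs, nested counters) elaborates rather than deviates from the paper's argument.
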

As GF does not enjoy the CIP, we also do not obtain that
a GF-KB $(\Kmc, \{\vec{a}\},\{\vec{b}\})$ is strongly GF($\Sigma$)-separable iff it is strongly FO$(\Sigma)$-separable.
In fact, the counterexample to CIP constructed in~\cite{DBLP:journals/jsyml/BaranyBC18} is easily
adapted to show the following.
\begin{theorem}
	Strong FO($\Sigma)$-separability of a 
	GF-KB $(\Kmc, \{\vec{a}\},\{\vec{b}\})$ does not imply strong GF($\Sigma$)-separabilty of $(\Kmc, \{\vec{a}\},\{\vec{b}\})$.
\end{theorem}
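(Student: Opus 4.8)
The plan is to transfer the failure of the Craig interpolation property (CIP) for GF into a failure of the implication ``strong FO-separability $\Rightarrow$ strong GF-separability'', exploiting the tight correspondence between strong separability and interpolant existence established in Lemma~\ref{lem:int} and Theorem~\ref{thm:red-interpolation}. Concretely, it suffices to exhibit a single labeled GF-KB $(\Kmc,\{\vec a\},\{\vec b\})$ and a signature $\Sigma\subseteq\text{sig}(\Kmc)$ such that the two induced formulas $\varphi'_{\Sigma,\vec a}(\vec x)$ and $\neg\varphi'_{\Sigma,\vec b}(\vec x)$ are jointly unsatisfiable but admit no GF-interpolant. By Theorem~\ref{thm:red-interpolation}, the absence of a GF-interpolant means exactly that $(\Kmc,\{\vec a\},\{\vec b\})$ is not strongly GF$(\Sigma)$-separable; and since FO enjoys the CIP, joint unsatisfiability of the two formulas yields an FO-interpolant over the shared signature, which by Lemma~\ref{lem:int} is a strongly FO$(\Sigma)$-separating formula.

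The starting point is the concrete witness for the failure of the CIP in GF from \cite{DBLP:journals/jsyml/BaranyBC18}: a pair of GF-formulas $\alpha,\beta$ with $\alpha\models\beta$, shared signature $\Sigma$, and no GF$(\Sigma)$-interpolant, the last fact being proved by a guarded-$\Sigma$-bisimulation argument that exhibits two structures agreeing on all GF$(\Sigma)$-properties but lying on opposite sides of $\alpha$ and $\beta$. First I would re-present this example in the special syntactic shape produced by the KB encoding of Section~\ref{sec:dfstrong}: both $\varphi'_{\Sigma,\vec a}$ and $\varphi'_{\Sigma,\vec b}$ are renamings of one and the same relativized KB, sharing precisely the $\Sigma$-symbols and the tuple $\vec x$ that encodes the database constants, while every non-$\Sigma$-symbol is renamed apart into an $\vec a$- and a $\vec b$-copy. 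The task is therefore to package the matrix of the counterexample into a single GF-ontology $\Omc$ together with a database $\Dmc$ whose distinguished constants are $\vec a$ and $\vec b$, taking $\Sigma$ to be the shared signature of the example, so that the non-$\Sigma$-renaming performed by the construction reproduces (equivalents of) $\alpha$ on the $\vec a$-side and of $\neg\beta$ on the $\vec b$-side.

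The main obstacle is precisely this \emph{reverse engineering}: the reduction of Theorem~\ref{thm:red-interpolation} runs from separability to interpolant existence, and to obtain a separability counterexample I must land inside its image. This forces the two halves of the interpolation instance to be two isomorphic copies, modulo renaming of the non-shared symbols, of a single guarded existential template, augmented by the fresh guard atoms $R_{\Dmc,\vec a}(\vec y)$ and $R_{\Dmc,\vec b}(\vec y)$ that make the formulas guarded. I would check that the example of \cite{DBLP:journals/jsyml/BaranyBC18} already has, or can be symmetrised into, this shape, so that a single ontology and database suffice to realise both sides. I would then verify that the added guard atoms are harmless: being fresh and occurring on only one side each, they are not shared symbols and hence cannot appear in any GF-interpolant, so the guarded-$\Sigma$-bisimulation that rules out interpolants for $\alpha,\beta$ still rules them out for $\varphi'_{\Sigma,\vec a},\neg\varphi'_{\Sigma,\vec b}$.

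With the KB in hand, the two required conclusions follow mechanically. Strong FO$(\Sigma)$-separability holds because $\varphi'_{\Sigma,\vec a}\models\neg\varphi'_{\Sigma,\vec b}$ together with the CIP of FO yields the separating formula via Lemma~\ref{lem:int}, whereas the absence of a GF-interpolant, transported through Theorem~\ref{thm:red-interpolation}, shows that no GF$(\Sigma)$-formula strongly separates $(\Kmc,\{\vec a\},\{\vec b\})$. This establishes the claimed non-implication.
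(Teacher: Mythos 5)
Your proposal takes essentially the same route as the paper, whose entire proof of this theorem is the remark that the CIP counterexample for GF from \cite{DBLP:journals/jsyml/BaranyBC18} is ``easily adapted'' via the separability--interpolation correspondence of Lemma~\ref{lem:int} and Theorem~\ref{thm:red-interpolation}, exactly the mechanism you spell out. Your plan is correct at (indeed beyond) the paper's own level of detail: you isolate precisely the point the paper glosses over, namely that the counterexample must be packaged so that both sides are non-$\Sigma$-renamings of a single KB (i.e., land in the image of the reduction) and that the fresh, unshared guard atoms $R_{\Dmc,\vec a}$, $R_{\Dmc,\vec b}$ do not disturb the guarded-bisimulation argument ruling out a GF-interpolant.
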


\section{Conclusion}
We have investigated the complexity of deciding weak and strong separability of labeled KBs with signature restrictions for $\mathcal{ALCI}$ and guarded fragments of FO, and observed a close link between weak separability and uniform interpolants on the one hand, and between strong separability and Craig interpolants on the other. Numerous questions remain to be explored: what is the size of separating formulas and how can they be computed efficiently, if they exist? What is the complexity of weak non-projective separability with signature restrictions for $\mathcal{ALCI}$? We conjecture that this is still \TwoExpTime-complete but lack a proof. What happens for DLs with number restrictions and/or nominals? We have shown that weak projective separability is undecidable for $\mathcal{ALCFIO}$ with signature restrictions, but it could well be decidable for $\mathcal{ALCQO}$. For strong separability, there are many exciting open problems: is strong separability with signature restrictions decidable for $\mathcal{ALCFIO}$? In this case, even the case without signature restrictions has not yet been investigated and could well already be tricky. Is it decidable for the two-variable fragment of FO? For the two-variable fragment, the case without signature restrictions has been investigated in~\cite{KR}, and \NExpTime-completeness established.
Attacking these problems is closely related to deciding the existence of Craig interpolants and computing (good) separating formulas is closely related to computing (good) Craig interpolants. Also of interest are
the same questions for Horn DLs. The situation for $\mathcal{EL}$ and $\mathcal{ELI}$ has been explored in \cite{DBLP:conf/ijcai/FunkJLPW19,aaaithis}, but more expressive ones have not yet been considered. 

\bibliographystyle{splncs03}
\bibliography{local}

\begin{thebibliography}{10}
\providecommand{\url}[1]{\texttt{#1}}
\providecommand{\urlprefix}{URL }

\bibitem{ANvB98}
Andr{\'{e}}ka, H., N{\'{e}}meti, I., {van Benthem}, J.: Modal languages and
  bounded fragments of predicate logic. J. Philosophical Logic  27(3),
  217--274 (1998)

\bibitem{DBLP:journals/tods/ArenasD16}
Arenas, M., Diaz, G.I.: The exact complexity of the first-order logic
  definability problem. {ACM} Trans. Database Syst.  41(2),  13:1--13:14 (2016)

\bibitem{DBLP:conf/www/ArenasDK16}
Arenas, M., Diaz, G.I., Kostylev, E.V.: Reverse engineering {SPARQL} queries.
  In: Proc.\ of {WWW}. pp. 239--249 (2016)

\bibitem{All2020}
Artale, A., Jung, J.C., Mazzullo, A., Ozaki, A., Wolter, F.: Living without
  {B}eth and {C}raig: Explicit definitions and interpolants in description
  logics with nominals (2020), submitted

\bibitem{handbook}
Baader, F., Deborah, Calvanese, D., McGuiness, D.L., Nardi, D.,
  Patel-Schneider, P.F. (eds.): The Description Logic Handbook. Cambridge
  University Press (2003)

\bibitem{DL-Textbook}
Baader, F., Horrocks, I., Lutz, C., Sattler, U.: An Introduction to Description
  Logics. Cambride University Press (2017)

\bibitem{DBLP:conf/ilp/BadeaN00}
Badea, L., Nienhuys{-}Cheng, S.: A refinement operator for description logics.
  In: Proc. of {ILP}. pp. 40--59 (2000)

\bibitem{DBLP:journals/jsyml/BaranyBC18}
B{\'{a}}r{\'{a}}ny, V., Benedikt, M., ten Cate, B.: Some model theory of
  guarded negation. J. Symb. Log.  83(4),  1307--1344 (2018)

\bibitem{DBLP:conf/icdt/Barcelo017}
Barcel{\'{o}}, P., Romero, M.: The complexity of reverse engineering problems
  for conjunctive queries. In: Proc.\ of {ICDT}. pp. 7:1--7:17 (2017)

\bibitem{DBLP:journals/tocl/BenediktCB16}
Benedikt, M., ten Cate, B., {Vanden Boom}, M.: Effective interpolation and
  preservation in guarded logics. {ACM} Trans. Comput. Log.  17(2),  8:1--8:46
  (2016)

\bibitem{DBLP:conf/kr/BorgidaTW16}
Borgida, A., Toman, D., Weddell, G.E.: On referring expressions in query
  answering over first order knowledge bases. In: Proc. of {KR}. pp. 319--328
  (2016)

\bibitem{DBLP:journals/ai/BotoevaKRWZ16}
Botoeva, E., Kontchakov, R., Ryzhikov, V., Wolter, F., Zakharyaschev, M.: Games
  for query inseparability of description logic knowledge bases. Artif. Intell.
   234,  78--119 (2016)

\bibitem{DBLP:journals/ai/BotoevaLRWZ19}
Botoeva, E., Lutz, C., Ryzhikov, V., Wolter, F., Zakharyaschev, M.: Query
  inseparability for {ALC} ontologies. Artif. Intell.  272,  1--51 (2019)

\bibitem{DBLP:conf/www/BuhmannLWB18}
B{\"{u}}hmann, L., Lehmann, J., Westphal, P., Bin, S.: {DL}-learner -
  structured machine learning on semantic web data. In: Proc. of {WWW}. pp.
  467--471 (2018)

\bibitem{TenEtAl13}
ten Cate, B., Franconi, E., Seylan, I.: Beth definability in expressive
  description logics. J. Artif. Intell. Res.  48,  347--414 (2013)

\bibitem{chandraAlternation1981}
Chandra, A.K., Kozen, D.C., Stockmeyer, L.J.: Alternation. J. ACM  28,
  114--133 (1981)

\bibitem{DBLP:conf/ekaw/Fanizzi0dE18}
Fanizzi, N., Rizzo, G., d'Amato, C., Esposito, F.: {DLFoil}: Class expression
  learning revisited. In: Proc. of {EKAW}. pp. 98--113 (2018)

\bibitem{DBLP:conf/ijcai/FunkJLPW19}
Funk, M., Jung, J.C., Lutz, C., Pulcini, H., Wolter, F.: Learning description
  logic concepts: When can positive and negative examples be separated? In:
  Proc. of {IJCAI}. pp. 1682--1688 (2019)

\bibitem{DBLP:conf/kr/GhilardiLW06}
Ghilardi, S., Lutz, C., Wolter, F.: Did {I} damage my ontology? {A} case for
  conservative extensions in description logics. In: Proc.\ of {KR}. pp.
  187--197. {AAAI} Press (2006)

\bibitem{goranko20075}
Goranko, V., Otto, M.: Model theory of modal logic. In: Handbook of Modal
  Logic, pp. 249--329. Elsevier (2007)

\bibitem{DBLP:journals/jsyml/Gradel99}
Gr{\"{a}}del, E.: On the restraining power of guards. J. Symb. Log.  64(4),
  1719--1742 (1999)

\bibitem{HKGrauS08}
Grau, B.C., Horrocks, I., Kazakov, Y., Sattler, U.: Modular reuse of
  ontologies: Theory and practice. J.\ of Artifical Intelligence Research  31,
  273--318 (2008)

\bibitem{GuJuSa-IJCAI18}
Guti{\'e}rrez-Basulto, V., Jung, J.C., Sabellek, L.: Reverse engineering
  queries in ontology-enriched systems: The case of expressive {Horn}
  description logic ontologies. In: Proc. of {IJCAI-ECAI} (2018)

\bibitem{DBLP:journals/sLogica/HooglandM02}
Hoogland, E., Marx, M.: Interpolation and definability in guarded fragments.
  Studia Logica  70(3),  373--409 (2002)

\bibitem{JLMSW17}
Jung, J., Lutz, C., Martel, M., Schneider, T., Wolter, F.: Conservative
  extensions in guarded and two-variable fragments. In: Proc.\ of {ICALP}. pp.
  108:1--108:14. Schloss Dagstuhl -- LZI (2017)

\bibitem{KR}
Jung, J.C., Lutz, C., Pulcini, H., Wolter, F.: Logical separability of
  incomplete data under ontologies. In: Proc.\ of {KR}. IJCAI (2020)

\bibitem{aaaithis}
Jung, J.C., Lutz, C., Wolter, F.: Least general generalizations in description
  logic: Verification and existence. In: Proc. of {AAAI}. pp. 2854--2861.
  {AAAI} Press (2020)

\bibitem{jung2020living}
Jung, J.C., Wolter, F.: Living without beth and craig: Explicit definitions and
  interpolants in the guarded fragment (2020), available at
  http://arxiv.org/abs/2007.01597

\bibitem{kalashnikov2018fastqre}
Kalashnikov, D.V., Lakshmanan, L.V., Srivastava, D.: Fastqre: Fast query
  reverse engineering. In: Proc.\ of {SIGMOD}. pp. 337--350 (2018)

\bibitem{DBLP:journals/tods/KimelfeldR18}
Kimelfeld, B., R{\'{e}}, C.: A relational framework for classifier engineering.
  {ACM} Trans. Database Syst.  43(3),  11:1--11:36 (2018),
  \url{https://doi.org/10.1145/3268931}

\bibitem{KonevLWW09}
Konev, B., Lutz, C., Walther, D., Wolter, F.: Formal properties of
  modularisation. In: Modular Ontologies, Lecture Notes in Computer Science,
  vol. 5445, pp. 25--66. Springer (2009)

\bibitem{DBLP:journals/coling/KrahmerD12}
Krahmer, E., van Deemter, K.: Computational generation of referring
  expressions: {A} survey. Computational Linguistics  38(1),  173--218 (2012)

\bibitem{polconcept_learning}
Lehmann, J., Fanizzi, N., B{\"u}hmann, L., d'Amato, C.: Concept learning. In:
  Perspectives on Ontology Learning, pp. 71--91. AKA / IOS Press (2014)

\bibitem{DBLP:journals/ml/LehmannH10}
Lehmann, J., Hitzler, P.: Concept learning in description logics using
  refinement operators. Machine Learning  78,  203--250 (2010)

\bibitem{TBoxpaper}
Lutz, C., Piro, R., Wolter, F.: Description logic {T}{B}oxes: Model-theoretic
  characterizations and rewritability. In: Proc. of {IJCAI} (2011)

\bibitem{DBLP:conf/ijcai/LutzWW07}
Lutz, C., Walther, D., Wolter, F.: Conservative extensions in expressive
  description logics. In: Proc.\ of {IJCAI}. pp. 453--458 (2007)

\bibitem{DBLP:conf/ijcai/LutzW11}
Lutz, C., Wolter, F.: Foundations for uniform interpolation and forgetting in
  expressive description logics. In: Proc.\ of {IJCAI}. pp. 989--995.
  {IJCAI/AAAI} (2011)

\bibitem{martins2019reverse}
Martins, D.M.L.: Reverse engineering database queries from examples:
  State-of-the-art, challenges, and research opportunities. Information Systems
   (2019)

\bibitem{DBLP:conf/gcai/Ortiz19}
Ortiz, M.: Ontology-mediated queries from examples: a glimpse at the {DL-Lite}
  case. In: Proc. of {GCAI}. pp. 1--14 (2019)

\bibitem{DBLP:conf/semweb/PetrovaKGH19}
Petrova, A., Kostylev, E.V., Grau, B.C., Horrocks, I.: Query-based entity
  comparison in knowledge graphs revisited. In: Proc.\ of {ISWC}. pp. 558--575.
  Springer (2019)

\bibitem{DBLP:conf/semweb/PetrovaSGH17}
Petrova, A., Sherkhonov, E., Grau, B.C., Horrocks, I.: Entity comparison in
  {RDF} graphs. In: Proc.\ of {ISWC}. pp. 526--541 (2017)

\bibitem{DBLP:conf/aaai/SarkerH19}
Sarker, M.K., Hitzler, P.: Efficient concept induction for description logics.
  In: Proc.\ of {AAAI}. pp. 3036--3043 (2019)

\bibitem{DBLP:conf/sigmod/TranCP09}
Tran, Q.T., Chan, C., Parthasarathy, S.: Query by output. In: Proc.\ of {PODS}.
  pp. 535--548. {ACM} (2009)

\bibitem{DBLP:journals/vldb/TranCP14}
Tran, Q.T., Chan, C.Y., Parthasarathy, S.: Query reverse engineering. {VLDB} J.
   23(5),  721--746 (2014)

\bibitem{DBLP:journals/fuin/TranHHNN14}
Tran, T., Ha, Q., Hoang, T., Nguyen, L.A., Nguyen, H.S.: Bisimulation-based
  concept learning in description logics. Fundam. Inform.  133(2-3),  287--303
  (2014)

\bibitem{DBLP:conf/icalp/Vardi98}
Vardi, M.Y.: Reasoning about the past with two-way automata. In: Proc. of
  ICALP'98. pp. 628--641 (1998)

\bibitem{DBLP:conf/pods/WeissC17}
Weiss, Y.Y., Cohen, S.: Reverse engineering spj-queries from examples. In:
  Proc.\ of {PODS}. pp. 151--166. {ACM} (2017)

\bibitem{DBLP:conf/sigmod/ZhangEPS13}
Zhang, M., Elmeleegy, H., Procopiuc, C.M., Srivastava, D.: Reverse engineering
  complex join queries. In: Proc.\ of {SIGMOD}. pp. 809--820. {ACM} (2013)

\end{thebibliography}

\newpage

\appendix

\section{Proof of Theorem~\ref{thm:alci-refinement}}

We formulate the result to be shown again.

\medskip
\noindent
{\bf Theorem~\ref{thm:alci-refinement}}
	Assume a labeled $\ALCI$-KB $(\Kmc,P,\{b\})$ and $\Sigma\subseteq \text{sig}(\Kmc)$ are given.
	Then the following conditions are equivalent:
	\begin{enumerate}
		\item $\Kmc=(\Omc,\Dmc)$ is projectively $\mathcal{ALCI}(\Sigma)$-separable.
		\item there exists a forest model $\Amf$ of $\Kmc$ of finite outdegree and a signature $\Sigma'$ such that  $\Sigma' \cap \text{sig}(\Kmc)\subseteq \Sigma$
		and for all models $\Bmf$ of $\Kmc$ and all $a\in P$:
		$
		\Bmf,a^{\Bmf} \not\sim_{\ALCI,\Sigma'} \Amf, b^{\Amf}.
		$
		\item there exists a forest model $\Amf$ of $\Kmc$ of finite outdegree 
		such that for all models $\Bmf$ of $\Kmc$ and all $a\in P$:
		$
		\Bmf,a^{\Bmf} \not\sim_{\ALCI,\Sigma}^{f} \Amf, b^{\Amf}.
		$
		\item there exists a forest model $\Amf$ of $\Kmc$ of finite outdegree such that for all $a\in P$: $\Dmc_{\text{con}(a)},a \not\rightarrow^{\Sigma}_{c} \Amf,b^{\Amf}$.
	\end{enumerate}  

\medskip
\noindent
\begin{proof}
``1 $\Rightarrow$ 2''. Take an $\ALCI$-concept $C$ with $\text{sig}(C) \cap \text{sig}(\Kmc) \subseteq \Sigma$ such $C$ separates
$(\Kmc,P,\{b\})$. There exists a model $\Amf$ of $\Kmc$ of finite outdegree
such that $b^{\Amf}\in (\neg C)^{\Amf}$. Let $\Sigma'= \text{sig}(C)$. Then $\Amf$
and $\Sigma'$ are as required for Condition~2.

``2 $\Rightarrow$ 3''. Take a forest model $\Amf$ and $\Sigma'$ such that
Condition 2 holds. 
We claim that Condition~3 holds for $\Amf$ as well. 
Suppose that there exists a model $\Bmf$ of $\Kmc$, $a\in P$, and a functional $\Sigma$-bisimulation $f$ witnessing $\Bmf,a^{\Bmf} \sim_{\ALCI,\Sigma}^{f} \Amf,b^{\Amf}$.

Define $\Bmf'$ by expanding $\Bmf$ as follows: 
\begin{itemize}
	\item for every concept name $A\in \Sigma'\setminus \text{sig}(\Kmc)$
	and $d\in \text{dom}(f)$, let $d\in A^{\Bmf'}$ if $f(d) \in A^{\Amf}$;
	\item for every role $R$ over $\Sigma'\setminus \text{sig}(\Kmc)$
	and $d\in \text{dom}(f)$, if there exists $e\in \text{dom}(\Amf)$ with $(f(d),e)\in R^{\Amf}$,
	  then add a disjoint copy of $\Amf$ to $\Bmf$ and add $(d,e')$ to $R^{\Bmf'}$ for the copy $e'$ of $e$.  
\end{itemize}
It is easy to see that $\Bmf',a^{\Bmf'} \sim_{\ALCI,\Sigma'} \Amf,b^{\Amf}$, and we have derived a contradiction.

``3 $\Rightarrow$ 4''. Take a forest model $\Amf$ such that
Condition 3 holds. We claim that Condition~4 holds for $\Amf$ as well. 
For a proof by contradiction let $h$ be a $\Sigma$-homomorphism 
and $t_{d}$, $d\in \text{dom}(\Dmc)$, be $\Kmc$-types,
and $a\in P$ such $h$ refutes Condition 4. Take models $\Bmf_{d}$ of $\Omc$
such that $\Bmf_{d},d \sim_{\ALCI,\Sigma} \Amf,h(d)$. We may assume that the $\Bmf_{d}$ are tree-shaped with root $d$ and that the bisimulations are functions $f_{d}$.
Now attach to every $d\in \text{dom}(\Dmc)$ the model $\Bmf_{d}$ and obtain
$\Bmf$ by adding $(d,d')$ to $R^{\Bmf}$ if $R(d,d')\in \Dmc$. Then
$$
f= \bigcup_{d\in \text{dom}(\Dmc)}f_{d}
$$
is a functional $\ALCI(\Sigma)$-bisimulation between $\Bmf$ and $\Amf$.

``4 $\Rightarrow$ 3''. Take a forest model $\Amf$ such that
Condition 4 holds. We claim that Condition~3 holds for $\Amf$ as well. 
For a proof by contradiction let $f$ be a functional $\ALCI(\Sigma)$-bisimulation witnessing $\Bmf,a^{\Bmf}\sim_{\ALCI,\Sigma}^{f} \Amf,b^{\Amf}$ for 
some model $\Bmf$ of $\Kmc$.
The restriction $h$ of $f$ of $\Dmc$ is the $\Sigma$-homomorphism 
needed to refute Condition 4.
 
``3 $\Rightarrow$ 2''. Take a model $\Amf$ such that
Condition 3 holds. 
Define $\Amf'$ by expanding $\Amf$ as follows. Take for any $d\in \text{dom}(\Amf)$ a fresh concept name $A_{d}$ and set
$A_{d}^{\Amf'} =\{d\}$. 
Clearly Condition~2 holds for $\Amf'$ and $\Sigma'=\Sigma \cup \{A_{d}\mid d\in \text{dom}(\Amf)\}$. 

``2 $\Rightarrow$ 1''. Straightforward.
\end{proof}

\subsection{Additional Definitions for 2ATAs}

We make precise the semantics of 2ATAs. Let
$\Amc=(Q,\Theta,q_0,\delta,\Omega)$ be a 2ATA and $(T,L)$ a
$\Theta$-labeled tree. A {\em run for \Amc on $(T,L)$} is a $T\times
Q$-labeled tree $(T_r,r)$ such that:
\begin{itemize}

  \item $\varepsilon\in T_r$ and $r(\varepsilon)=(\varepsilon,q_0)$;

  \item For all $y\in T_r$ with $r(y)=(x,q)$ and $\delta(q,L(x))=\vp$,
    there is an assignment $v$ of truth values to the transitions in $\vp$
    such that $v$ satisfies $\vp$ and:
    \begin{itemize}

      \item if $v(p)=1$, then $r(y')=(x,p)$ for some successor
	$y'$ of $y$ in $T_r$;

      \item if $v(\langle - \rangle p)=1$, then $x \neq \varepsilon$
        and there is a successor $y'$ of $y$ in $T_r$ with
        $r(y')=(x\cdot -1,p)$;

      \item if $v([-] p)=1$, then $x=\varepsilon$ or there is a
        successor $y'$ of $y$ in $T_r$ such that 
	$r(y')=(x\cdot -1,p)$;
        
      \item if $v(\Diamond p)=1$, then there is a successor $x'$ of
        $x$ in $T$ and a successor $y'$ of $y$ in $T_r$ such that
        $r(y')=(x',p)$;

      \item if $v(\Box p)=1$, then for every successor $x'$ of
        $x$ in $T$, there is a successor $y'$ of $y$ in $T_r$ such that
        $r(y')=(x',p)$.

    \end{itemize}

\end{itemize}
Let $\gamma=i_0i_1\cdots$ be an infinite path in $T_r$ and denote, for
all $j\geq 0$, with $q_j$ the state such that $r(i_0\cdots
i_j)=(x,q_j)$. The path $\gamma$ is {\em accepting} if the largest
number $m$ such that $\Omega(q_j)=m$ for infinitely many $j$ is even.
A run $(T_r,r)$ is accepting, if all infinite paths in $T_r$ are
accepting. Finally, a tree is accepted if there is some accepting run
for it.

\end{document}